\newcommand{\cH}{\mathcal{H}}
\newcommand{\cT}{\mathcal{T}}
\pgfplotsset{compat=newest}
\tikzset{
  causalvar/.style      = {draw, circle, node distance = 2cm}
}
\newcommand{\indep}{\perp \!\!\! \perp}
\DeclareMathOperator{\E}{\mathbb{E}}
\newcommand{\R}{\mathbb{R}}
\newcommand{\Prb}{\mathbb{P}}
\pgfplotsset{compat=newest}
\tikzset{
  causalvar/.style      = {draw, circle, node distance = 2cm}
}
\newtheorem{lemma}{Lemma}[section]
\newtheorem{definition}{Definition}[section]
\newtheorem{thm}{Theorem}
\newtheorem{cor}[thm]{Corollary}
\newtheorem{rem}[thm]{Remark}
\newtheorem{assum}[thm]{Assumption}
\newenvironment{customthm}[1]
  {\innercustomthm}
  {\endinnercustomthm}
\newenvironment{customlemma}[1]
  {\innercustomlemma}
  {\endinnercustomlemma}
\newenvironment{customcor}[1]
  {\innercustomcor}
  {\endinnercustomcor}
\newcommand{\ice}{\delta}		
\newcommand{\dce}{\theta}		
\newcommand{\mr}{\eta}		
\newcommand{\tce}{\tau}		
\newcommand{\cmo}{\mu_Y} 
\newcommand{\ccmo}{\omega_Y} 
\newcommand{\esf}{\psi} 
\newcommand{\mdensity}{f_{M\mid T,X}} 
\newcommand{\txdensity}{f_{T\mid X}}
\newcommand{\txmdensity}{f_{T\mid X,M}}
\newcommand*\rel@kern[1]{\kern#1\dimexpr\macc@kerna}
\newcommand*\widebar[1]{%
  \begingroup
  \def\mathaccent##1##2{%
    \rel@kern{0.8}%
    \overline{\rel@kern{-0.8}\macc@nucleus\rel@kern{0.2}}%
    \rel@kern{-0.2}%
  }%
  \macc@depth\@ne
  \let\math@bgroup\@empty \let\math@egroup\macc@set@skewchar
  \mathsurround\z@ \frozen@everymath{\mathgroup\macc@group\relax}%
  \macc@set@skewchar\relax
  \let\mathaccentV\macc@nested@a
  \macc@nested@a\relax111{#1}%
  \endgroup
}
\begin{document}

%

%

\twocolumn[

\aistatstitle{Double Debiased Machine Learning for Mediation Analysis with Continuous Treatments}

\aistatsauthor{Houssam Zenati \And Judith Ab\'ecassis \And Julie Josse \And Bertrand Thirion}

\aistatsaddress{Inria MIND\footnotemark[1] \And  Inria SODA\footnotemark[2] \And Inria PreMedical\footnotemark[3] \And Inria MIND\footnotemark[1]}
]
\footnotetext[1]{MIND, Inria-Saclay, Palaiseau, Universit\'e Paris Saclay, CEA Saclay, Paris, France}
\footnotetext[2]{SODA, Inria-Saclay, Palaiseau, Universit\'e Paris Saclay, Paris, France}
\footnotetext[3]{PreMedical, Inria-Sophia-Antipolis, Montpellier, France}

\begin{abstract}
    Uncovering causal mediation effects is of significant value to practitioners seeking to isolate the direct treatment effect from the potential mediated effect. We propose a double machine learning (DML) algorithm for mediation analysis that supports continuous treatments. To estimate the target mediated response curve, our method uses a kernel-based doubly robust moment function for which we prove asymptotic Neyman orthogonality. This allows us to obtain asymptotic normality with nonparametric convergence rate while allowing for nonparametric or parametric estimation of the nuisance parameters. We then derive an optimal bandwidth strategy along with a procedure for estimating asymptotic confidence intervals. Finally, to illustrate the benefits of our method, we provide a numerical evaluation of our approach on a simulation along with an application to real-world medical data to analyze the effect of glycemic control on cognitive functions. 
\end{abstract}

\section{INTRODUCTION}

\label{sec:introduction}

In causal inference \citep{imbens_rubin_2015}, traditional treatment evaluations often focus on assessing the total causal effect of a treatment on an outcome variable, such as the average treatment effect (ATE). However, in many evaluation settings, understanding the causal mechanisms driving these total effects is equally important. 
For example, in studying the relationship between socio-demographic factors and cognitive function, one might hypothesize that the effect of these factors is mediated by the brain structure captured in brain imaging. 
This mediation insight comes from studies such as \citep{dadi2021} and \citep{cox2019structural}, that have demonstrated that combining IDPs with socio-demographic variables often provides little to no improvement in predictive performance over socio-demographic data alone. 
However, the causal impact of socio-demographic variables on cognitive outcomes is expected to occur through their influence on brain structure captured in brain imaging.
Understanding these mediation pathways can help researchers and health policymakers design more targeted interventions that focus e.g. on altering socio-demographic determinants before they alter irreversibly the brain tissues.

Mediation analysis aims to assess the specific contributions of the indirect and direct effects of a given treatment. The indirect effect quantifies the proportion of the total effect that is due to a mediator, while the direct effect measures the effect of the treatment without additional mediators \citep{greenland1992, pearl2001}. Several studies have used flexible (often nonparametric) models \citep{petersen2006, flores2009, imai2010, hong2010ratio, imai2013identification}: %
this includes regression-based \citep
{Robins1986, zhengVanderLaan2012} using the Pearl formula \citep{pearl2001} or models that use inverse probability weighting (IPW) \citep{thompson1952, huber2014}, hence requiring the computation of propensity scores, either conditional on both the mediator and covariates or on the covariates alone. 
However, these methods become inconsistent if the models for the conditional mean outcome or the treatment and mediator densities are misspecified. To improve on this practice, more sophisticated causal mediation analysis schemes have been proposed, based on efficient score functions \citep{tchetgen2012}, with double machine learning (DML) as outlined in \citet{Chernozhukov2018}, which relies on Neyman orthogonality and sample splitting. Typically, when combining efficient score-based estimation with sample splitting, $n^{-1/2}$ convergence for treatment effect estimation can be achieved, even with slower plug-in estimate convergence rates of $n^{-1/4}$.


While most nonparametric mediation studies have focused on binary treatments, many real-world problems involve continuous treatments, such as the dose of a medical treatment \citep{hirano2005, imai2004causal, bia2012assessing, kluve2012evaluating, galvao2015, lee2018partial, cox2019structural}. This paper addresses nonparametric estimation of natural direct and indirect effects \citep{pearl2001} for continuous treatments. We propose a multiply robust estimator based on weighting by the inverse of conditional treatment densities \citep{hirano2005, imai2004causal} and the estimation of conditional mean outcomes \citep{pearl2001, singh2023sequential}. Our method is asymptotically normal and converges at the rate of one-dimensional nonparametric regression under specific regularity conditions. 

Our first contribution is to show the asymptotic \citet{neyman1959} orthogonality of a kernelized moment function, similar to the efficient score functions of
\citet{tchetgen2012, farbmacher2022}. This property
makes the estimation of direct and indirect effects rather insensitive to (local) estimation errors in the plug-in estimates. Second, we propose a DML estimator with a Bayes transformation that avoids the conditional mediator density estimation and directly estimates a nested conditional mean outcome \citep{farbmacher2022}. This
appears particularly useful when the mediator is a vector of variables and/or continuous. Third, we analyze this estimator and prove that it is asymptotically normal with a nonparametric convergence rate under mild regularity conditions and mild convergence requirements for nuisance parameters. This allows us to derive an asymptotic mean squared error optimal bandwidth and an asymptotic confidence interval. Moreover, our analysis proves the multiple robustness of our estimator with regards to i) the consistency of the nuisance parameters and ii) achieving a faster non-parametric convergence rate with slow nuisance convergence rates. Eventually, we provide numerical experiments illustrating the practical performance of DML compared to other mediation analysis estimators and an application of continuous treatment mediation for cognitive function on real-world data.


\subsection{Related Work}

In the causal mediation literature, most studies assume sequential conditional independence, where (i) potential outcomes and treatment assignment are independent given covariates, and (ii) potential outcomes and mediator are independent given treatment and covariates \citep{imai2010, pearl2001, flores2009}.  Under such assumptions, a common approach uses linear equations \citep{judd1981process, baron1986, vanderweele_explanation_2015} modeling of the mediator as a function of treatment and covariates, and the outcome as a function of the mediator, treatment, and covariates. However, such methods rely on the correct specification of linear relationships, which may not be appropriate for binary mediators or outcomes. In contrast, G-computation \citep{Robins1986, zhengVanderLaan2012} using Pearl formula \citep{pearl2001} allows for non-linear models. Moreover, inverse probability weighting (IPW) \citep{thompson1952, huber2014} methods (in the binary treatment case) leverage propensity scores (either conditional on the mediator and covariates or on the covariates alone) to estimate mediation mechanisms. However, if either the conditional means or densities are misspecified, those methods become inconsistent. 

Subsequently, multiply robust estimators—akin to doubly robust estimation in standard treatment effect models—have been proposed \citep{tchetgen2012, zhengVanderLaan2012} and shown to be robust to model misspecification \citep{huber2015}. Unlike \citet{tchetgen2012} which requires the mediator density estimation, \citet{farbmacher2022} analyzed a doubly robust estimation method that relies on estimates of the conditional mean outcome, cross conditional mean outcome, and conditional treatment probabilities. Analogously to doubly robust estimation of average treatment effects \citep{robins1994,robins1995}, all multiply robust resulting estimators are semiparametrically efficient if all models underlying the plug-in estimates are correctly specified and remain consistent even if one model is misspecified. Our work builds on the results for semiparametric models in \citet{ichimura2022,chernozhukov2022} and follows the line of \citet{farbmacher2022}: a growing literature employs the double machine learning (DML) approach for non-regular nonparametric infinite-dimensional objects for estimating causal quantities \citep{Chernozhukov2022b,semenova2020,fan2021,zimmert2019,bonvini2022}.

However, the majority of research in mediation analysis has focused on binary 
treatments. Continuous treatments specifically pose a challenge in IPW-based methods, where propensity scores need to be adapted \citep{imbens2000, hirano2005, flores2007, galvao2015, zenati}, even in the mediation analysis task \citet{hsu2020} which uses a kernel smoothing strategy. \citet{sani2024} proposes a multiply robust estimator for the mediation task with continuous treatment, but requires i) the mediator conditional density estimation and ii) consistency of the nuisance parameters. Instead, our work does not require the cumbersome mediator conditional density estimation, as in \citet{farbmacher2022} and we prove the asymptotic Neyman orthogonality \citep{neyman1959} of our kernel moment function. Moreover, our estimator is multiply robust in the sense that our inference theory is valid even when one nuisance function is inconsistent, as in \citet{kennedy2017} and \citet{takatsu2024}. This is a stronger result than the usual multiply robustness on the consistency of the estimator used in \citet{farbmacher2022} and \citet{sani2024} which analysis does not consider inconsistent nuisance parameters. Eventually, unlike \citet{sani2024} our method discussed and posed the problem of bandwidth selection and uncertainty quantification through confidence intervals.


\section{BACKGROUND}

\label{sec:background}

In this section we introduce the necessary definitions and notations for the general mediation analysis task. 

\subsection{Natural direct and indirect effects}
\label{sec:definitions_ce}

We label the random variable for the treatment as $T$, the outcome as $Y$, the mediator(s) as $M$, and the covariate(s) as $X$ for each individual. We will also write $Z=(Y, T, X, M)$. The interrelations among these variables are depicted in Figure \ref{fig:mediation}. 
 
\begin{figure}[h]
    \centering
    \includegraphics[width=0.4\textwidth]{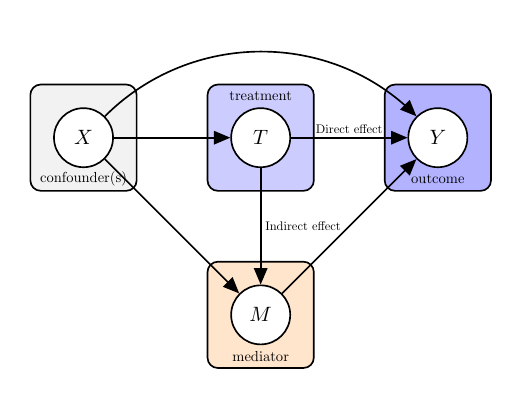}
    \caption{Causal graph for mediation analysis. }
    \label{fig:mediation}
\end{figure}

Given treatment values $t, t^\prime \in \mathcal{T}$ we denote $M(t)$ and $Y(t, M(t^\prime))$ the potential mediator state and potential outcome, expanding upon the potential outcomes framework \citep{imbens_rubin_2015}. We make the assumption that the observed outcome and mediator correspond to the potential outcome and mediator under the actually assigned treatment.

We then define the total average treatment effect for a general treatment space $\mathcal{T}$. For the next definitions, let $t, t^\prime \in \mathcal{T}$.

\begin{definition}[Total average treatment effect]
\begin{equation}
    \tce(t,t') = \E [Y\left(t', M(t')\right)] - \E [Y\left(t,M(t)\right)].
\label{eq:tce}
\end{equation}
\end{definition}

Next, we define the natural direct effect, denoted by $\dce(t, t')$,
as the difference in expected potential outcomes when switching the treatment while keeping the potential mediator fixed, which blocks the causal mechanism via $M$:

\begin{definition}[Direct effect]
\begin{equation}
    \dce(t, t') = \E [Y\left(t',M(t)\right)] - \E [Y\left(t,M(t)\right)].
\label{eq:dce}
\end{equation}
\end{definition}

Eventually, the natural indirect effect, $\ice(t,t')$, equals the difference in expected potential outcomes when switching the potential mediator values while keeping the treatment fixed to block the direct effect.

\begin{definition}[Indirect effect]
\begin{equation}
    \ice(t, t') = \E [Y\left(t',M(t')\right)] - \E [Y\left(t',M(t)\right)].
\label{eq:ice}
\end{equation}
\end{definition}

Let $\mathcal{Z}= \mathcal{Y} \times \mathcal{T} \times \mathcal{X} \times \mathcal{M}$ represent the support of $Z = (Y, T, X, M)$, with an associated cumulative distribution function (CDF) $F_{Z}(Z)$.

\subsection{Mediated response}

When it comes to isolating direct and effects, it is more common to introduce the notion of mediated response for $t, t^\prime \in \mathcal{T}$.
\begin{definition}[Mediated response]
\begin{equation}
    \mr_{t,t^\prime} = \E [Y\left(t,M(t')\right)]
\label{eq:mediated_response}
\end{equation}
\end{definition}


It is then possible to introduce simple and convenient expressions that establish relations between previously defined quantities:
\begin{align}
    \tce(t,t') &= \dce(t,t') + \ice(t,t') \\
    \dce(t,t') &= \mr(t',t) - \mr(t,t) \\
    \ice(t, t') &= \mr(t',t') - \mr(t',t)
\end{align}

Therefore, one  can rely the counterfactual potential outcome $\E [Y\left(t,M(t')\right)]$, that is the mediated response, to derive expressions of the mediated causal quantities of interest. To this end, we present in the next subsection classical assumptions necessary for its identification.

\subsection{Identification assumptions}
\label{sec:assumptions}

Since the mediated response curve is defined in terms of potential outcomes that are not directly observed, we must consider assumptions under which it can be expressed in terms of observed data. A complete treatment of identification in the presence of continuous random variables has been given by \citet{gill2001}; we refer the reader there for details. The assumptions most commonly employed \citep{imai2010, pearl2001, flores2009} for identification are as follows.

\begin{assum}[Conditional independance of the treatment]
For all $t',t \in \cT, m \in \mathcal{M}$
\begin{equation*}
\lbrace Y(t',m), M(t) \rbrace \indep T|X. 
\end{equation*} 
\label{assum:indep_treatment}
\end{assum}
\vspace{-7mm}
Assumption \ref{assum:indep_treatment} requires the absence of unobserved confounders influencing the treatment or the outcome, or affecting their relationship.

\begin{assum}[Conditional independance of the mediator]
For all $t',t \in \cT, m \in \mathcal{M}, x \in \mathcal{X}$
$$Y(t',m) \indep M|T=t, X=x.$$
\label{assum:indep_mediator}
\end{assum}
\vspace{-7mm}
Assumption \ref{assum:indep_mediator} requires the absence of unobserved confounders influencing both the mediator and the outcome.

\begin{assum}[Positivity assumption] 
For some positive constant $c$, the essential infimums $\inf _{t \in \mathcal{T}} \operatorname{ess}_{\inf _{x \in \mathcal{X}}} \txdensity(t \mid x) \geq c$ and $\inf _{t \in \mathcal{T}} \operatorname{ess}_{\inf _{x \in \mathcal{X}, m \in \mathcal{M}}} \txmdensity(t \mid x, m) \geq c$.
\label{assum:overlap}
\end{assum}

Note that the mild condition on the essential infimum is slightly stronger than the common positivity assumption; as in \citet{Colangelo2020} this will prove useful for the continuous task with kernel localization in Section \ref{sec:continuous_mediation}. Assumption \ref{assum:overlap} ensures that the treatment is not deterministic in the covariate $X$ and in the pair of covariate and mediator $X, M$. Given the identifications conditions, we introduce the mediation formula at the core of the mediation estimators we propose.

\subsection{Mediation formula}

 Let us note $\txdensity(t\mid X)$ (respectively $\txmdensity(t\mid X, M)$) the conditional density of $T$ given $X$ (respectively given $X$ and $M$) and $\mdensity(M \mid T, X)$ the conditional density of $M$ given $T$ and $X$ (if $M$ is discrete, this is a conditional probability). We now define fundamental quantities, starting with the conditional mean being defined as the expectation of the outcome given determined values of the treatment $t \in \mathcal{T}$, mediator $m\in \mathcal{M}$ and context $x\in \mathcal{X}$. 

\begin{definition}[Conditional mean outcome]
\begin{equation}
    \cmo(t, m, x)= \E [ Y | T=t, M=m, X=x]
\label{eq:conditional_mean_outcome}
\end{equation}
\end{definition}

Based on the previous expression, we can define the cross conditional mean outcome as follows:

\begin{definition}[Cross conditional mean outcome]

\begin{align}
\ccmo(t,t', x) &= \int_{\mathcal{M}} \cmo(t,m,x) \mdensity(m|t',x)dm \nonumber \\
&= \E \left[ \cmo(t, m, X) | T=t', X =x \right]  \label{eq:cross_conditional_mean_outcome}   
\end{align}

\end{definition}

Using the previous definitions, Pearl's mediation formula \citep{pearl2001} is as follows:

\begin{lemma}[Pearl's mediation formula]
\begin{equation}
\begin{aligned}
  \mr_{t,t^\prime} &= \int_{\mathcal{M}, \mathcal{X}} \cmo(t,m,x) \mdensity(m|t',x) f_X(x)dmdx \\
 &= \int_{\mathcal{X}} \ccmo(t, t^\prime,x) f_X(x) dx   
\end{aligned}  
\label{eq:pearl_formula}
\end{equation}
\end{lemma}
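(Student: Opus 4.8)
The plan is to derive the first equality of \eqref{eq:pearl_formula} by the standard two-stage ignorability argument and then obtain the second equality by a routine application of Fubini together with the definition of $\ccmo$. Throughout I would use the consistency convention (the observed mediator and outcome equal their potential counterparts under the realized treatment) together with Assumptions \ref{assum:indep_treatment}--\ref{assum:overlap}. First I would peel off the covariates by the tower property, writing $\mr_{t,t^\prime} = \E\big[\E[Y(t,M(t')) \mid X]\big]$, and then, for a fixed value $X=x$, expand the inner expectation over the law of the counterfactual mediator $M(t')$:
\begin{equation*}
\E[Y(t,M(t')) \mid X=x] = \int_{\cM} \E[Y(t,m)\mid M(t')=m, X=x]\, f_{M(t')\mid X}(m\mid x)\, dm .
\end{equation*}
The proof then reduces to identifying the two factors of this integrand from observed quantities.

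For the mediator law I would invoke the treatment-ignorability part of Assumption \ref{assum:indep_treatment}, namely $M(t') \indep T \mid X$, to write $f_{M(t')\mid X}(m\mid x) = f_{M(t')\mid T,X}(m\mid t',x)$, and then consistency (on $\{T=t'\}$ one has $M=M(t')$) to conclude $f_{M(t')\mid X}(m\mid x) = \mdensity(m\mid t',x)$; here Assumption \ref{assum:overlap} guarantees that this conditional density is well defined. For the conditional mean I would first use the joint independence $\{Y(t,m),M(t')\}\indep T\mid X$ from Assumption \ref{assum:indep_treatment} to insert the event $\{T=t\}$ into the conditioning without altering the expectation, and then combine the mediator-ignorability Assumption \ref{assum:indep_mediator} with consistency to replace the counterfactual outcome and the cross-world conditioning event by their observed analogues, yielding $\E[Y(t,m)\mid M(t')=m,X=x] = \E[Y\mid T=t,M=m,X=x] = \cmo(t,m,x)$.

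Substituting both identified factors and integrating over $x$ against $f_X$ gives the first line of \eqref{eq:pearl_formula}. The second line is then immediate: by Fubini (justified by Assumption \ref{assum:overlap} and integrability of $\cmo$) I would exchange the order of integration, recognize the inner integral $\int_{\cM}\cmo(t,m,x)\mdensity(m\mid t',x)\,dm$ as $\ccmo(t,t',x)$ via its definition in \eqref{eq:cross_conditional_mean_outcome}, and obtain $\int_{\cX}\ccmo(t,t',x) f_X(x)\,dx$.

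I expect the genuinely delicate step to be the identification of $\E[Y(t,m)\mid M(t')=m,X=x]$ as $\cmo(t,m,x)$. This is exactly the point where the cross-world counterfactual structure enters, pitting the potential outcome $Y(t,\cdot)$ against the potential mediator $M(t')$ at a possibly different treatment level, and it is where the two ignorability assumptions together with consistency must be combined with care; by contrast, the covariate peeling, the mediator-law identification, and the final Fubini step are routine.
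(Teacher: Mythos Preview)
The paper does not supply its own proof of this lemma; it is stated as Pearl's mediation formula with attribution to \citet{pearl2001}, and the reader is referred to \citet{gill2001} for the continuous-variable details. Your argument is the standard sequential-ignorability derivation and is correct in outline.

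There is one wrinkle in the execution of the step you rightly flag as delicate. After you insert $T=t$ you are looking at $\E[Y(t,m)\mid M(t')=m,\,T=t,\,X=x]$, but on $\{T=t\}$ consistency identifies $M$ with $M(t)$, not with $M(t')$, and Assumption~\ref{assum:indep_mediator} is stated for the observed $M$; so as written you cannot ``replace the cross-world conditioning event by its observed analogue'' at that point. The clean order is to insert $T=t'$ first (so consistency turns $M(t')$ into $M$), use Assumption~\ref{assum:indep_mediator} to drop $M$, then use Assumption~\ref{assum:indep_treatment} again to swap $T=t'$ for $T=t$, reinsert $M=m$ via Assumption~\ref{assum:indep_mediator}, and finally apply outcome consistency to get $\cmo(t,m,x)$. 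This is the same set of assumptions you invoke, just in a different order, so it is an execution detail rather than a genuine gap; your identification of this as the crux is spot on.
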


where $f_X$ is the density over the covariate space $\mathcal{X}$. This formula is the basis for several estimates that are defined in the next section. 



\section{CONTINUOUS TREATMENT MEDIATION}

\label{sec:continuous_mediation}

In this section, we introduce the tools used to analyze mediation with continuous treatments. 
For the sake of clarity, we consider a unidimensional treatment space~$\mathcal{T}$, the generalization to multidimensional~$\mathcal{T}$ is given in Appendix \ref{appendix:moment} and \ref{appendix:asymptotic_analysis}.
Let us then consider a kernel $k$ that satisfies Assumption \ref{assum:kernel}, which is standard in nonparametric kernel estimation and holds for commonly used kernel functions, such as Epanechnikov and Gaussian.

\begin{assum}
The second-order symmetric kernel function $k()$ is bounded differentiable, i.e. $\int_{-\infty}^{\infty} k(u) d u=1, \int_{-\infty}^{\infty} u k(u) d u=0$, and $0<\kappa<\infty$. For some finite positive constants $C, \bar{U}$, and for some $\nu>1,|d k(u) / d u| \leq C|u|^{-\nu}$ for $|u|>\bar{U}$.
\label{assum:kernel}
\end{assum} 

Let us consider a bandwidth $h$. We now define the kernel smoothing operator as 

\begin{equation}
K_h\left(T-t\right) = k\left(\left(T-t\right) / h\right) / h  
\end{equation}



With Assumptions \ref{assum:indep_treatment}, \ref{assum:indep_mediator}, \ref{assum:overlap} and the same reasoning for the binary treatment, it is possible to show the identification \citep{hsu2020} (see Appendix \ref{appendix:moment}):
\begin{align}
\mr_{t,t^\prime} &=\lim _{h \rightarrow 0} \mathbb{E}\left[\frac{K_h(T-t) Y}{\txdensity(t \mid X)} \frac{\mdensity(M \mid t', X)}{\mdensity(M \mid t, X)}\right] \nonumber \\ 
&=\lim _{h \rightarrow 0} \mathbb{E}\left[\frac{K_h(T-t) Y}{\txdensity(t^\prime \mid X)} \frac{\txmdensity(t' \mid X, M)}{\txmdensity(t \mid X, M)}\right]\label{eq:ipw_generalized} 
\end{align}
for $t, t^\prime \in \mathcal{T}$ and where the last equality follows a Bayes transformation on the mediator density as in \citet{farbmacher2022}.

The expression in Eq.~\eqref{eq:pearl_formula} motivates the class of G-computation  (or regression-based) estimators, while Eq.~\eqref{eq:ipw_generalized} motivates the class of inverse probability weighting estimators; see Appendix \ref{appendix:moment} for further discussion. The DML \citep{Chernozhukov2018} estimator leverages a moment function that combines both classes of methods, as in influence functions for semiparametric estimators \citep{newey1994, ichimura2022}. Similar to \citet{tchetgen2012} and \citet{ farbmacher2022} for the mediation analysis task, we introduce a kernel smoothed moment function with provable asymptotic convergence.

\begin{lemma}
Let $t, t^\prime \in \mathcal{T}$ and let the moment function
\begin{equation}
\begin{aligned}
\esf_{t,t^\prime}^h= & \frac{ K_h(T-t) \txmdensity(t' \mid X, M)}{\txdensity(t' \mid X) \txmdensity(t \mid X, M)} [Y-\cmo(t, M, X)] \\
& +\frac{K_h(T-t')}{\txdensity(t' \mid X)} [\cmo(t, M, X) -\ccmo(t,t', X)] \\
& +\ccmo(t,t', X) - \mr_{t,t^\prime}.
\end{aligned}    
\label{eq:moment_function}
\end{equation}
then, the moment function is asymptotically Neyman-Orthogonal.
\label{lemma:neyman_orthogonal}
\end{lemma}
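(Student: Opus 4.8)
The plan is to check the defining property of asymptotic Neyman orthogonality directly, by differentiating the population moment with respect to each nuisance along an arbitrary direction and showing the derivative vanishes as $h\to0$. Collect the nuisances perturbed in $\esf_{t,t'}^h$ into $\xi=(\cmo,\ccmo,\txdensity(t'\mid\cdot),\txmdensity(t'\mid\cdot,\cdot),\txmdensity(t\mid\cdot,\cdot))$, with true value $\xi_0$, and write $\xi_r=\xi_0+r\Delta$ for a direction $\Delta=(\Delta_\mu,\Delta_\omega,\Delta_e,\Delta_{p'},\Delta_p)$. Because the target $\mr_{t,t'}$ enters $\esf_{t,t'}^h$ only through the additive term $-\mr_{t,t'}$, its Jacobian equals $-1$ and the moment is nondegenerate, so it remains to show that $\frac{d}{dr}\E[\esf_{t,t'}^h(Z;\mr_{t,t'},\xi_r)]\big|_{r=0}\to0$ as $h\to0$ in each of the five directions.

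First I would treat the two directions that vanish by cancellation. Differentiating along $\Delta_\mu$, the conditional mean $\cmo(t,M,X)$ appears with coefficient $-K_h(T-t)\txmdensity(t'\mid X,M)/[\txdensity(t'\mid X)\txmdensity(t\mid X,M)]+K_h(T-t')/\txdensity(t'\mid X)$; taking $\E[\,\cdot\mid X,M]$ and using the kernel consistency $\E[K_h(T-s)\mid X,M]\to\txmdensity(s\mid X,M)$ sends the first piece to $-\txmdensity(t'\mid X,M)\Delta_\mu/\txdensity(t'\mid X)$ and the second to $+\txmdensity(t'\mid X,M)\Delta_\mu/\txdensity(t'\mid X)$ pointwise in $(X,M)$, which cancel. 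Likewise, along $\Delta_\omega$ the cross term $\ccmo(t,t',X)$ appears with coefficient $1-K_h(T-t')/\txdensity(t'\mid X)$, whose expectation tends to $\E_X[\Delta_\omega]-\E_X[\Delta_\omega]=0$ by $\E[K_h(T-t')\mid X]\to\txdensity(t'\mid X)$. These are the residual-orthogonality cancellations, asymptotic here because the kernel reproduces the conditioning density only in the limit.

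The three density directions I would handle through the nested zero-mean structure of the residuals under kernel localization. For $\Delta_{p'}$, $\Delta_p$, and the first-line contribution of $\Delta_e$, every surviving term carries the outcome residual $Y-\cmo(t,M,X)$ together with a factor $K_h(T-t)$; taking $\E[\,\cdot\mid T,M,X]$ replaces the residual by $\cmo(T,M,X)-\cmo(t,M,X)$, and integrating this against $K_h(T-t)$ over $T$ evaluates the integrand at $T=t$, where the difference is zero. The second-line contribution of $\Delta_e$ instead carries $\cmo(t,M,X)-\ccmo(t,t',X)$ with a factor $K_h(T-t')$; conditioning on $T,X$ turns this into $\ccmo(t,T,X)-\ccmo(t,t',X)$ by the definition of $\ccmo$ in Eq.~\eqref{eq:cross_conditional_mean_outcome}, and the kernel localizes $T$ at $t'$, where the difference again vanishes. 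Hence each of the five nuisance derivatives is $o(1)$ as $h\to0$.

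The main obstacle, and the only step requiring genuine work, is making the ``localizes at $T=s$'' arguments quantitative rather than heuristic: I must bound each smoothing bias $\int K_h(\tau-s)g(\tau)\,d\tau-g(s)$, where $g$ is a product of a bounded nuisance, the treatment density, and a residual-difference vanishing at $\tau=s$. I would Taylor-expand $g$ around $\tau=s$ and invoke Assumption~\ref{assum:kernel}: the unit zeroth moment, the vanishing first moment from symmetry, and the tail control on $dk/du$ reduce the bias to the second-order remainder, giving an $O(h^2)$ bound under standard smoothness of the nuisances in $t$. Positivity (Assumption~\ref{assum:overlap}) keeps the $1/\txdensity$ and $1/\txmdensity$ factors uniformly bounded, which both justifies interchanging expectation and differentiation and makes the $O(h^2)$ bounds uniform; combining the two cancellation directions with the three localization directions then yields asymptotic Neyman orthogonality.
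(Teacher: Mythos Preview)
Your proof is correct and takes a genuinely different route from the paper. The paper's argument in Appendix~\ref{appendix:moment} proceeds by \emph{constructing} the moment function: it computes the Gateaux derivative of the target functional $\mr_{t,t'}(F)$ along a path $F^{\epsilon h}=(1-\epsilon)F^0+\epsilon F_Z^h$ in the space of data distributions (following \citet{ichimura2022}), recovers the form of $\esf_{t,t'}^h$ as the resulting influence function, and then verifies that $\E\big[\frac{d}{d\epsilon}\mr_{t,t'}(F^{\epsilon h})\big|_{\epsilon=0}\big]=O(h^2)$ via the same kernel--Taylor machinery you invoke at the end; orthogonality is then asserted from this. You instead take the stated moment function as given and verify the defining property directly, differentiating $\E[\esf_{t,t'}^h]$ with respect to each of the five nuisances $(\cmo,\ccmo,\txdensity,\txmdensity(t'\mid\cdot),\txmdensity(t\mid\cdot))$ and showing each Gateaux derivative is $O(h^2)$ through either a cancellation (for $\cmo,\ccmo$) or a residual-localization argument (for the densities). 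Your approach is more transparent as a \emph{verification} of Neyman orthogonality in the DML sense of \citet{Chernozhukov2018}, and your five-way decomposition anticipates exactly the MR-1 through MR-5 remainder terms that the paper later controls in the proof of Theorem~\ref{thm:asymptotic_normality}; the paper's approach has the complementary virtue of explaining \emph{why} this particular moment function---rather than some other multiply robust combination---is the right one, since it arises as the efficient influence function of $\mr_{t,t'}$.
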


A proof of Lemma \ref{lemma:neyman_orthogonal} is given in Appendix \ref{appendix:moment}. Note that in the moment function we propose, we do not make use of the mediator density function and instead propose to estimate the nuisance parameters $\widehat \txdensity, \widehat \cmo, \widehat \txdensity$ and $\widehat \ccmo$. Indeed, as explained in \citet{farbmacher2022}, the original multiply robust estimator for mediation \citep{tchetgen2012} proposed to only estimate the nuisances $\widehat \txdensity, \widehat \cmo$, and $\widehat \mdensity$ to build their efficient influence function but would estimate the cross conditional expectation $\ccmo$ as $\int_{M} \widehat \cmo(t, M, X) \mdensity(M \mid t', X) dM$ as done in \citet{sani2024}. However, we argue that such a formulation hurts the estimation of $\ccmo$ especially when the mediator is high dimensional. 


Subsequently, aside from the the Neyman Orthogonality condition on the estimator which motivated the use of an efficient score function, the DML approach \citep{Chernozhukov2018} also uses sample splitting to estimate separately the nuisance parameters and the causal quantity of interest. Let now $\{Y_i, T_i, M_i, X_i\}^n_{i=1}$ be i.i.d realizations of the random variables $Y, T, M, X$. An $L$-fold cross-fitting splits the sample into subsamples $I_{\ell}$ for $\ell \in \{1, \dots, L \}$. 
For a given split $I_\ell$ and $i \in I_\ell$, the nuisance function estimators for $\widehat{\txdensity}_{i \ell}(t^\prime) = \txdensity(t^\prime \mid X_i), \widehat{\txmdensity}_{i \ell}(t) = \txmdensity(t \mid X_i, M_i)$, $\widehat{\cmo}_{i \ell} = \cmo(t, M_i, X_i)$ and $\widehat{\ccmo}_{i \ell}=\ccmo(t, t^\prime, X_i)$ 
use observations in the other $(L-1)$ subsamples that do not contain observation $i$. The DML estimator then averages over the whole subsamples as stated in Algorithm \ref{alg:DML}.

\begin{algorithm}[t]
\caption{Continous Treatment Double Machine Learning}\label{alg:DML}
\KwData{Observational $\{Y_i, T_i, M_i, X_i\}^n_{i=1}$}
- Split the data in $L$ subsamples. For each subsample $\ell$, let $n_\ell$ denote its size, $Z^\ell$ the set of observations in the sample and $Z^{\ell, C}$ the complement set of all observations not in $Z^\ell$.

- For each $\ell$, use $Z^{\ell, C}$ to estimate $\txdensity$, $\txmdensity$, $\cmo$ and $\ccmo$.
- Predict the nuisance parameters for each observation $i$ in $\mathcal{D}_\ell$, i.e. $\widehat{\txdensity}_{i \ell}, \widehat{\txmdensity}_{i \ell}$, $\widehat{\cmo}_{i \ell}$ and $\widehat{\ccmo}_{i \ell}$.

- For each $\ell$, obtain an estimate of the kernel score function for each observation $i$ in $Z^{\ell}$, denoted by $\hat{\esf}_{t,t', i \ell}^h:$

\begin{equation}
\begin{aligned}
\hat{\esf}_{t,t', i \ell}^h= 
& \frac{K_h(T_i-t)\widehat \txmdensity_{i \ell}(t^\prime)}{\widehat \txmdensity_{i \ell }(t)\widehat \txdensity_{i \ell}} \cdot\left[Y-\widehat{\cmo}_{i \ell}\right] \\
& +\frac{K_h(T_i-t')}{\widehat \txdensity_{i \ell}} \cdot\left[\widehat{\cmo}_{i \ell}-\widehat{\ccmo}_{i \ell}\right] \\
& +\widehat{\ccmo}_{i \ell}
\end{aligned}
\label{eq:esf_hat}
\end{equation}

- Average the estimated scores $\hat{\esf}_{t,t', i \ell}^h$ over all observations across all $L$ subsamples to obtain an estimate of $\mr_{t,t'}$ in the total sample, denoted by 
\begin{equation}
\hat{\mr}_{t,t'}= \frac{1}{n} \sum_{\ell=1}^L \sum_{i=1}^{n_\ell} \hat{\esf}_{t,t', i \ell}^h
\label{eq:mr_hat}
\end{equation}
\KwResult{Estimated $\hat{\eta}_{t,t'}$}
\end{algorithm}

Note that a key distinction of our method compared to \citet{sani2024} is that they integrate the conditional mean outcome against the mediator density to estimate the cross conditional mean outcome. While this works only for discrete mediators, we avoid  integration over mediator densities, as in \citet{farbmacher2022} and directly estimate the cross-conditional mean outcome with an implicit integration as in \citet{singh2023sequential} which recently addressed the challenge related to continuous cross-counterfactual treatments $(t, t')$. This makes our approach suitable for continuous or high-dimensional mediators.

\section{ASYMPTOTIC ANALYSIS}

\label{sec:asymptotic}

In this section we analyze the asymptotic behavior of the DML estimator we proposed.

We define $\left\|\cdot \right\|_{{t M X}}^2$ (respectively $\left\|\cdot \right\|_{{X}}^2$ and $\left\|\cdot \right\|_{{t X}}^2$) the partial $L_2(t M X)$ norm (respectively $L_2(X)$ and $L_2(t X)$) for any $t \in \mathcal{T}$ as
\begin{equation*}
\left\|\cdot \right\|_{{t M X}}^2    = \int_{\mathcal{M}, \mathcal{X}}\left(\cdot\right)^2 f_{T M X}(t, m, x) d m d x    
\end{equation*}

where the integral is taken with regards in $\mathcal{M} \times \mathcal{X}$ (respectively $\mathcal{X}$) with the joint density $f_{T M X}$ (respectively $f_{X}$ and $f_{T X}$). We will also write $\delta_{\cmo}^\ell=\widehat{\cmo}_{\ell}-\widebar{\cmo}$, $\delta_{\ccmo}^\ell=\widehat{\ccmo}_{\ell}-\widebar{\ccmo}$, $\delta_{T\mid X}^\ell=\widehat{\txdensity}_{\ell}-\widebar{\txdensity}$ and $\delta_{T\mid X, M}^\ell=\widehat{\txmdensity}_{\ell}-\widebar{\txmdensity}$. We can now state conditions on the convergence of the nuisance parameters.



\begin{assum}[Nuisance convergence]
There exist functions $\widebar{\cmo}$, $\widebar{\ccmo}$, $\widebar{\txmdensity}$ and $\widebar{\txdensity}$ that are three-times differentiable with $\widebar{\txdensity}(t \mid x), \widebar{\txmdensity}(t \mid x, m) \geq c$ for some positive constant $c$, and satisfy the following: For each $\ell=1, \ldots, L$, 
\begin{enumerate}
    \item $\left\| \delta_{\cmo}^\ell \right\|_{{t MX}}=o_p(1)$, $\left\| \delta_{\ccmo}^\ell\right\|_{{X}}=o_p(1)$, $\left\|\delta_{T\mid X}^\ell\right\|_{{t X}}=o_p(1)$ and $\left\|\delta_{T\mid X, M}^\ell\right\|_{{t MX}}=o_p(1)$


    \item Either $\widebar{\cmo}=\cmo$ and $\widebar{\ccmo}=\ccmo$, or $\widebar{\cmo}=\cmo$ and $\widebar{\txdensity}=\txdensity$, or $\widebar{\ccmo}=\ccmo$ and $\widebar{\txmdensity}=\txmdensity$, or $\widebar{\txdensity}=\txdensity$ and $\widebar{\txmdensity}=\txmdensity$.
\end{enumerate}
\label{assum:convergence}
\end{assum}


    
    

\begin{assum}[Nuisance rates]
There exist functions $\widebar{\cmo}$, $\widebar{\ccmo}$, $\widebar{\txmdensity}$ and $\widebar{\txdensity}$ that satisfy the following: For each $\ell=1, \ldots, L$, 
\begin{enumerate}

    \item $\sqrt{n h}\left\|\delta_{T\mid X}^\ell\right\|_{{t X}}\left\|\delta_{\cmo}^\ell\right\|_{{t MX}} =o_p(1)$
    
    \item $\sqrt{n h} \left\|\delta_{T\mid X, M}^\ell\right\|_{{t M X}}\left\|\delta_{\cmo}^\ell\right\|_{{t MX}} =o_p(1)$
    
    \item $\sqrt{n h}\left\|\delta_{T\mid X}^\ell\right\|_{{t X}}\left\|\delta_{\ccmo}^\ell\right\|_{{X}} =o_p(1)$
\end{enumerate}
\label{assum:nuisance}
\end{assum}

The nuisance function estimators $\widehat{\cmo}_{\ell}$, $\widehat{\ccmo}_{\ell}$ $\widehat{\txdensity}_{\ell}$ and $\widehat{\txmdensity}_{\ell}$ converge to some fixed functions $\widebar{\cmo}$, $\widebar{\ccmo}$, $\widebar{\txdensity}$ and $\widebar{\txmdensity}$ respectively in the sense of Assumption \ref{assum:convergence}.1. Assumption \ref{assum:convergence}.2 allows some of the nuisance functions to be misspecified. Assumption \ref{assum:nuisance} and \ref{assum:convergence}.2 imply that if one nuisance function is misspecified, then the other needs to be estimated consistently at a convergence rate faster than $\sqrt{n h}$. This is the cost of the multiply robust inference. For example, if $\widebar{\cmo} \neq \cmo$, then $\left\|\widehat{\cmo}_{\ell}-\cmo\right\|_{{t MX}}=O_p(1)$. So Assumption \ref{assum:nuisance} requires $\sqrt{n h}\left\|\widehat{\txdensity}_{\ell}-\widebar \txdensity\right\|_{{t X}}~=~o_p(1)$. 

Moreover, we will require additional regularity assumptions on the density functions $f_Z$ and the nuisance parameters. 

\begin{assum}[Regularity]
We list the following:
\begin{enumerate}
    \item $f_Z(y, t, m, x)$, $\cmo$ and $\ccmo$ are three-times differentiable with respect to $t$ with all three derivatives being bounded uniformly over $\left(y, t, x, m \right) \in \mathcal{Z}$
    \item $\operatorname{var}(Y \mid T=t, X=x, M=m)$ and its derivatives with respect to $t$ are bounded uniformly over $\left(t^{\prime}, x \right) \in \mathcal{T} \times \mathcal{X}$.
\end{enumerate}
\label{assum:regularity}
\end{assum}

We can now state the main result of the asymptotic normality of the DML estimator in mediation.

\begin{thm}[Asymptotic normality] Let Assumptions~\ref{assum:indep_treatment}-\ref{assum:regularity} hold. Let $h \rightarrow 0, n h~\rightarrow \infty$ , and $n h^{5}~\rightarrow~C~\in[0, \infty)$. Then for any $t \in \mathcal{T}$,

\begin{align*}
&\sqrt{n h}\left(\hat{\mr}_{t,t'}-\mr_{t,t^\prime}\right)  \\
=\sqrt{\frac{h}{n}} \sum_{i=1}^n & \left[
\frac{  K_h(T_i-t) \cdot \widebar \txmdensity_{t^\prime, i}}{\widebar\txdensity_i \cdot \widebar \txmdensity_{t, i}} \cdot[Y-\widebar\cmo_i] \right.\\
 & + \frac{K_h(T_i-t')}{\widebar\txdensity_i} \cdot[\widebar\cmo_i -\widebar\ccmo_i] \\
& +  \left.  \widebar \ccmo_i
\right]+o_p(1) .
\end{align*}


where we abbreviated $\widebar\txdensity_i = \widebar\txdensity(t' \mid X_i)$, $\widebar\cmo_i = \widebar\cmo(t, M_i, X_i)$, $\widebar\ccmo_i = \widebar\ccmo(t,t', X_i)$, $\widebar \txmdensity_{t^\prime, i}= \widebar \txmdensity(t' \mid X_i, M_i)$ and $\widebar \txmdensity_{t, i}= \widebar \txmdensity(t \mid X_i, M_i)$.

Let $\mathbb{E}\left[|Y-\widebar{\cmo}(T, M, X)|^3 \mid T=t, M, X\right]$, $\mathbb{E}\left[(\widebar{\cmo}(t, M, X) - \widebar{\ccmo}(t, t', X))^3 \mid T=t', X\right]$ and their derivatives with respect to $t$ be bounded uniformly over $\left(t, m, x\right) \in \mathcal{T} \times \mathcal{M} \times \mathcal{X}$. Let $\int_{-\infty}^{\infty} k(u)^3 d u<\infty$. Then
\begin{equation*}
\sqrt{n h}\left(\hat{\mr}_{t,t'}-\mr_{t,t^\prime}-h^2 \mathrm{~B}_{t,t'}\right) \xrightarrow{d} \mathcal{N}\left(0, \mathrm{~V}_{t,t'}\right)    
\end{equation*}
where 
\begin{align*}
    \mathrm{~V}_{t,t'} &\equiv \mathbb{E}\left[\bar{V}_Y \frac{\txmdensity(t \mid X, M)}{\widebar{\txdensity}(t'\mid X)^2} \frac{\widebar{\txmdensity}(t'\mid X, M)^2}{\widebar{\txmdensity}(t\mid X, M)^2} \right. \\
    &\quad + \frac{\txdensity(t'\mid X)}{\widebar{\txdensity}(t'\mid X)^2} \bar{V}_{\cmo}  \Bigg] R_k \\
    \mathrm{B}_{t, t^\prime} &\equiv \mathbb{E}\left[\frac{\widebar{\txmdensity}(t'\mid X, M)}{\widebar{\txmdensity}(t\mid X, M)}\Bigg( \partial_{t} \cmo(t, M, X) \cdot \right. \\
    & \frac{\partial_{t}\txmdensity(t \mid X, M)}{\widebar{\txdensity}(t'\mid X)}
    +\frac{ \partial_{t}^2 \cmo(t, M, X)  }{\widebar{\txdensity}(t'\mid X) } \cdot   \\
    & \left( \txmdensity(t \mid X, M) + \cmo(t, M, X) - \widebar{\cmo}(t, M, X) \right) \Bigg)\\
    & + \Bigg( (\widebar{\cmo}(t, M, X)-\widebar{\ccmo}(t, t' x)) \cdot \\
    &  \left.  \frac{\partial_{t}^2 \txmdensity(t'\mid X, M)}{\widebar{\txdensity(t'\mid X)}}\Bigg) /2 )\right] \kappa.
\end{align*}

where $\bar{V}_Y = \mathbb{E}\left[|Y-\widebar{\cmo}(T, M, X)|^2 \mid T=t, M, X\right]$, $\bar{V}_{\cmo} = \mathbb{E}\left[(\widebar{\cmo}(t, M, X) - \widebar{\ccmo}(t, t', X))^2 \mid T=t', X\right]$, and where we denote the roughness of $k$ as $R_k \equiv \int_{-\infty}^{\infty} k(u)^2 d u$ and $\kappa \equiv \int_{-\infty}^{\infty} u^2 k(u) d u$.
\label{thm:asymptotic_normality}
\end{thm}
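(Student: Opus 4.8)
The plan is to establish the two displayed claims in turn: first the \emph{linearization} that replaces the estimated nuisances by their limits $\widebar\cmo,\widebar\ccmo,\widebar\txdensity,\widebar\txmdensity$ at the cost of an $o_p(1)$ term, and then a triangular-array central limit theorem for the resulting oracle average. Writing $\bar\esf_i^h$ for the summand of \eqref{eq:esf_hat} evaluated at the limiting nuisances, I would decompose $\sqrt{nh}(\hat\mr_{t,t'}-\mr_{t,t'}) = \sqrt{h/n}\sum_i(\bar\esf_i^h-\mr_{t,t'}) + \sqrt{h/n}\sum_{\ell,i}(\hat\esf^h_{t,t',i\ell}-\bar\esf_i^h)$, so that the entire difficulty is to show the second (remainder) sum is $o_p(1)$ while identifying the limit law of the first. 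The cross-fitting of Algorithm~\ref{alg:DML} is what makes the remainder tractable: for $i\in I_\ell$ the estimated nuisances are functions of the held-out fold $Z^{\ell,C}$ and hence independent of $Z_i$, so I can condition on $Z^{\ell,C}$ and treat $\hat\esf-\bar\esf$ as a fixed-function perturbation.

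For the remainder I would split it into its conditional mean $A_n=\tfrac1n\sum_{\ell,i}\E[\hat\esf^h_{t,t',i\ell}-\bar\esf_i^h\mid Z^{\ell,C}]$ and the centred part $B_n$. Expanding $A_n$ to second order in the nuisance directions, the asymptotic Neyman orthogonality of Lemma~\ref{lemma:neyman_orthogonal} eliminates every first-order (single-nuisance) term up to the smoothing defect, leaving only the bilinear cross-terms; by the multiply robust structure of \eqref{eq:moment_function} these are exactly the three products $\|\delta_{T\mid X}^\ell\|_{tX}\|\delta_{\cmo}^\ell\|_{tMX}$, $\|\delta_{T\mid X,M}^\ell\|_{tMX}\|\delta_{\cmo}^\ell\|_{tMX}$ and $\|\delta_{T\mid X}^\ell\|_{tX}\|\delta_{\ccmo}^\ell\|_{X}$, whence $\sqrt{nh}A_n=o_p(1)$ follows termwise from Assumption~\ref{assum:nuisance}. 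Here Assumption~\ref{assum:convergence}.2 is used to guarantee that, under each admissible misspecification pattern, $\bar\esf^h$ still has limiting mean $\mr_{t,t'}$, i.e. the identity \eqref{eq:ipw_generalized} survives plugging in $\widebar\cdot$. For $B_n$, which is conditionally mean zero, I would bound its conditional variance: kernel localization inflates the variance of each summand by $1/h$, so $\Var(\sqrt{nh}B_n\mid Z^{\ell,C})$ is of order $\sum_\ell(\|\delta_{\cmo}^\ell\|_{tMX}^2+\|\delta_{\ccmo}^\ell\|_{X}^2+\cdots)=o_p(1)$ by the mere consistency in Assumption~\ref{assum:convergence}.1, and conditional Chebyshev finishes it. This establishes the linearization.

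It then remains to apply a Lyapunov CLT to the i.i.d.\ triangular array $\sqrt{h}(\bar\esf_i^h-\mr_{t,t'})$. A Taylor expansion of $\E[\bar\esf^h]$ in the kernel argument, using $\int k=1$, $\int uk=0$, $\int u^2k=\kappa$ and the three-times differentiability of Assumption~\ref{assum:regularity}, yields $\E[\bar\esf^h]=\mr_{t,t'}+h^2 B_{t,t'}+o(h^2)$, which is precisely where the stated bias $B_{t,t'}$ (a combination of $\partial_t^2\cmo$, $\partial_t^2\txmdensity$ and $\partial_t\cmo\,\partial_t\txmdensity$ terms) arises. For the variance I would compute $\Var(\sqrt h\,\bar\esf^h)\to V_{t,t'}$: the $1/h$ inflation produces the roughness factor $R_k=\int k^2$, the $K_h(T-t)$ and $K_h(T-t')$ blocks contribute the two pieces $\bar V_Y$ and $\bar V_{\cmo}$ localized at $t$ and $t'$ respectively, their cross-product vanishes since $K_h(T-t)K_h(T-t')\to0$ for $t\ne t'$, and the unlocalized $\widebar\ccmo_i-\mr_{t,t'}$ term is $O(1)$ and hence negligible in the $\sqrt h$ scaling. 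The Lyapunov ratio is of order $(nh)^{-1/2}\int k^3$, which vanishes under $nh\to\infty$ together with $\int k^3<\infty$ and the bounded third conditional moments, while centering by $h^2 B_{t,t'}$ with $\sqrt{nh}\,h^2=\sqrt{nh^5}\to\sqrt C$ keeps the bias finite and removable. Combining the CLT for the oracle term with the linearization gives the asymptotic normality.

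The step I expect to be the main obstacle is the control of the conditional bias $A_n$: because the kernel smoothing makes the moment only \emph{asymptotically} (as $h\to0$) Neyman-orthogonal, the first-order directional derivatives do not vanish exactly but carry an $O(h^2)$ defect, and one must verify that this defect is genuinely of smoothing order so that after multiplication by $\sqrt{nh}$ it is governed by $\sqrt{nh^5}\to\sqrt C$ rather than spoiling the rate, while simultaneously checking that the surviving bilinear terms match the three norm-products of Assumption~\ref{assum:nuisance} under every misspecification pattern permitted by Assumption~\ref{assum:convergence}.2. Keeping the kernel-induced $1/h$ factors and the nuisance $L_2$ rates aligned throughout this expansion is the delicate bookkeeping at the heart of the argument.
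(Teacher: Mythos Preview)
Your overall architecture---linearization via cross-fitting (split the remainder into a conditional mean $A_n$ and a centred part $B_n$, control $B_n$ by conditional variance and mere consistency, control $A_n$ by rate products), then Lyapunov CLT with a second-order Taylor expansion for the bias and a $R_k$-scaled variance---is exactly what the paper does. The paper's organisation differs only cosmetically: it writes the remainder as a sum of centred pieces (its R1-1 through R1-7, your $B_n$), pure bilinear products of estimation errors (R2-1 through R2-6), and the residual conditional expectations of the \emph{linear}-in-$\delta$ pieces (MR-1 through MR-5).

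The one place where your sketch is imprecise, and where the paper's MR terms earn their keep, is the sentence ``the asymptotic Neyman orthogonality of Lemma~\ref{lemma:neyman_orthogonal} eliminates every first-order (single-nuisance) term.'' Lemma~\ref{lemma:neyman_orthogonal} is orthogonality at the \emph{true} nuisances, but your expansion of $A_n$ is around the limits $\widebar\cmo,\widebar\ccmo,\widebar\txdensity,\widebar\txmdensity$, which Assumption~\ref{assum:convergence}.2 allows to be misspecified. At a misspecified $\widebar\cdot$ the Gateaux derivative of $\psi$ is \emph{not} zero, so the linear-in-$\delta$ pieces of $A_n$ do not vanish by orthogonality. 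What the paper shows instead is that, after the kernel localisation (Lemma~\ref{lemma:kernel_smoothing}), each such linear term factors as $\mathbb E\bigl[(\widehat g_\ell-\widebar g)\cdot c(\text{true }f-\widebar f)\bigr]$: the coefficient is a \emph{misspecification residual}, which is identically zero under one of the patterns in Assumption~\ref{assum:convergence}.2, and when it is not, the complementary estimation error is forced to the fast $o_p((nh)^{-1/2})$ rate. This case analysis is distinct from the treatment of the genuinely bilinear-in-$(\widehat\cdot-\widebar\cdot)$ terms, which are bounded by Cauchy--Schwarz and the three products of Assumption~\ref{assum:nuisance}. You flag exactly this difficulty in your last paragraph, so you may already have the right picture; just be aware that ``orthogonality kills the first order'' is not the mechanism here and the argument is the multiply-robust factorisation, term by term.
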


A proof of Theorem \ref{thm:asymptotic_normality} is given in Appendix \ref{appendix:asymptotic_analysis}. Note that the last part in the influence function in \eqref{eq:moment_function} $n^{-1} \sum_{i=1}^n \bar{\ccmo}\left(t, t^\prime, X_i\right)-\mr_{t,t^\prime}=O_p(1 / \sqrt{n})=$ $o_p\left(1 / \sqrt{n h}\right)$ and hence does not contribute to the first-order asymptotic variance $\mathrm{V}_{t,t'}$. 

\begin{rem}
 In Theorem \ref{thm:asymptotic_normality}, the scaling is by the square root of the local sample size $\sqrt{n h}$ rather than by the usual parametric rate $\sqrt{n}$. This slower rate of convergence is a typical bias-variance trade-off as in \citet{Colangelo2020} and \citet{kennedy2017}. Similar to them, the estimator is consistent but not quite centered at $\eta_{t, t'}$; there is a bias term of order $O\left(h^2\right)$. Moreover, balancing the depending  terms on $h$ requires \( h \sim n^{-1/5} \), giving a scaling \( 1 / \sqrt{n h} \sim h^2 \sim n^{-2/5} \) as in \citet{kennedy2017}.
\end{rem}

\paragraph{Comparison with \citet{sani2024}} After we prove our moment function is asymptotically Neyman Orthogonal, our analysis establishes stronger multiply robustness than \citet{sani2024} by requiring milder assumptions on nuisance parameter consistency. Unlike \citet{sani2024}, which assumes all nuisance parameters converge to their true values (see their Assumption 4), we allow for potential inconsistency and prove consistency of the DML estimator under weaker conditions (see our Assumption \ref{assum:convergence}, \ref{assum:nuisance}), similar to \citet{Colangelo2020} and \citet{takatsu2024}.



Our DML estimator uses a kernel for which the bandwidth $h$ needs to be selected. Theorem \ref{thm:asymptotic_normality} is key for inference, such as constructing confidence intervals and estimating the bandwidth $h$ that minimizes the asymptotic mean squared error in a plug-in fashion \citep{byeons1990, Sheather1991}. We propose an estimator for the leading bias $\mathrm{B}_{t, t^\prime}$, inspired by the idea in \citet{powell1996, Colangelo2020}. Let the notation $\hat{\mr}_{t, t^\prime}=\hat{\mr}_{t, t^\prime, b_n}$ be explicit on the bandwidth $b_n$ and

\begin{equation}
\hat{\mathrm{B}}_{t, t^\prime} \equiv \frac{\hat{\mr}_{t, t^\prime, b_n}-\hat{\mr}_{t, t^\prime, \epsilon b_n}}{b_n^2\left(1-\epsilon^2\right)}    
\label{eq:estimator_bias}
\end{equation}

with a pre-specified fixed scaling parameter $\epsilon \in(0,1)$. Corollary \ref{thm:amse_bandwidth} below shows the consistency of $\hat{\mathrm{B}}_{t, t^\prime}$ under Assumption \ref{assum:split_consistency}.

We can estimate the asymptotic variance $\mathrm{V}_{t, t^\prime}$ by the sample variance of the estimated influence function, 

\begin{equation}
\hat{\mathrm{V}}_{t, t^\prime} = b_n n^{-1} \sum_{\ell=1}^L \sum_{i \in I_t} (\hat{\esf}_{t,t', i \ell}^{b_n})^2,
\label{eq:estimator_variance}
\end{equation}

where $\hat{\esf}_{t,t', i \ell}^{b_n}$ is defined in Eq. \eqref{eq:esf_hat}. Then we propose a data-driven bandwidth 

\begin{equation}
\hat{h}_{t, t^\prime} = \left(\hat{\mathrm{V}}_{t, t^\prime} /\left(4 \hat{\mathrm{B}}_{t, t^\prime}^2\right)\right)^{1 /5} n^{-1 /5}    
\end{equation}

to consistently estimate the optimal bandwidth that minimizes the asymptotic mean squared error (AMSE) given in Theorem \ref{thm:asymptotic_normality}. We first state additional assumptions for it.

\begin{assum}
For each $\ell=1, \ldots, L$ and for any $t \in \mathcal{T}$,
\begin{enumerate}
    \item $\left\|{\delta_{\cmo}^\ell}{\delta_{T\mid X}^\ell}\right\|_{{\mathrm{t} X}}=o_p(1)$, $\left\|{\delta_{\cmo}^\ell}{\delta_{\ccmo}^\ell}\right\|_{{\mathrm{t} X}}=o_p(1)$, $\left\|{\delta_{\ccmo}^\ell}{\delta_{T\mid X, M}^\ell}\right\|_{{\mathrm{t} MX}}=o_p(1)$, $\left\|{\delta_{T\mid X}^\ell}^2{\delta_{T\mid X, M}^\ell}\right\|_{{\mathrm{t} MX}}=o_p(1)$,
    \item $\left\|{\delta_{\cmo}^\ell}^2 {\delta_{T\mid X}^\ell}^2\right\|_{{\mathrm{t} X}}=O_p(1)$, $\left\|{\delta_{\cmo}^\ell}^2{\delta_{\ccmo}^\ell}^2\right\|_{{\mathrm{t} X}}=O_p(1)$, $\left\|{\delta_{\ccmo}^\ell}^2{\delta_{T\mid X, M}^\ell}^2\right\|_{{\mathrm{t} MX}}=O_p(1)$, $\left\|{\delta_{T\mid X}^\ell}^2{\delta_{T\mid X, M}^\ell}^2\right\|_{{\mathrm{t} MX}}=O_p(1)$
    \item $\left\|{\delta_{\cmo}^\ell}^2\right\|_{{\mathrm{t} X}}=O_p(1)$, $\left\|{\delta_{\ccmo}^\ell}^2\right\|_{{\mathrm{t} X}}=O_p(1)$, $\left\|{\delta_{T\mid X, M}^\ell}^2\right\|_{{\mathrm{t} MX}}=O_p(1)$, $\left\|{\delta_{T\mid X}^\ell}^2\right\|_{{\mathrm{t} MX}}=O_p(1)$
    \item $\mathrm{E}\left[(Y- \widebar{\cmo}(t, M, X))^4 \mid T=t, M, X\right]$, $\mathrm{E}\left[(\widebar{\cmo}(t, M, X)- \widebar{\ccmo}(t, t^\prime, X))^4 \mid T=t^\prime, M, X\right]$ and their derivatives with respect to $t$ are bounded uniformly over $\left(t^{\prime}, x\right) \in$ $\mathcal{T} \times \mathcal{X}$ and $\int_{-\infty}^{\infty} k(u)^4 d u<\infty$
    \item The bandwidth $b_n \rightarrow 0$ and $n b_n^{5} \rightarrow \infty . \int_{-\infty}^{\infty} k(u) k(u / \epsilon) d u<\infty$ for $\epsilon \in(0,1)$.
\end{enumerate}
\label{assum:split_consistency}
\end{assum}

We are now in position to state another result related to our DML approach.

\begin{cor}[AMSE optimal bandwidth for $\widehat{\mr}_{t, t^\prime}$]
Let the conditions in Theorem \ref{thm:asymptotic_normality} hold. For $t, t^\prime \in \mathcal{T}$, if $\mathrm{B}_{t, t^\prime}$ is non-zero, then the bandwidth that minimizes the asymptotic mean squared error is $h_{t, t^\prime}^*=\left(\mathrm{~V}_{t, t^\prime} /\left(4 \mathrm{~B}_{t, t^\prime}^2\right)\right)^{1 /5} n^{-1 /5}$. Further let Assumption \ref{assum:split_consistency} hold. Then $\hat{\mathrm{V}}_{t, t^\prime}-\mathrm{V}_{t, t^\prime}=o_p(1)$, $\hat{\mathrm{B}}_{t, t^\prime}-\mathrm{B}_{t, t^\prime}=o_p(1)$, and $\hat{h}_{t, t^\prime} / h_{t, t^\prime}^*-1=o_p(1)$.  
\label{thm:amse_bandwidth}
\end{cor}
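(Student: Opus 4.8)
\noindent The plan is to proceed in four steps: derive $h^*_{t,t^\prime}$ by minimizing the asymptotic mean squared error (AMSE) read off from Theorem~\ref{thm:asymptotic_normality}; then prove the two plug-in statements $\hat{\mathrm{V}}_{t,t^\prime}-\mathrm{V}_{t,t^\prime}=o_p(1)$ and $\hat{\mathrm{B}}_{t,t^\prime}-\mathrm{B}_{t,t^\prime}=o_p(1)$; and finally combine them through the continuous mapping theorem. For the first step, Theorem~\ref{thm:asymptotic_normality} exhibits an asymptotic squared bias $(h^2\mathrm{B}_{t,t^\prime})^2$ and an asymptotic variance $\mathrm{V}_{t,t^\prime}/(nh)$, so that
\begin{equation*}
\mathrm{AMSE}(h)=h^4\,\mathrm{B}_{t,t^\prime}^2+\frac{\mathrm{V}_{t,t^\prime}}{nh}.
\end{equation*}
Setting $\tfrac{d}{dh}\mathrm{AMSE}(h)=4h^3\mathrm{B}_{t,t^\prime}^2-\mathrm{V}_{t,t^\prime}/(nh^2)$ to zero gives $h^5=\mathrm{V}_{t,t^\prime}/(4n\mathrm{B}_{t,t^\prime}^2)$, i.e.\ the stated $h^*_{t,t^\prime}$, and the positive second derivative confirms a minimizer whenever $\mathrm{B}_{t,t^\prime}\neq 0$.

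\medskip
\noindent For the variance estimator, I would first replace, inside $\hat{\mathrm{V}}_{t,t^\prime}=b_n n^{-1}\sum_{\ell}\sum_{i\in I_\ell}(\hat{\esf}^{b_n}_{t,t^\prime,i\ell})^2$, each estimated nuisance by its population limit $\widebar{\cmo},\widebar{\ccmo},\widebar{\txdensity},\widebar{\txmdensity}$. Expanding the square produces cross-products of the deviations $\delta_{\cmo}^\ell,\delta_{\ccmo}^\ell,\delta_{T\mid X}^\ell,\delta_{T\mid X,M}^\ell$; the $o_p(1)$ product bounds of Assumption~\ref{assum:split_consistency}.1, the $O_p(1)$ squared-product bounds of Assumption~\ref{assum:split_consistency}.2--3, and the density positivity of Assumption~\ref{assum:overlap} make each such term negligible after multiplication by $b_n$. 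It then remains to show that the oracle quantity, obtained by evaluating the score at the limiting nuisances, converges to $\mathrm{V}_{t,t^\prime}$. This is a triangular-array law of large numbers: its mean converges to $\mathrm{V}_{t,t^\prime}$ via the substitution $u=(T-t)/b_n$, a Taylor expansion, and dominated convergence (using $\int k(u)^2\,du=R_k$ and the bounded conditional second moments $\bar V_Y,\bar V_{\cmo}$), while its variance is $O\big((nb_n)^{-1}\big)$ and hence vanishes, the fourth-moment and $\int k(u)^4\,du<\infty$ conditions of Assumption~\ref{assum:split_consistency}.4 guaranteeing the relevant fourth moments are finite.

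\medskip
\noindent For the bias estimator, the key input is the expansion established inside the proof of Theorem~\ref{thm:asymptotic_normality}, which for any bandwidth sequence with $h\to 0,\ nh\to\infty$ yields
\begin{equation*}
\hat{\mr}_{t,t^\prime,h}-\mr_{t,t^\prime}=h^2\,\mathrm{B}_{t,t^\prime}+o_p(h^2)+O_p\big((nh)^{-1/2}\big),
\end{equation*}
where the $o_p(h^2)$ term collects the higher-order smoothing bias (negligible by the three-times differentiability of Assumption~\ref{assum:regularity} and the second-order symmetric kernel) and the nuisance-product remainders (controlled by Assumption~\ref{assum:split_consistency}.1). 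Evaluating at $h=b_n$ and $h=\epsilon b_n$, the constant $\mr_{t,t^\prime}$ cancels in the numerator of $\hat{\mathrm{B}}_{t,t^\prime}$ and the leading terms give $b_n^2(1-\epsilon^2)\mathrm{B}_{t,t^\prime}$. Dividing by $b_n^2(1-\epsilon^2)$ leaves $\mathrm{B}_{t,t^\prime}$ plus a remainder whose stochastic part is of order $(nb_n)^{-1/2}/b_n^2=(nb_n^5)^{-1/2}=o_p(1)$, exactly because $nb_n^5\to\infty$ in Assumption~\ref{assum:split_consistency}.5, and whose deterministic part is $o_p(1)$; hence $\hat{\mathrm{B}}_{t,t^\prime}-\mathrm{B}_{t,t^\prime}=o_p(1)$.

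\medskip
\noindent Finally, writing
\begin{equation*}
\frac{\hat{h}_{t,t^\prime}}{h^*_{t,t^\prime}}=\left(\frac{\hat{\mathrm{V}}_{t,t^\prime}}{\mathrm{V}_{t,t^\prime}}\cdot\frac{\mathrm{B}_{t,t^\prime}^2}{\hat{\mathrm{B}}_{t,t^\prime}^2}\right)^{1/5},
\end{equation*}
the continuous mapping theorem applies, since $\mathrm{B}_{t,t^\prime}\neq 0$ keeps $\hat{\mathrm{B}}_{t,t^\prime}^2$ bounded away from zero with probability tending to one, yielding $\hat{h}_{t,t^\prime}/h^*_{t,t^\prime}-1=o_p(1)$. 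I expect the consistency of $\hat{\mathrm{V}}_{t,t^\prime}$ to be the main obstacle: unlike the bias estimator, where the finite difference cancels the dominant terms cleanly, the variance estimator demands a genuine triangular-array law of large numbers with $n$-dependent, kernel-localized summands, together with the delicate bookkeeping of replacing all four estimated nuisances by their limits while tracking which product-rate condition of Assumption~\ref{assum:split_consistency} controls each cross term.
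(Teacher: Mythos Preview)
Your proposal is correct and follows essentially the same route as the paper for the optimal bandwidth derivation, the variance consistency, and the final continuous-mapping step. For $\hat{\mathrm{V}}_{t,t'}$, the paper uses exactly the three-part decomposition you outline: an oracle law of large numbers for $h\,n_\ell^{-1}\sum_{i\in I_\ell}\psi_i^2$ (with the fourth moment $\mathbb{E}[\psi_i^4]=O(h^{-3})$ controlling the variance), the conditional expectation of $\hat{\psi}_{i\ell}^2-\psi_i^2$ handled via the residual decompositions and Assumption~\ref{assum:split_consistency}.1, and a centered stochastic-equicontinuity term controlled by Assumption~\ref{assum:split_consistency}.2--3.

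For $\hat{\mathrm{B}}_{t,t'}$ your argument departs from the paper's. You invoke the expansion $\hat{\eta}_{t,t',h}-\eta_{t,t'}=h^2\mathrm{B}_{t,t'}+o_p(h^2)+O_p((nh)^{-1/2})$, difference the two bandwidth evaluations, and divide; the stochastic remainder is then $O_p((nb_n^5)^{-1/2})=o_p(1)$ by Assumption~\ref{assum:split_consistency}.5. The paper instead separates mean and variance: it reads $\mathbb{E}[\hat{\mathrm{B}}_{t,t'}]=\mathrm{B}_{t,t'}+o(1)$ from the bias calculation in Theorem~\ref{thm:asymptotic_normality}, then computes $\operatorname{cov}(\hat{\eta}_{t,t',b_n},\hat{\eta}_{t,t',\epsilon b_n})$ explicitly by expanding the kernel product $K_{b_n}(T-t)K_{\epsilon b_n}(T-t)$ (this is where the condition $\int k(u)k(u/\epsilon)\,du<\infty$ in Assumption~\ref{assum:split_consistency}.5 is used), obtaining $\operatorname{var}(\hat{\mathrm{B}}_{t,t'})=O((nb_n^{5})^{-1})$, and concludes by Markov's inequality. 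Your route is shorter and never uses that integrability condition, but be aware that Theorem~\ref{thm:asymptotic_normality} is stated under $nh^5\to C<\infty$, whereas $b_n$ satisfies $nb_n^5\to\infty$; you are really appealing to the \emph{unscaled} remainder bounds inside its proof, and you should check that each remainder term there is either $o_p(h^2)$ or $o_p((nh)^{-1/2})$ before the $\sqrt{nh}$ scaling is applied, so that the expansion remains valid under the oversmoothing pilot bandwidth.
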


Assumption \ref{assum:split_consistency} are for the consistency of $\hat{\mathrm{V}}_{t, t^\prime}$. The first one is stricter than Assumption \ref{assum:convergence}, and the second is mild boundedness conditions that are implied when $\widehat{\cmo}_{\ell}$, $\widehat{\ccmo}_{\ell}$, $\widehat{\txdensity}_{\ell}$ and $\widehat{\txmdensity}_{\ell}$ are bounded uniformly, which is often verified in practice. We now state the last corollary of the asymptotic normality of the DML estimator.
\\

\begin{cor}[Asymptotic confidence interval for $\widehat{\mr}_{t, t^\prime}$]
Let the conditions in Corollary \ref{thm:amse_bandwidth} hold. For $t, t^\prime \in \mathcal{T}$, we can construct the usual $(1-\alpha) \times 100 \%$ point-wise confidence interval $\left[\hat{\mr}_{t, t^\prime} \pm \Phi^{-1}(1-\alpha / 2) \sqrt{\hat{\mathrm{V}}_{t, t^\prime} /\left(n h\right)}\right]$, where $\Phi$ is the CDF of $\mathcal{N}(0,1)$.
\label{cor:asymptotic_ci}
\end{cor}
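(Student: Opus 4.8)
The plan is to obtain the interval as a direct consequence of the asymptotic normality in Theorem~\ref{thm:asymptotic_normality} together with the variance consistency of Corollary~\ref{thm:amse_bandwidth}, spliced together by Slutsky's theorem. Concretely, the goal is to show that the studentized statistic $\sqrt{nh}\,(\hat{\mr}_{t,t'}-\mr_{t,t'})/\sqrt{\hat{\mathrm{V}}_{t,t'}}$ converges in distribution to a standard normal; once this pivotal limit is in hand, inverting the event $\{\,|\cdot| \le \Phi^{-1}(1-\alpha/2)\,\}$ into bounds on $\mr_{t,t'}$ produces exactly the stated interval, and its coverage tends to $1-\alpha$ by continuity of the normal CDF at the relevant quantiles.

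First I would take the conclusion of Theorem~\ref{thm:asymptotic_normality}, $\sqrt{nh}\,(\hat{\mr}_{t,t'}-\mr_{t,t'}-h^2\mathrm{B}_{t,t'}) \xrightarrow{d} \mathcal{N}(0,\mathrm{V}_{t,t'})$, and re-center it by writing $\sqrt{nh}\,(\hat{\mr}_{t,t'}-\mr_{t,t'}) = \sqrt{nh}\,(\hat{\mr}_{t,t'}-\mr_{t,t'}-h^2\mathrm{B}_{t,t'}) + \sqrt{nh^5}\,\mathrm{B}_{t,t'}$, so that under the bandwidth regime $nh^5 \to 0$ the second summand is $o_p(1)$ and drops out, yielding the bias-free limit $\sqrt{nh}\,(\hat{\mr}_{t,t'}-\mr_{t,t'}) \xrightarrow{d} \mathcal{N}(0,\mathrm{V}_{t,t'})$. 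Next, Corollary~\ref{thm:amse_bandwidth} supplies $\hat{\mathrm{V}}_{t,t'} \xrightarrow{p} \mathrm{V}_{t,t'}$ with $\mathrm{V}_{t,t'}>0$, so by the continuous mapping theorem $\sqrt{\mathrm{V}_{t,t'}/\hat{\mathrm{V}}_{t,t'}} \xrightarrow{p} 1$, and Slutsky's theorem gives $\sqrt{nh}\,(\hat{\mr}_{t,t'}-\mr_{t,t'})/\sqrt{\hat{\mathrm{V}}_{t,t'}} \xrightarrow{d} \mathcal{N}(0,1)$. Rearranging $|\sqrt{nh}\,(\hat{\mr}_{t,t'}-\mr_{t,t'})/\sqrt{\hat{\mathrm{V}}_{t,t'}}| \le \Phi^{-1}(1-\alpha/2)$ into the two-sided bound on $\mr_{t,t'}$ then delivers $[\hat{\mr}_{t,t'} \pm \Phi^{-1}(1-\alpha/2)\sqrt{\hat{\mathrm{V}}_{t,t'}/(nh)}]$, completing the argument.

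The main obstacle is the treatment of the leading bias $h^2\mathrm{B}_{t,t'}$, and it is substantive rather than cosmetic. At the AMSE-optimal bandwidth of Corollary~\ref{thm:amse_bandwidth}, $h^*_{t,t'} \sim n^{-1/5}$, one has $n(h^*_{t,t'})^5 \to \mathrm{V}_{t,t'}/(4\mathrm{B}_{t,t'}^2) > 0$, so the scaled bias $\sqrt{n(h^*_{t,t'})^5}\,\mathrm{B}_{t,t'}$ converges to the nonzero constant $\tfrac{1}{2}\sqrt{\mathrm{V}_{t,t'}}\,\mathrm{sign}(\mathrm{B}_{t,t'})$ and the studentized statistic is centered away from the origin, destroying nominal coverage. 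I would therefore make explicit that the ``usual'' interval is valid precisely in the undersmoothed regime $nh^5 \to 0$ (the case $C=0$ admitted by Theorem~\ref{thm:asymptotic_normality}), obtained by shrinking $h$ slightly below $h^*_{t,t'}$, which trades a little point-estimation efficiency for first-order bias negligibility. A secondary technical point is that in practice $h$ is replaced by the random plug-in $\hat{h}_{t,t'}$; here I would invoke $\hat{h}_{t,t'}/h^*_{t,t'}-1 = o_p(1)$ from Corollary~\ref{thm:amse_bandwidth} to argue that substituting the estimated bandwidth perturbs both the scaling $\sqrt{nh}$ and $\hat{\mathrm{V}}_{t,t'}$ only through factors that are $1+o_p(1)$, so the limiting pivotal distribution, and hence the asymptotic coverage, is unaffected.
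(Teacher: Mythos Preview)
Your proposal is correct and matches the paper's intended argument: the paper does not give a separate proof of this corollary, treating it as an immediate consequence of Theorem~\ref{thm:asymptotic_normality} combined with the consistency $\hat{\mathrm V}_{t,t'}-\mathrm V_{t,t'}=o_p(1)$ from Corollary~\ref{thm:amse_bandwidth} and Slutsky's theorem, exactly as you outline.

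Your discussion of the bias term goes beyond what the paper makes explicit and is accurate: the stated interval attains nominal coverage only in the undersmoothed regime $nh^5\to 0$ (the $C=0$ case of Theorem~\ref{thm:asymptotic_normality}), whereas at the AMSE-optimal bandwidth one would need to subtract an estimated bias $h^2\hat{\mathrm B}_{t,t'}$; the paper is informal on this point. Your remark about the random bandwidth is extra---the corollary as stated uses a deterministic $h$---but the argument via $\hat h_{t,t'}/h^*_{t,t'}-1=o_p(1)$ is the right way to extend it.
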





\section{NUMERICAL EVALUATION}

\label{sec:numerical}

This section provides a simulation study to investigate the finite sample behaviour of our methods. The code to reproduce our experiments can be found at \url{https://github.com/houssamzenati/double-debiased-machine-learning-mediation-continuous-treatments}.

In our experiments, for the kernel smoothing, we considered second order Gaussian kernels with kernel bandwidth either with the optimal AMSE bandwidth or a \citet{scott2015multivariate}-type rule of thumb (see Appendix \ref{appendix:hyperparameters}) verifying the regularity conditions in Theorem \ref{thm:asymptotic_normality}. Furthermore, for the estimation of the nuisances parameters we used nonparametric estimators for the conditional and cross conditional mean outcomes $\cmo$ and $\ccmo$ as detailed in Appendix \ref{appendix:nuisances}, following \citet{singh2023sequential} which uses kernel mean embeddings. We performed parametric estimation of the generalized propensity scores, that is to say the conditional densities of the treatment. To this end, as in \citet{hsu2020}, we assume $T$ to be normally distributed given $X$ (or given $(X, M)$, respectively) and learned linear predictors $\hat t$ of $T$ conditional on $X$ (or given $(X, M)$). More specifically, $\widehat \txdensity = \mathcal{N}(\hat t, \hat \sigma)$ (respectively for $\widehat \txmdensity$) where the noise $\hat \sigma$ is the empirical standard deviation of treatments. 

In addition, we also estimate the direct and indirect effects with baseline methods. First, we implement the coefficient product method \citep{baron1986, vanderweele_explanation_2015}, that is a linear OLS regression of the outcome on a the treatment, the mediator, and the covariate and respectively the mediator on a the treatment and covariate. Second, we consider the kernel mean embedding (KME) method \citep{singh2023sequential} which is a G-computation method. We also implement the generalized importance weighting (GIPW) estimator from \citet{hsu2020}. Eventually, we also considered the multiply robust estimator of \citet{sani2024}, with their explicit integration of the conditional mean outcome for estimating the cross conditional mean outcome. We refer the reader to the Appendices~\ref{appendix:nuisances} and~\ref{appendix:experiment_details} for further details on the implementations. 

\subsection{Simulation}

In this part we evaluate our approach following the simulations in \citet{hsu2020}, and \citet{singh2023sequential}
.
In this simulation, the oracle for the mediated response $\mr_{t, t^\prime}$, the direct effect $\dce(t, t^{\prime})$ and the indirect effect by $\ice(t, t^{\prime})$ are accessible (see all details in Appendix~\ref{appendix:experiment_hsu}) and we can therefore evaluate the error of all estimators. We consider 100 simulations and three sample sizes $n=500,1000, 5000$ as in \citet{singh2023sequential} to investigate the performance of our approach. 

\begin{figure}
    \centering
    \includegraphics[width=\linewidth]{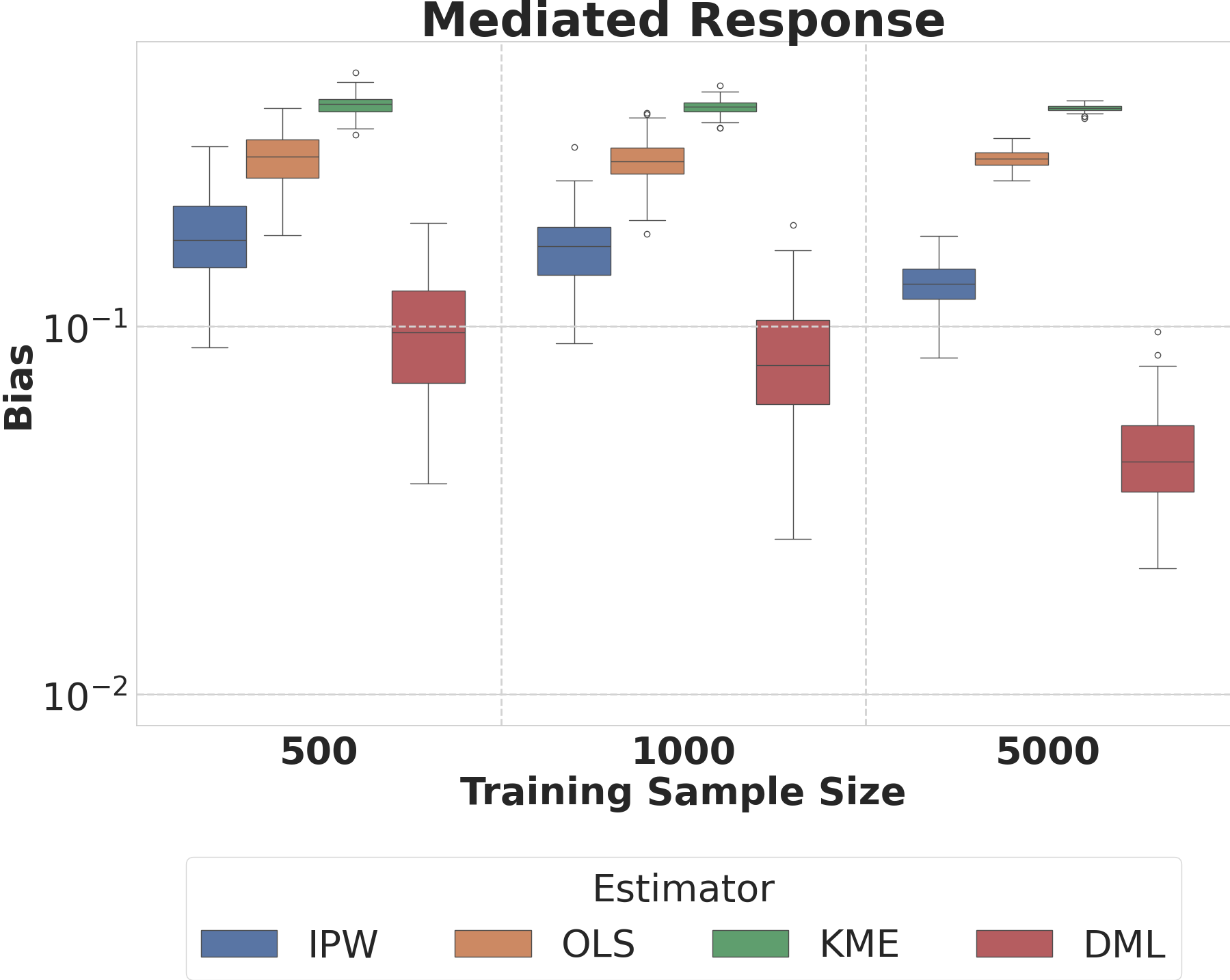}
    \caption{Bias of mediated response estimation on simulations with different sample sizes. DML significantly outperforms OLS and KME and also improves upon IPW.}
    \label{fig:comparison_dml}
\end{figure}

\paragraph{Comparison to other methods} In Figure \ref{fig:comparison_dml} we compare the DML estimator to other approaches. We see that the DML significantly outperforms the OLS and KME approaches while improving upon the IPW method as well. We therefore find as in binary treatment studies \citep{tchetgen2012, farbmacher2022} that the DML estimation of the mediated response practically outperforms alternative approaches.


\paragraph{Comparison to DML variants} 

We compare different DML formulations for continuous treatments. We implemented the estimator from \citet{sani2024}, which estimates the cross-conditional mean outcome $\ccmo$ using Explicit Integration (EI) of $\cmo$ and a ratio of mediator densities $\mdensity$ (MD). (EI) estimates $\widehat \omega_Y(t,t', x) = \sum_{\mathcal{M}} \widehat \mu_Y(t,m,x) \widehat f_{M \mid T, X}(m|t',x)$ (using Eq. \eqref{eq:cross_conditional_mean_outcome}), while Implicit Integration (II) estimates $\widehat \omega_Y$ via two regressions \citep{farbmacher2022, singh2023sequential}. Our approach instead uses (II) of $\mu_Y$ and a ratio of treatment propensities $\txdensity, \txmdensity$. Notably, (EI) is unsuitable for continuous mediators, and the mediator densities (MD) approach struggles with high-dimensional mediators. We also compare our DML estimator to (II) combined with (MD) to highlight its advantages.

\begin{table}[h]
    \centering
    \begin{tabular}{llcccc}
    \toprule
    Estimator & $n$ & $d_M$ & Bias & Std & RMSE \\
    \midrule
    EI-MD & 500 & 1 & 0.2429 & 0.1672 & 0.2979 \\
    II-MD & 500 & 1 & 0.1011 & 0.0389 & 0.1208 \\
    \textbf{(Ours)} & 500 & 1 & \textbf{0.1006} & \textbf{0.0375} & \textbf{0.1211} \\

    \midrule
    EI-MD & 1000 & 1 & 0.3265 & 0.2745 & 0.3890 \\
    II-MD & 1000 & 1 & 0.0834 & 0.0329 & 0.0995 \\
    \textbf{(Ours)} & 1000 & 1 & \textbf{0.0828} & \textbf{0.0319} & \textbf{0.0985} \\
    \midrule
    EI-MD & 5000 & 1 & 0.7319 & 0.6298 & 0.8475 \\
    II-MD & 5000 & 1 & 0.0460 & 0.0146 & 0.0557 \\
    \textbf{(Ours)} & 5000 & 1 & \textbf{0.0449} & \textbf{0.0145} & \textbf{0.0544} \\
    \hline
    II-MD & 500 & 5 & 0.7271 & 0.1742 & 0.9002 \\
    \textbf{(Ours)} & 500 & 5 & \textbf{0.1709} & \textbf{0.0462} & \textbf{0.2178} \\
    \midrule 
    II-MD & 1000 & 5 & 0.5835 & 0.0924 & 0.7278 \\
    \textbf{(Ours)} & 1000 & 5 & \textbf{0.1623} & \textbf{0.0369} & \textbf{0.2122} \\
    \midrule
    II-MD & 5000 & 5 & 0.3393 & 0.0182 & 0.4209 \\
    \textbf{(Ours)} & 5000 & 5 & \textbf{0.1319} & \textbf{0.0173} & \textbf{0.1843} \\
    \bottomrule
    \end{tabular}
    \caption{Comparison to different DML variants on our setting with mediator of dimension $d_M=1$ and $d_M=5$: (EI-MD) explicit integration and mediator densities \citep{sani2024}, (II-MD) implicit integration and mediator densities, (Ours) with (II-TP) implicit integration and treatment propensities}
    \label{tab:merged_dml}
\end{table}

Table \ref{tab:merged_dml} compares these variants for a one-dimensional mediator and of dimension 5. For a one-dimensional mediator our approach outperforms \citet{sani2024} but performs similarly to the II-MD variant, as mediator density estimation is straightforward in this setting.
%
However, for a higher dimension our approach outperforms the II-MD variant, where mediator density estimation is more challenging. In this multidimensional setting, the EI procedure is not suitable. To extend the comparison, we replicated the numerical simulation of \citet{sani2024} in Appendix \ref{appx:sani_setting} using binary mediators, which favor (EI) of the conditional mean outcome. Using their metric—the average absolute bias—we compare our DML estimator to their variant, as well as IPS, OLS, and KME, and still find that our approach significantly outperforms theirs.


 \paragraph{Bandwidth selection} Moreover, in Appendix \ref{appendix:experiment_hsu}, we provide additional results where we compare the empirical performance of our DML approach under two bandwidth selection methods, one following the AMSE strategy from Corollary \ref{thm:amse_bandwidth} and the other following the \citet{scott2015multivariate} rule of thumb. Overall, the AMSE strategy provides less variance but shows slightly more bias than the heuristic. Moreover, we conclude that both approaches are suited for practical applications. 
 
 \paragraph{Additional experiments} Eventually, we also provide in Appendix \ref{appx:additional_results} an experiment to validate the coverage of the asymptotic confidence interval and an additional experiment to compare parametric and nonparametric estimators of the nuisances functions.

\subsection{Application to cognitive function}


We consider UK Biobank \citep{ukbb} imaging data, with around 40,000 brain scans, to study brain health. Prior research shows that while imaging-derived phenotypes (IDPs) have some predictive power for traits like fluid intelligence and neuroticism, adding IDPs to socio-demographic variables yields little to no improvement in predictive performance \citep{dadi2021, cox2019structural}. In this application, we assess whether the effect of poor glycemic control on cognitive function is mediated by brain structure, using glycated hemoglobin (as a proxy for the glycemic control\citep{jha2022accelerated}.
In Appendix~\ref{appendix:experiment_ukbb}, along with additional details, we provide an experiment to measure the effect mediated by brain structural information.  The DML results are on par with existing methods and show no total nor indirect effect of brain-derived features, which corroborates \citet{dadi2021, cox2019structural}. However, we observe greater stability than IPW in regions where there is limited overlap in the data and for which identifiability is therefore not ensured.

\section{DISCUSSION}

In this work, we propose a DML estimator for mediation analysis with continuous treatments, using kernel smoothing \citep{hsu2020, Colangelo2020}. Our estimator is root-n consistent under mild regularity conditions and, as a targeted DML estimator \citep{kennedy2022semiparametric}, exhibits asymptotic normality, allowing for asymptotic confidence intervals and mean squared error optimal bandwidth. We provide numerical evaluations in non-asymptotic regimes and apply the method to cognitive function analysis. Future work could explore targeted learning \citep{vanderLaanRubin+2006, zhengVanderLaan2012} with kernel smoothing, as it has proven effective without requiring cross-fitting, which can reduce effective sample size and does not improve finite-sample performance in simpler problems \citep{williamson2023general, qingliang2022}.

\acknowledgments{HZ and BT were supported by the KARAIB AI chair (ANR-20-CHIA-0025-01) and the H2020 Research Infrastructures Grant EBRAIN-Health 101058516. This research has been conducted using the UK Biobank Resource under Application Number 49314.}

\bibliographystyle{abbrvnat}
\bibliography{references}

\newpage

\section*{Checklist}

 \begin{enumerate}

 \item For all models and algorithms presented, check if you include:
 \begin{enumerate}
   \item A clear description of the mathematical setting, assumptions, algorithm, and/or model. \\
   \textbf{Yes}
   The setting along with asumptions, algorithm are detailed in Section \ref{sec:background} and \ref{sec:continuous_mediation}.
   \item An analysis of the properties and complexity (time, space, sample size) of any algorithm. \\
   \textbf{Yes}
   The analysis of the algorithm presented in Section \ref{sec:continuous_mediation} is shown in Section \ref{sec:asymptotic}.
   \item (Optional) Anonymized source code, with specification of all dependencies, including external libraries. \\
   \textbf{Yes}
   The anonymized source code is provided in the supplementary materials and contains a Read.me file to run the experiments.
 \end{enumerate}

 \item For any theoretical claim, check if you include:
 \begin{enumerate}
   \item Statements of the full set of assumptions of all theoretical results. \\
   \textbf{Yes}
    The analysis of the algorithm presented in Section \ref{sec:continuous_mediation} is shown in Section \ref{sec:asymptotic}.
   \item Complete proofs of all theoretical results. \\
    \textbf{Yes}
    The analysis of the algorithm presented in Section \ref{sec:continuous_mediation} is shown in Section \ref{sec:asymptotic}.
   \item Clear explanations of any assumptions. \\
    \textbf{Yes}
    A commentary and explanation of all assumptions in Section \ref{sec:continuous_mediation} and \ref{sec:asymptotic} are provided.    
 \end{enumerate}

 \item For all figures and tables that present empirical results, check if you include:
 \begin{enumerate}
   \item The code, data, and instructions needed to reproduce the main experimental results (either in the supplemental material or as a URL). \\
    \textbf{Yes}
    The instructions to reproduce the tables, plots are in Read.me file of the code in the supplementary file.
   \item All the training details (e.g., data splits, hyperparameters, how they were chosen). \\
    \textbf{Yes} The training details are provided in Appendix \ref{appendix:hyperparameters} for hyperparameter selection and Appendix \ref{appendix:experiment_hsu} and \ref{appendix:experiment_ukbb} for the remaining.
    \item A clear definition of the specific measure or statistics and error bars (e.g., with respect to the random seed after running experiments multiple times). \\
    \textbf{Yes} The definition of the metrics and the way to obtain error bars are described in Section \ref{sec:numerical} and in  Appendix \ref{appendix:experiment_hsu} and \ref{appendix:experiment_ukbb}.
        \item A description of the computing infrastructure used. (e.g., type of GPUs, internal cluster, or cloud provider). \\
         \textbf{Yes} The computational infrastructure is described in Appendix \ref{appendix:computation infrastructure}.
 \end{enumerate}

 \item If you are using existing assets (e.g., code, data, models) or curating/releasing new assets, check if you include:
 \begin{enumerate}
   \item Citations of the creator If your work uses existing assets. \\
    \textbf{Yes}
    We cite the original paper of the UKBB project \citep{ukbb}.
   \item The license information of the assets, if applicable. \\
   \textbf{No}
    We address the reader to the paper explaining the UKBB project \citep{ukbb} for this matter.
   \item New assets either in the supplemental material or as a URL, if applicable. \\
   \textbf{Not applicable.}
   \item Information about consent from data providers/curators. \\
   \textbf{No}
    We address the reader to the paper explaining the UKBB project \citep{ukbb} for this matter.
   \item Discussion of sensible content if applicable, e.g., personally identifiable information or offensive content. \textbf{No}.
 \end{enumerate}

 \item If you used crowdsourcing or conducted research with human subjects, check if you include:
 \begin{enumerate}
   \item The full text of instructions given to participants and screenshots. \textbf{Not applicable}.
   \item Descriptions of potential participant risks, with links to Institutional Review Board (IRB) approvals if applicable. \textbf{Not applicable}.
   \item The estimated hourly wage paid to participants and the total amount spent on participant compensation. \textbf{Not applicable}.
 \end{enumerate}

 \end{enumerate}

\newpage

\newpage
\onecolumn
\section*{APPENDIX}

This appendix is organized as follows: 
\begin{itemize}[nosep, label={--}]
    \item Appendix~\ref{appendix:notations}: summary of the notations used in the analysis.
    \item Appendix~\ref{appendix:kernel_localization}: discussion on kernel smoothing and proof for the construction of the moment function in Section~\ref{sec:continuous_mediation}.
    \item Appendix~\ref{appendix:asymptotic_analysis}: proof for the asymptotic analysis in Section~\ref{sec:asymptotic}.
    \item Appendix~\ref{appendix:nuisances}: contains the implementation information of the nuisance parameters used in Section~\ref{sec:numerical}.
    \item Appendix~\ref{appendix:experiment_details}: details on the implementation of the algorithms and additional experiment details, discussions and results. 
\end{itemize}

All the code to reproduce our experiment can be found at \url{https://github.com/houssamzenati/double-debiased-machine-learning-mediation-continuous-treatments}.

\section{NOTATIONS}

\label{appendix:notations}
In this appendix, we recall useful notations that are used throughout the paper.

\smallskip

Below are generic notations related to the mediation task:

\smallskip
\begin{itemize}[nosep, label={--},topsep=-4pt]
    \item $T$ is the treatment random variable, $Y$ the outcome random variable, $M$ the mediator(s) and the covariate(s) is $X$. We respectively write $t,t^\prime$ for treatment values, $y, m, x$ for the outcome mediator and covariate.
    \item $\mathcal{Y}, \mathcal{T}, \mathcal{X}$ and $ \mathcal{M}$ are respectively the support of $Y, T, X$ and $M$.
    \item $M(t)$ and $Y(t, M(t^\prime)$ are respectively the potential mediator state and the potential outcome. 
    \item $\mathcal{Z}= \mathcal{Y} \times \mathcal{T} \times \mathcal{X} \times \mathcal{M}$ and $F_{Z}(Z)$ is the cumulative density function over such a space. 
    \item $\txdensity(t\mid X)$ is the conditional density of $T$ given $X$,  $\txmdensity(t\mid X, M)$ is the conditional density of $T$  given $X$ and $M$, $\mdensity(M \mid T, X)$ the conditional density of $M$ given $T$ and $X$ (if $M$ is discrete, this is a conditional probability).
    \item $\tce(t,t'), \dce(t,t')$ and $\ice(t,t')$ are respectively the total, direct and indirect effects for the treatment values $t, t^\prime \in \mathcal{T}$. 
    \item $\mr_{t,t^\prime} = \E [Y\left(t,M(t')\right)]$ is the mediated response. 
    \item $\cmo(t, m, x)= \E [ Y | T=t, M=m, X=x]$ is the conditional mean outcome, and $\ccmo(t,t', x)=\E \left[ \cmo(t, m, X) | T=t', X =x \right]$ is the cross conditional mean outcome.
\end{itemize}

\smallskip

Below are notations related to the continuous mediation task with the kernel and finite samples. 

\smallskip
\begin{itemize}[nosep, label={--},topsep=-4pt]
    \item $k: \mathcal{T} \to \R$ is a bounded positive definite kernel 
    \item $R_k \equiv \int_{-\infty}^{\infty} k(u)^2 d u$ is the roughness of the kernel.
    \item $\kappa \equiv \int_{-\infty}^{\infty} u^2 k(u) d u$.
    \item $h$ is the bandwidth. 
    \item $K_h\left(T-t\right) \equiv \Pi_{j=1}^{d_T} k\left(\left(T_{j}-t_j\right) / h\right) / h^{d_T} $ is the kernel product for smoothing.
    \item $\esf_{t,t^\prime}^h$ is the kernel moment function associated to the bandwidth $h$ and the treatment values $t, t^\prime \in \mathcal{T}$.
    \item $n$ is the total sample size, $L$ is the number of splits and $i$ refers to an index, typically belonging to a split $I_\ell$ for ${\ell \in 1, \dots L}$. $n_\ell$ is the size of the $\ell$-th split. 
    \item $Y_i, T_i, X_i, M_i$ are realizations of the random variables $Y, T, X, M$ for $i \in I_\ell$ for $\ell \in {1, \dots, L}$.
    \item $d_T$ is the dimension of the treatment space.
\end{itemize}

\smallskip

Below are notations related to the asymptotic analysis: 

\smallskip

\begin{itemize}[nosep, label={--},topsep=-4pt]
    \item $o_p(1)$ and $O_p(1)$ respectively refer to convergence and boundedness in probability.
    \item $\left\|\cdot \right\|_{{t M X}}^2    = \int_{\mathcal{M}, \mathcal{X}}\left(\cdot\right)^2 f_{T M X}(t, m, x) d m d x $ is a $L_2(t M X)$ partial norm for a given $t \in \mathcal{T}$.
    \item $\left\|\cdot \right\|_{{X}}^2    = \int_{\mathcal{X}}\left(\cdot \right)^2 f_{X}(x) d x$ is a $L_2(X)$ partial norm.
    \item $\left\|\cdot \right\|_{{t X}}^2 =  \int_{\mathcal{X}}\left(\cdot \right)^2 f_{T X}(t, x) d x$ is a $L_2(tX)$ partial norm for a given $t \in \mathcal{T}$.
    \item $\widehat{\cmo}_{\ell}$, $\widehat{\ccmo}_{\ell}$ $\widehat{\txdensity}_{\ell}$ and $\widehat{\txmdensity}_{\ell}$ are estimators for the nuisance functions $\cmo$, $\ccmo$ $\txdensity$ and $\txmdensity$ associated to the split $\ell \in {1 \dots L}$. 
    \item $\widebar{\cmo}$, $\widebar{\ccmo}$, $\widebar{\txdensity}$ and $\widebar{\txmdensity}$ are their limits. 
    \item $\mathrm{B}_{t, t^\prime}$ and $\mathrm{~V}_{t,t'}$ are respectively the asymptotic bias and variance in the asymptotic normality. 
    \item $\hat{\mathrm{B}}_{t, t^\prime}$ and $\hat{\mathrm{V}}_{t, t^\prime}$ are estimators of the latter quantities. 
    \item $h^*$ is the optimal bandwidth, $b_n$ is another bandwidth.
\end{itemize}

\smallskip
Eventually, we highlight the most important notations related to the proofs used in the Appendix \ref{appendix:kernel_localization} and \ref{appendix:asymptotic_analysis}.

\smallskip

\begin{itemize}[nosep,label={--},topsep=-4pt]
    \item $F^{\epsilon h}$ defines through $\epsilon$ a trajectory CDF between the true CDF $F^0$ and the CDF $F^h$ induced by the bandwidth $h$. $f^{\epsilon h}, f^0, f^h$ are the respective PDF.
    \item $\lambda(t^\prime, X)=1/\txdensity(t^\prime \mid X)$ is the inverse of the propensity score. $r(t, t^\prime, M, X)=\txmdensity(t^\prime, M, X)/\txmdensity(t \mid X, M)$.
    \item $Z_{\ell}^c$ denote the observations $()$ for $i \in I_{\ell}$
    \item $\widehat{\cmo}_{i \ell} = \widehat{\cmo}_{\ell}\left(t, X_i, M_i\right)$, $\widehat{\ccmo}_{i \ell} = \widehat{\ccmo}_{\ell}\left(t, t^\prime, X_i \right)$, $\hat{\lambda}_{i \ell} = \widehat{\txdensity}_{\ell}\left(t^\prime \mid X_i\right)^{-1}$ and $\hat{r}_{i \ell} = \widehat{\txmdensity}_{\ell}\left(t^\prime \mid X_i, M_i \right)/{\widehat{\txmdensity}_{\ell}\left(t \mid X_i, M_i \right)}$ for $i \in I_{\ell}$. 
    \item $\widebar{\cmo}_i = \widebar{\cmo}\left(t, X_i, M_i \right)$, $\widebar{\ccmo}_{i} = \widebar{\ccmo}\left(t, t^\prime, X_i \right)$ , $\bar{\lambda}_{i} = \widebar{\txdensity}_{\ell}\left(t^\prime \mid X_i\right)^{-1}$ and $\bar{r}_{i} = \widebar{\txmdensity}\left(t^\prime \mid X_i, M_i \right)/{\widebar{\txmdensity}\left(t \mid X_i, M_i \right)}$.
\end{itemize}

\section{MOMENT FUNCTION AND KERNEL SMOOTHING}

In this section we provide a discussion on the moment function and kernel smoothing techniques and start with a Lemma that is derived from the analysis of \citet{Colangelo2020}. 

\label{appendix:kernel_localization}
\label{appendix:moment}
\begin{lemma}
Let $W \in \mathcal{W}$ be a random variable and $f_{TW}$ be the joint probability on $\mathcal{T} \times \mathcal{W}$, for any $t \in \mathcal{T}$ 
\begin{equation}
\int_{\mathcal{T}} K_h(s-t) f_{T W}(s, x) d s=f_{T W}(t, w)+O\left(h^2\right)    
\end{equation}
uniformly in $x \in \mathcal{X}$.    
\label{lemma:kernel_smoothing}
\end{lemma}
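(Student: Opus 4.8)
The plan is to reduce the claim to a standard kernel-bias computation via a change of variables followed by a second-order Taylor expansion in the treatment coordinate. First I would write the smoothing operator explicitly as $K_h(s-t) = h^{-1} k((s-t)/h)$ (in the one-dimensional case; the general $\mathcal{T}$ uses the product kernel of the Notations and is handled by a multivariate expansion), and substitute $u = (s-t)/h$, so that $s = t + hu$ and $ds = h\,du$. This turns the left-hand side into $\int_{\mathbb{R}} k(u)\, f_{TW}(t + hu, w)\,du$, which cancels the explicit $h^{-1}$ factor and isolates the entire dependence on $h$ inside the first argument of $f_{TW}$.

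Next I would Taylor expand $f_{TW}(t+hu,w)$ in its first argument around $t$ to first order with a Lagrange remainder of second order, writing $f_{TW}(t+hu,w) = f_{TW}(t,w) + hu\,\partial_t f_{TW}(t,w) + \tfrac{1}{2}h^2 u^2\,\partial_t^2 f_{TW}(t + \theta_u hu, w)$ for some $\theta_u \in (0,1)$. Integrating against $k$ and invoking the kernel properties of Assumption~\ref{assum:kernel}, namely $\int k(u)\,du = 1$ and the symmetry identity $\int u\,k(u)\,du = 0$, the constant term reproduces exactly $f_{TW}(t,w)$ and the linear term vanishes. What survives is the remainder $\tfrac{1}{2}h^2 \int u^2 k(u)\, \partial_t^2 f_{TW}(t+\theta_u hu, w)\,du$, so that the whole second-order contribution (including the leading bias) is absorbed into the claimed $O(h^2)$.

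It then remains to bound this remainder by $O(h^2)$ uniformly in $w$ (equivalently the $x$ of the statement). Here I would use the regularity hypothesis of Assumption~\ref{assum:regularity}: since $f_Z$ is three-times differentiable in $t$ with derivatives bounded uniformly over $\mathcal{Z}$, the joint density $f_{TW}$ appearing here, obtained by marginalizing out the remaining coordinates with differentiation passing under the integral by boundedness, inherits a uniformly bounded second derivative. Setting $C_2 = \sup_{t,w}|\partial_t^2 f_{TW}(t,w)| < \infty$, the remainder is at most $\tfrac{1}{2} C_2 h^2 \int u^2 |k(u)|\,du$ in absolute value, and $\int u^2 |k(u)|\,du < \infty$ follows from $\kappa < \infty$ in Assumption~\ref{assum:kernel} (for the nonnegative second-order kernels in view, such as Epanechnikov and Gaussian, one has $\int u^2 |k(u)|\,du = \kappa$). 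Since neither $C_2$ nor this integral depends on $w$, the bound is uniform, giving the stated result.

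The hard part will not be the expansion itself but the two technical justifications around it: transferring the uniform derivative bound from the full joint density $f_Z$ of Assumption~\ref{assum:regularity} to the marginal joint density $f_{TW}$ actually appearing in the lemma (which requires justifying differentiation under the marginalizing integral), and confirming finiteness and uniformity of the remainder integral when the second-order kernel is signed, where one would control the tail via the decay condition $|dk/du| \le C|u|^{-\nu}$ of Assumption~\ref{assum:kernel} rather than relying on nonnegativity. The multidimensional-treatment version adds only bookkeeping: the product kernel factorizes, and a multivariate Taylor expansion combined with the per-coordinate symmetry kills all first-order terms, leaving an $O(h^2)$ remainder by the same argument.
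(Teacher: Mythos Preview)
Your proposal is correct and follows essentially the same route as the paper: change of variables $u=(s-t)/h$, second-order Taylor expansion in $t$, use of $\int k=1$ and $\int u\,k=0$ to kill the zeroth- and first-order terms, and a uniform bound on the second derivative to control the remainder. The only cosmetic difference is that the paper factors out $f_{TW}(t,w)$ and writes the result as $f_{TW}(t,w)(1+h^2 C)$ (invoking a lower bound on the density), whereas you go straight to the additive $O(h^2)$ form, which is all the lemma actually claims and avoids that extra assumption.
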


\begin{proof}
 We use change of variables $u=\left(u_1, u_2, \ldots, u_{d_T}\right)=(T-t) / h$, a Taylor expansion, the mean value theorem where $\bar{t}$ is between $t$ and $t+u h, f_{T W}(t, w)$ being bounded away from zero, and the second derivatives of $f_{T W}(t, w)$ being bounded uniformly over $\left(t^{\prime}, x\right) \in \mathcal{T} \times \mathcal{W}$ to show that

\begin{align}
& \int_{\mathcal{T}} K_h(s-t) f_{T W}(s, w) d s \\
& =\int_{\mathcal{R}^{d_T}} \Pi_{j=1}^{d_T} k\left(u_j\right) f_{T W}(t+u h, w) d u \\
& =\int_{\mathcal{R}^{d_T}} \Pi_{j=1}^{d_T} k\left(u_j\right)\left(f_{T W}(t, w)+h \sum_{j=1}^{d_T} u_j \frac{\partial f_{T W}(t, w)}{\partial t_j}\right. \\
& \left.\quad+\left.\frac{h^2}{2} \sum_{j=1}^{d_T} u_j^2 \frac{\partial^2 f_{T W}(t, w)}{\partial t_j^2}\right|_{t=\bar{t}}+\left.\frac{h^2}{2} \sum_{j=1}^{d_T} \sum_{l=1, l \neq j}^{d_T} u_j u_l \frac{\partial^2 f_{T W}(t, w)}{\partial t_j \partial t_l}\right|_{t=\bar{t}}\right) d u_1 \cdots d u_{d_T} \\
& =\int_{\mathcal{R}^{d_T}} \Pi_{j=1}^{d_T} k\left(u_j\right) f_{T W}(t, w)\left\{1+h \sum_{j=1}^{d_T} u_j \frac{\partial f_{T W}(t, w)}{\partial t_j} \frac{1}{f_{T W}(t, w)}\right. \\
& \left.\quad+\frac{h^2}{2} \sum_{j=1}^{d_T}\left(\left.u_j^2 \frac{\partial^2 f_{T W}(t, w)}{\partial t_j^2}\right|_{t=\bar{t}}+\left.\sum_{l=1, l \neq j}^{d_T} u_j u_l \frac{\partial^2 f_{T W}(t, x)}{\partial t_j \partial t_l}\right|_{t=\bar{t}}\right) \frac{1}{f_{T W}(t, x)}\right\} d u_1 \cdots d u_{d_T} \\
& =f_{T W}(t, w)\left(1+h^2 C\right) \label{eq:kernel_density_approximation} \\
& =f_{T W}(t, w)+O\left(h^2\right)
\end{align}

for some positive constant $C$, for any $t \in \mathcal{T}$, uniformly over $w \in \mathcal{W}$ and where the term linear in $h$ equals zero because $\int u k(u) = 0$ by Assumption \ref{assum:kernel}. 
\end{proof}

The latter result will be at the core of the analysis that will follow in Appendix \ref{appendix:asymptotic_analysis}. We now discuss the construction of the doubly robust moment function with the Gateaux derivatives. Importantly the expression in \eqref{eq:lim_moment_function} will be the building block to construct estimators for the mediated response $\mr_{t,t^\prime}$ and its linear functionals. We will prove Lemma \ref{lemma:neyman_orthogonal} that we restate below.


\begin{customlemma}{\ref{lemma:neyman_orthogonal}}
Let $t, t^\prime \in \mathcal{T}$ and let the moment function
\begin{equation*}
\begin{aligned}
\esf_{t,t^\prime, h}= & \frac{ K_h(T-t) \txmdensity(t' \mid X, M)}{\txdensity(t' \mid X) \txmdensity(t \mid X, M)} [Y-\cmo(t, M, X)] \\
& +\frac{K_h(T-t')}{\txdensity(t' \mid X)} [\cmo(t, M, X) -\ccmo(t,t', X)] \\
& +\ccmo(t,t', X) - \mr_{t,t^\prime}.
\end{aligned}    
\end{equation*}
then, the asymptotic moment function is Neyman-Orthogonal.
\end{customlemma}

\begin{proof}
One way to obtain the influence function is to calculate the limit of the Gateaux derivative with respect to a smooth deviation from the true distribution, as the deviation approaches a point mass, following \citet{carone2016} and \citet{ichimura2022}.
\\

Let $\mathcal{F}$ be the set of cumulative density functions (CDF) on the observations $Z \in \mathcal{Z}$. Let $F^0$ be the true CDF of $Z$ (respectively $f^0$ the PDF) and the CDF $F_Z^h$ (respectively $f^z$ the PDF) approach a point mass at $Z$ as the bandwidth $h \rightarrow 0$. Consider

\begin{equation}
F^{\epsilon h}=(1-\epsilon) F^0+\epsilon F_Z^h,   
\end{equation}

for $\epsilon \in[0,1]$ such that for all small enough $\epsilon, F^{\epsilon h} \in \mathcal{F}$ and the corresponding PDF 

\begin{equation}
f^{\epsilon h}=f^0+\epsilon\left(f_Z^h-f^0\right).   
\end{equation}

For any $t \in \mathcal{T}$, let $\mr_{t,t^\prime}(\cdot): \mathcal{F} \rightarrow \mathcal{R}$. We note that:

$$
\begin{aligned}
\mr_{t,t^\prime}(F) & =\int_{\mathcal{X}, \mathcal{M}, \mathcal{Y}} \mathbb{E}[Y \mid T=t, M=m, X=x] \mdensity(m | t', x) f_X(x) d x d m d y \\
& =\int_{\mathcal{X}, \mathcal{M}, \mathcal{Y}} y \frac{f_Z(y, t, m, x) f_X(x) f_{T M X}(t', m, x) }{f_{T M X}(t, m, x)f_{T X}(t', x)}  d x d m d y
\end{aligned}
$$

\citep{ichimura2022} show that if a semiparametric estimator is asymptotically linear and locally regular, then the influence function is $\lim _{h \rightarrow 0} d \mr_{t,t^\prime}\left(F^{\epsilon h}\right) /\left.d \epsilon\right|_{\epsilon=0}$. We calculate the Gateaux derivative of the functional $\mr_{t,t^\prime}\left(F^{\epsilon h}\right)$ with respect to a deviation $F_Z^h-F^0$ from the true distribution $F^0$. Then we show that our estimator is asymptotically equivalent to a sample average of the moment function.  

$$
\begin{aligned}
\left.\frac{d}{d \epsilon} \mr_{t,t^\prime}\left(F^{\epsilon h}\right)\right|_{\epsilon=0} &=  \left.\int_{\mathcal{X}, \mathcal{M}, \mathcal{Y}} y \frac{d}{d \epsilon}\left(\frac{f_Z(y, t, m, x) f_X(x) f_{T M X}(t', m, x)}{f_{T M X}(t, m, x) f_{T X}(t', x)}\right) d y d m d x\right|_{\epsilon=0} \\
& =  \int_{\mathcal{X}, \mathcal{M}, \mathcal{Y}} \frac{y}{f_{T M X}(t, m, x)f_{T X}(t', x)}\left[\left(f_Z^h(y, t, m, x)-f_Z^0(y, t, m, x)\right) f_{T M X}(t', m, x) f_X(x)\right. \\
& +f_Z(y, t, m, x)\left(f_{TMX}^h(t', m, x)-f_{TMX}^0(t',m, x)\right) f_X(x) \\
& \left.+f_Z(y, t, m, x)\left(f_X^h(x)-f_X^0(x)\right) f_{T M X}(t', m, x) \right] d y d m d x \\
& -\int_{\mathcal{X}, \mathcal{M}, \mathcal{Y}} y \frac{f_Z(y, t, m, x) f_X(x) f_{T M X}(t', m, x)}{\left[f_{T M X}(t, m, x)f_{T X}(t', x)\right]^2}\left(f_{T M X}^h(t, m, x)-f_{T M X}^0(t, m, x)\right)f_{T X}(t', x) d y d m d x \\
& -\int_{\mathcal{X}, \mathcal{M}, \mathcal{Y}} y \frac{f_Z(y, t, m, x) f_X(x) f_{T M X}(t', m, x)}{\left[f_{T M X}(t, m, x)f_{T X}(t', x)\right]^2}\left(f_{T X}^h(t', x)-f_{T X}^0(t', x)\right)f_{T M X}(t, m, x) d y d m d x
\end{aligned}
$$

Specifically the term associated with $f_{T M X}^h$ in the previous term contributes in the total term by calculating
$$
\begin{aligned}
& \lim _{h \rightarrow 0} \int_{\mathcal{X}, \mathcal{M}, \mathcal{Y}} y  \frac{f_Z(y, t, m, x) f_X(x) f_{T M X}(t', m, x)}{f_{T M X}(t, m, x)^2 f_{T X}(t', x)} f_{T M X}^h(t, m, x) d y d m d x \\
& =\lim _{h \rightarrow 0} \int_{\mathcal{X}, \mathcal{M}} \cmo(t, m, x) \frac{f_X(x) f_{T M X}(t', m, x)}{f_{T M X}(t, m, x) f_{T X}(t', x)} f_{T M X}^h(t, m, x) d m d x \\
& =\lim _{h \rightarrow 0} \int_{\mathcal{X}, \mathcal{M}} \frac{\cmo(t, m, x)}{f_{T \mid X}(t' \mid x)} \frac{\txmdensity(t' \mid m, x)}{\txmdensity(t \mid m, x)} f_{T M X}^h(t, m, x) d m d x \\
& =\frac{\cmo(t, M, X)}{f_{T \mid X}(t' \mid X)} \frac{\txmdensity(t' \mid M, X)}{\txmdensity(t \mid M, X)} \lim _{h \rightarrow 0} f_T^h(t)
\end{aligned}
$$

Moreover, the term associated with $f_{T X}^h$ in the previous term contributes in the total term by calculating

$$
\begin{aligned}
& \lim _{h \rightarrow 0} \int_{\mathcal{X}} \int_{\mathcal{M}} \int_{\mathcal{Y}} y  \frac{f_Z(y, t, m, x) f_X(x) f_{T M X}(t', m, x)}{f_{T M X}(t, m, x) f_{T X}^2(t', x)} f_{T X}^h(t', x) d y d m d x \\
& =\lim _{h \rightarrow 0} \int_{\mathcal{X}}  \int_{\mathcal{M}} \cmo(t, m, x) \frac{f_{T M X}(t', m, x)}{f_{T X}(t', x)} \frac{f_X(x) }{f_{T X}(t', x)} f_{T X}^h(t', x) d m d x \\
& =\lim _{h \rightarrow 0} \int_{\mathcal{X}}  \frac{\ccmo(t, t', x)}{f_{T \mid X}(t' \mid x)}  f_{T X}^h(t',x) d x \\
& =\frac{\ccmo(t, t', X)}{f_{T \mid X}(t' \mid X)}  \lim _{h \rightarrow 0} f_T^h(t')
\end{aligned}
$$

With same arguments for the other terms, we obtain that the Gateaux derivative for the direction $f_Z^h-f^0$ is

\begin{equation}
\begin{aligned}
\left.\lim _{h \rightarrow 0} \frac{d}{d \epsilon} \mr_{t,t^\prime}\left(F^{\epsilon h}\right)\right|_{\epsilon=0} & =\ccmo(t, t', X)-\mr_{t,t^\prime}+\lim_{h \rightarrow 0} \int_{\mathcal{X}, \mathcal{M}, \mathcal{Y}} \frac{y-\cmo(t, m, x)}{f_{T \mid X}(t' \mid x)} \frac{\txmdensity(t' \mid m, x)}{\txmdensity(t \mid m, x)} f_Z^h(y, t, x) d y d x \\
& + \lim_{h \rightarrow 0} \int_{\mathcal{X}, \mathcal{M}, \mathcal{Y}} \frac{\cmo(t, m, x) - \ccmo(t,t', x)}{f_{T \mid X}(t' \mid x)}  f_Z^h(y, t', x) d y d x\\
& =\ccmo(t, t', X)-\mr_{t,t^\prime}+\frac{Y-\cmo(t, M, X)}{f_{T \mid X}(t' \mid X)} \frac{\txmdensity(t' \mid M, X)}{\txmdensity(t \mid M, X)} \lim _{h \rightarrow 0} f_T^h(t) \\
& + \frac{\cmo(t, M, X) - \ccmo(t,t', X)}{f_{T \mid X}(t' \mid X)}  \lim _{h \rightarrow 0} f_T^h(t') 
\end{aligned}
\label{eq:lim_moment_function}
\end{equation}

Note that the last term in \eqref{eq:lim_moment_function} is a partial mean that is a marginal integration over $\mathcal{Y} \times \mathcal{M} \times \mathcal{X}$, fixing the value of $T$ at $t$. Thus the Gateaux derivative depends on the choice of $f_T^h$. 
\\

We then choose $f_Z^h(z)=K_h(Z-z) 1\left\{f^0(z)>h\right\}$, following \citet{ichimura2022}, so $\lim _{h \rightarrow 0} f_T^h(t)=$ $\lim _{h \rightarrow 0} K_h(T-t)$. Indeed, we specify $F_Z^h$ following \citep{ichimura2022}. Let $K_h(Z)=\Pi_{j=1}^{d_z} k\left(Z_j / h\right) / h$, where $Z=\left(Z_1, \ldots, Z_{d_z}\right)^{\prime}$ and $k$ satisfies Assumption \ref{assum:kernel} and is continuously differentiable of all orders with bounded derivatives. Let $F^{\epsilon h}=(1-\epsilon) F^0+\epsilon F_Z^h$ with pdf with respect to a product measure given by $f^{\epsilon h}(z)=(1-\epsilon) f^0(z)+\epsilon f^0(z) \delta_Z^h(z)$, where $\delta_Z^h(z)=K_h(Z-z) / f^0(z)$, a ratio of a sharply peaked pdf to the true density. Thus 

\begin{equation*}
f_Z^h(y, t, m, x)=K_h(Y-y) K_h(T-t) K_h(M-m) K_h(X-x)    
\end{equation*}

It follows that $\lim _{h \rightarrow 0} f_T^h(t)=\lim _{h \rightarrow 0} K_h(T-t)$ and

\begin{align*}
\lim _{h \rightarrow 0} \int_{\mathcal{X}, \mathcal{M}, \mathcal{Y}} & \frac{y-\cmo(t, m, x)}{f_{T \mid X}(t' \mid x)} \frac{\txmdensity(t' \mid m, x)}{\txmdensity(t \mid m, x)} f_Z^h(y, t, m, x) d y d m d x \\
&=\frac{Y-\cmo(t, M, X)}{f_{T \mid X}(t' \mid X)} \frac{\txmdensity(t' \mid M, X)}{\txmdensity(t \mid M, X)} \lim _{h \rightarrow 0} K_h(T-t)    
\end{align*}

\begin{align*}
\mathbb{E}\left[\left.\frac{d}{d \epsilon} \mr_{t,t^\prime}\left(F^{\epsilon h}\right)\right|_{\epsilon=0}\right]=& \mathbb{E}\left[\ccmo(t, t', X)-\mr_{t,t^\prime}+\frac{\cmo(t, M, X)-\ccmo(t, t', X)}{\txdensity(t' \mid X)} K_h(T-t') \right. \\
& + \left. \frac{Y-\cmo(t, M, X)}{\txdensity(t^\prime \mid X)} \frac{\txmdensity(t' \mid M, X)}{\txmdensity(t \mid M, X)} K_h(T-t)\right]
\end{align*}

As in Lemma \ref{lemma:kernel_smoothing}, we can show that

\begin{align*}
    & \mathbb{E}\left[\frac{Y-\cmo(t, M, X)}{\txdensity(t^\prime \mid X)} \frac{\txmdensity(t' \mid M, X)}{\txmdensity(t \mid M, X)} K_h(T-t)\right] \\ &= \mathbb{E}\left[\mathbb{E}\left[Y-\cmo(t, M, X) K_h(T-t) \mid M, X\right]\frac{\txmdensity(t' \mid M, X)}{\txmdensity(t \mid M, X)\txdensity(t^\prime \mid X)} \right] \\
    & \mathbb{E}\left[\mathbb{E}\left[Y-\cmo(t, M, X) \mid M, X\right]\frac{\txmdensity(t' \mid M, X)\txmdensity(t \mid M, X)}{\txmdensity(t \mid M, X)\txdensity(t^\prime \mid X)} \right] + O\left(h^2\right)\\
    &= O\left(h^2\right)
\end{align*}

and similarly by definition of $\cmo$ and $\ccmo$:

\begin{align*}
    \mathbb{E}\left[\frac{\cmo(t, M, X)-\ccmo(t, t', X)}{\txdensity(t' \mid X)} K_h(T-t')\right] &= \mathbb{E}\left[\frac{\left(\cmo(t, M, X)-\ccmo(t, t', X) \right)\txmdensity(t' \mid X, M)}{\txdensity(t' \mid X)} \right] + O\left(h^2\right) \\
    &= O\left(h^2\right)
\end{align*}

Therefore, 

\begin{equation*}
    \mathbb{E}\left[\left.\frac{d}{d \epsilon} \mr_{t,t^\prime}\left(F^{\epsilon h}\right)\right|_{\epsilon=0}\right]=O\left(h^2\right)   
\end{equation*}

So Neyman orthogonality holds when $h \rightarrow 0$.

\end{proof}

Note also that another motivation for this moment function lies in the common ombination of two alternative estimators for $\mr_{t, t^\prime}$, the regression estimator $\hat{\mr}_{t, t^\prime}^{R E G}=n^{-1} \sum_{i=1}^n \hat{\mr}\left(t, M_i, X_i\right)$ that is based on the identification in Eq.\eqref{eq:pearl_formula} in the main text, and the inverse probability weighting (IPW) estimator $\hat{\mr}_{t, t^\prime}^{I P W}=$ $n^{-1} \sum_{i=1}^n K_h\left(T_i-t\right) Y_i \frac{\widehat{\txmdensity}(t^\prime, X_i, M_i)}{\widehat{\txmdensity}(t, X_i, M_i) \widehat{\txdensity}\left(t^\prime \mid X_i\right)}$ that is based on the identification in \ref{eq:ipw_generalized} in the main text. Adding the influence function that accounts for the one-step estimation partials out the first order effect of the one-step estimation on the final estimator, as discussed in \citep{vanderLaanRubin+2006, chernozhukov2022}. For $\hat{\mr}_{t, t^\prime}^{I P W}$, when $\widehat{\txdensity}$ and $\widehat{\txmdensity}$ are standard kernel density estimator with bandwidth $h$, \citet{hsu2020} derive the asymptotically linear representation of $\hat{\mr}_{t, t^\prime}^{I P W}$ that is first-order equivalent to our DML estimator. The moment function here is constructed by adding the influence function adjustments for estimating the nuisance functions $\txdensity, \txmdensity$ to the original moment function $K_h\left(T-t\right) Y \frac{\widehat{\txmdensity}(t^\prime, X, M)}{\widehat{\txmdensity}(t, X, M) \widehat{\txdensity}\left(t^\prime \mid X\right)}$.

Moreover, note that in binary treatment, under Assumptions \ref{assum:indep_treatment}, \ref{assum:indep_mediator}, and \ref{assum:overlap}, the counterfactual mediated response $\E[Y(t, M(t^\prime))]$ is identified by the following efficient moment function:

\begin{equation}
\begin{aligned}
\E[Y(t, M(t^\prime))]= & \E\left[\esf_{t,t^\prime} \right], \\
\text { with } \esf_{t,t^\prime}= & \frac{\mathbf{1}\{T=t\} \cdot \mdensity(M \mid t', X)}{\Prb(T=t |X) \cdot \mdensity(M \mid t, X)} \cdot[Y-\cmo(t, M, X)] \\
& +\frac{\mathbf{1}\{T=t^\prime\}}{\Prb(T=t' | X)} \cdot[\cmo(t, M, X) \\
& \left.-\int_{m \in \mathcal{M}} \cmo(t, m, X) \cdot \mdensity(m \mid t^\prime, X) d m\right] \\
& +\int_{m \in \mathcal{M}} \cmo(t, m, X) \cdot \mdensity(m \mid t^\prime, X) d m
\end{aligned} 
\label{eq:efficient_score_function}
\end{equation}

Moreover, the efficient score function can alternatively be written as:

\begin{equation}
\begin{aligned}
\mr(t,t')= & \E\left[\esf_{t,t'}^*\right], \\
\text { with } \esf_{t,t'}^*= & \frac{\mathbf{1}\{T=t\} \cdot\left(\Prb(T=t' |M,X)\right)}{\Prb(T=t |M,X) \cdot\left(\Prb(T=t' | X)\right)} \cdot[Y-\cmo(t, M, X)] \\
& +\frac{ \mathbf{1}\{T=t'\}}{\Prb(T=t' | X)} \cdot[\cmo(t, M, X)-\E[\cmo(t, M, X) \mid T=t', X]] \\
& +\E[\cmo(t, M, X) \mid T=t', X].
\end{aligned} 
\label{eq:efficient_score_function_alternative}
\end{equation}

This comes from:
\begin{equation}
\E \left[ Y(t, M(t')) \right] = \E \left[ Y \dfrac{ \mathbf{1} \{T=t \} }{\Prb(T=t |M, X)}  \dfrac{\Prb(T=t' |M,X)}{\Prb(T=t' |X)} \right]    
\end{equation}

and Bayes' law:

\begin{align}
    \dfrac{\Prb(T=t' | X, M)}{\Prb(T=t | X, M)\Prb(T=t' | X)} &= \frac{1}{\Prb(T=t | X)} \dfrac{\Prb(T=t' | X, M) f_M(M| X)}{\Prb(T=t' | X)}\dfrac{\Prb(T=t | X)}{\Prb(T=t | X, M)f_M(M| X)} \\
    &= \dfrac{f(M| T=t', X)}{\Prb(T=t | X)f(M| T=t, X)} \\
\end{align}

\section{ASYMPTOTIC ANALYSIS}

\label{appendix:asymptotic_analysis}

In this section we will prove the main analysis results of this paper. We start by restating Assumption \ref{assum:nuisance} with multidimensional $d_T$.

\begin{assum}[Nuisance rates]
There exist functions $\widebar{\cmo}$, $\widebar{\ccmo}$, $\widebar{\txmdensity}$ and $\widebar{\txdensity}$ that satisfy the following: For each $\ell=1, \ldots, L$, 
\begin{enumerate}

    \item $\sqrt{n h^{d_T}}\left\|\delta_{T\mid X}^\ell\right\|_{{t X}}\left\|\delta_{\cmo}^\ell\right\|_{{t MX}} =o_p(1)$
    
    \item $\sqrt{n h^{d_T}} \left\|\delta_{T\mid X, M}^\ell\right\|_{{t M X}}\left\|\delta_{\cmo}^\ell\right\|_{{t MX}} =o_p(1)$
    
    \item $\sqrt{n h^{d_T}}\left\|\delta_{T\mid X}^\ell\right\|_{{t X}}\left\|\delta_{\ccmo}^\ell\right\|_{{X}} =o_p(1)$
\end{enumerate}
\end{assum}

We will now state again Theorem \ref{thm:asymptotic_normality} with general dimension $d_T$ of the treatment space $\mathcal{T}$ and prove it.

\begin{customthm}{\ref{thm:asymptotic_normality}}
Let Assumptions \ref{assum:indep_treatment}-\ref{assum:nuisance} hold. Let $h \rightarrow 0, n h^{d_T} \rightarrow \infty$, and $n h^{d_T+4} \rightarrow C \in[0, \infty)$. Then for any $t \in \mathcal{T}$,
\begin{align*}
\sqrt{n h^{d_T}}\left(\hat{\mr}_{t,t'}-\mr_{t,t^\prime}\right)=\sqrt{\frac{h^{d_T}}{n}} \sum_{i=1}^n & \left[
\frac{  K_h(T_i-t) \cdot \widebar \txmdensity_{t^\prime, i}}{\widebar\txdensity_i \cdot \widebar \txmdensity_{t, i}} \cdot[Y-\widebar\cmo_i]  + \frac{K_h(T_i-t')}{\widebar\txdensity_i} \cdot[\widebar\cmo_i -\widebar\ccmo_i] + \widebar \ccmo_i
\right]\\
& +o_p(1) .
\end{align*}

Further $\mathbb{E}\left[|Y-\widebar{\cmo}(T, M, X)|^3 \mid T=t, M, X\right]$, $\mathbb{E}\left[(\widebar{\cmo}(t, M, X) - \widebar{\ccmo}(t, t', X))^3 \mid T=t', X\right]$ and their derivatives with respect to $t$ be bounded uniformly over $\left(t, m, x\right) \in \mathcal{T} \times \mathcal{M} \times \mathcal{X}$. Let $\int_{-\infty}^{\infty} k(u)^3 d u<\infty$. Then
\begin{equation*}
\sqrt{n h^{d_T}}\left(\hat{\mr}_{t,t'}-\mr_{t,t^\prime}-h^2 \mathrm{~B}_{t,t'}\right) \xrightarrow{d} \mathcal{N}\left(0, \mathrm{~V}_{t,t'}\right)    
\end{equation*}

where

\begin{align*}
    \mathrm{~V}_{t,t'} &\equiv \mathbb{E}\left[\mathbb{E}\left[(Y-\widebar{\cmo}(t, M, X))^2 \mid T=t, M, X\right] \frac{\txmdensity(t, X, M)}{\widebar{\txdensity}(t', X)^2} \frac{\widebar{\txmdensity}(t', X, M)^2}{\widebar{\txmdensity}(t, X, M)^2} \right. \\
    &\quad + \frac{\txdensity(t'\mid X)}{\widebar{\txdensity}(t'\mid X)^2} \mathbb{E}\left[(\widebar{\cmo}(t, M, X) - \widebar{\ccmo}(t, t', X))^2 \mid T=t', X\right]  \Bigg] R_k^{d_T} \\
    \mathrm{B}_{t, t^\prime} &\equiv \sum_{j=1}^{d_T} \mathbb{E}\left[\frac{\widebar{\txmdensity}(t', X, M)}{\widebar{\txmdensity}(t, X, M)}\left(\partial_{t_j} \cmo(t, M, X)  \frac{\partial_{t_j}\txmdensity(t \mid X, M)}{\widebar{\txdensity}(t'\mid X)} \right. \right.\\ 
    +&\partial_{t_j}^2 \cmo(t, M, X) \frac{\left( \txmdensity(t \mid X, M) + \cmo(t, M, X) - \widebar{\cmo}(t, M, X) \right)}{\widebar{\txdensity}(t' \mid X)} \\
    &\left. +(\widebar{\cmo(}t, M, X)-\widebar{\ccmo}(t, t' x)) \partial_{t_j}^2 \frac{\txmdensity(t', X, M)}{\widebar{\txdensity(t'\mid X)}}\Bigg) /2 )\right] \kappa
\end{align*}
\end{customthm}

\begin{proof}

The proof is twofolds: 

\begin{itemize}
    \item Asymptotical linear representation of the estimated mediated response $\hat{\eta}_{t,t'}$
    \item Asymptotic normality
\end{itemize}

\subsubsection{Asymptotical linear representation of the estimated mediated response $\hat \mr$}

We give an outline of deriving the asymptotically linear representation in Theorem \ref{thm:asymptotic_normality}. Let the moment function for identification 
\begin{equation}
m\left(Z_i, \mr_{t, t^\prime}, \ccmo \right) \equiv \ccmo\left(t, t^\prime, X_i\right)-\mr_{t, t^\prime}
\label{eq:ccmo_moment}
\end{equation}
 by \eqref{eq:pearl_formula}, $\mathbb{E}\left[m\left(Z_i, \mr_{t, t^\prime}, \ccmo\left(t, t^\prime X_i\right)\right)\right]=0$ uniquely defines $\beta_t$. Let the adjustment terms

 \begin{align}
\phi_{t}\left(Z_i, \mr_{t, t^\prime}, \cmo, \txdensity, \txmdensity\right) &\equiv \frac{K_h\left(T_i-t\right)\txmdensity(t^\prime \mid x, m)}{\txdensity(t^\prime \mid x)\txmdensity(t \mid x, m)} \left(Y_i-\cmo\left(t, X_i, \psi_i \right)\right) \label{eq:outcome_moment}  \\  
\varphi_{t^\prime}\left(Z_i, \mr_{t, t^\prime} \cmo, \ccmo, \txdensity \right) &\equiv \frac{K_h\left(T_i-t^\prime \right)}{\txdensity(t^\prime \mid x)} \left(\cmo\left(t, X_i, \psi_i \right) - \ccmo\left(t, t^\prime, X_i \right)\right)  \label{eq:cmo_moment}
 \end{align}

The doubly robust moment function then writes as 
$$\psi\left(Z_i, \mr_{t, t^\prime} \cmo, \ccmo, \txdensity, \txmdensity \right) \equiv m\left(Z_i, \mr_{t, t^\prime}, \ccmo \right)+\phi_t \left(Z_i, \mr_{t, t^\prime}, \cmo, \txdensity, \txmdensity \right) + \varphi_{t^\prime}\left(Z_i, \mr_{t, t^\prime} \cmo, \ccmo, \txdensity \right).$$

Let $r_i \equiv \frac{\txmdensity\left(t^{\prime} \mid M_{i}, X_{i}\right)}{\txmdensity\left(t \mid M_{i}, X_{i}\right)}$, $Z^{\ell}$ denote the observations $z_i$ for $i \in I_{\ell}$. Let $\widehat{\cmo}_{i \ell} \equiv \widehat{\cmo}_{\ell}\left(t, X_i, M_i\right)$, $\widehat{\ccmo}_{i \ell} \equiv \widehat{\ccmo}_{\ell}\left(t, t^\prime, X_i \right)$ and $\widehat{\txdensity}_{i \ell} \equiv \widehat{\txdensity}_{\ell}\left(t^\prime \mid X_i\right)$, $\hat{\lambda}_{i \ell} \equiv \widehat{\txdensity}_{\ell}\left(t^\prime \mid X_i\right)^{-1}$, $\hat{r}_{i \ell} \equiv \frac{\widehat{\txmdensity}_{\ell}\left(t^\prime \mid X_i, M_i \right)}{\widehat{\txmdensity}_{\ell}\left(t \mid X_i, M_i \right)}$ using $Z^{\ell}$ for $i \in I_{\ell}$. Let $\widebar{\cmo}_i \equiv \widebar{\cmo}\left(t, X_i, M_i \right)$, $\widebar{\ccmo}_{i} \equiv \widebar{\ccmo}\left(t, t^\prime, X_i \right)$ and $\widebar{\txdensity}_i \equiv \widebar{\txdensity}\left(t^\prime, X_i\right)$, $\bar{\lambda}_{i} \equiv \widebar{\txdensity}_{\ell}\left(t^\prime \mid X_i\right)^{-1}$, $\bar{r}_{i} \equiv \frac{\widebar{\txmdensity}\left(t^\prime \mid X_i, M_i \right)}{\widebar{\txmdensity}\left(t \mid X_i, M_i \right)}$. 
We can write, with $n_{\ell}=n / L$. Then 

\begin{align*}
\widehat{\mr}_{t, t^\prime} &=L^{-1} \sum_{\ell=1}^L \widehat{\mr}_{t, t^\prime, \ell} \\ 
\widehat{\mr}_{t, t^\prime, \ell}&=n_{\ell}^{-1} \sum_{i \in I_{\ell}} \psi\left(Z_i, \mr_{t, t^\prime}, \widehat{\cmo}_{i \ell}, \widehat{\ccmo}_{i \ell}, \widehat{\txdensity}_{i \ell}, \widehat{\txmdensity}_{i \ell}\right)+\mr_{t, t^\prime}
\end{align*}

\begin{equation}
\sqrt{n h^{d_T}}\left(\widehat{\mr}_{t, t^\prime}-\mr_{t, t^\prime}\right)=\sqrt{n h^{d_T}} L^{-1} \sum_{\ell=1}^L\left(\widehat{\mr}_{t, t^\prime, \ell}-\mr_{t, t^\prime}\right)=
L^{-1 / 2} \sum_{\ell=1}^L \sqrt{n_{\ell} h^{d_T}}\left(\widehat{\mr}_{t, t^\prime, \ell}-\mr_{t, t^\prime}\right)    
\end{equation}

 We show below $\sqrt{n_{\ell} h^{d_T}}\left(\widehat{\mr}_{t, t^\prime, \ell}-\mr_{t, t^\prime}\right)=\sqrt{h^{d_T} / n_{\ell}} \sum_{i \in I_{\ell}} \psi\left(Z_i, \mr_{t, t^\prime}, \widebar{\cmo}_{i}, \widebar{\ccmo}_{i}, \widebar{\txdensity}_{i}, \widebar{\txmdensity}_{i}\right)+ o_p(1)$ for each $\ell \in\{1, \ldots, L\}$.
 \\
 
 Since $L$ is fixed and $\left\{I_{\ell}\right\}_{\ell=1, \ldots, L}$ are randomly partitioned distinct subgroups, the result follows from 
\begin{align*}
\sqrt{n h^{d_T}}\left(\widehat{\mr}_{t, t^\prime}-\mr_{t, t^\prime}\right)&=L^{-1 / 2} \sum_{\ell=1}^L \sqrt{n_{\ell} h^{d_T}}\left(\widehat{\mr}_{t, t^\prime, \ell}-\mr_{t, t^\prime}\right)\\
&=L^{-1 / 2} \sum_{\ell=1}^L \sqrt{h^{d_T} / n_{\ell}} \sum_{i \in I_{\ell}} \psi\left(Z_i, \mr_{t, t^\prime}, \widebar{\cmo}_{i}, \widebar{\ccmo}_{i}, \widebar{\txdensity}_{i}, \widebar{\txmdensity}_{i}\right)+o_p(1) \\
&=\sqrt{h^{d_T} / n} \sum_{\ell=1}^L \sum_{i \in I_{\ell}} \psi\left(Z_i, \mr_{t, t^\prime}, \widebar{\cmo}_{i}, \widebar{\ccmo}_{i}, \widebar{\txdensity}_{i}, \widebar{\txmdensity}_{i}\right)+o_p(1)
\end{align*}
\\

We decompose the remainder term for each $\ell \in\{1, \ldots, L\}$,
\begin{align*}
& \sqrt{n_{\ell} h^{d_T}} \frac{1}{n_{\ell}} \sum_{i \in I_{\ell}}\left\{\psi\left(Z_i, \mr_{t, t^\prime}, \widehat{\cmo}_{i \ell}, \widehat{\ccmo}_{i \ell}, \widehat{\txdensity}_{i \ell}, \widehat{\txmdensity}_{i \ell}\right)-\psi\left(Z_i, \mr_{t, t^\prime}, \widebar{\cmo}_{i}, \widebar{\ccmo}_{i}, \widebar{\txdensity}_{i}, \widebar{\txmdensity}_{i}\right)\right\} \nonumber \\
& =K_{h}\left(T_{i}-t\right)\left\{
\hat r_{i \ell} \hat{\lambda}_{i}\left\{Y_{i}-\widehat{\cmo}_{i \ell}\right\}-
\bar r_{i} \bar{\lambda}_{i } \left\{Y_{i}-\widebar{\cmo}_{i }\right\}\right\} +K_{h}\left(T_{i}-t^{\prime}\right)\left\{\hat{\lambda}_{i \ell}\left\{\widehat{\cmo}_{i \ell}-\widehat{\ccmo}_{i \ell} \right\}-\bar{\lambda}_{i}\left\{ \widebar{\cmo}_{i}-\widebar{\ccmo}_{i} \right\}\right\} \\
&+ \widehat{\ccmo}_{i \ell}-\widebar{\ccmo}_{i} 
\end{align*}


The method for the analysis below lies on the decomposition of the outcome residual:

\begin{equation}
\begin{aligned}
\hat r_{i \ell} \hat{\lambda}_{i}\left\{Y_{i}-\widehat{\cmo}_{i \ell}\right\}-
\bar r_{i } \bar{\lambda}_{i } \left\{Y_{i}-\widebar{\cmo}_{i }\right\} &= \left( \hat r_{i \ell} - \bar r_{i } \right) \left( \hat \lambda_{i \ell} - \bar \lambda_{i} \right) \left( Y_i - \widebar \cmo_{i} \right) -  \left( \hat r_{i \ell} - \bar r_{i } \right) \left( \hat \lambda_{i \ell} - \bar \lambda_{i } \right) \left( \widehat \cmo_{i \ell} - \widebar \cmo_{i } \right)  \\
&  - \left( \hat r_{i \ell} - \bar r_{i } \right) \left( \widehat \cmo_{i \ell} - \widebar \cmo_{i } \right) \bar \lambda_{i } - \left( \hat \lambda_{i \ell} - \bar \lambda_{i } \right) \left( \widehat \cmo_{i \ell} - \widebar \cmo_{i } \right) \bar r_{i } \\
&  + \left( \hat r_{i \ell} - \bar r_{i } \right) \left( Y_i - \widebar \cmo_{i } \right)  \bar \lambda_{i } + \left( \hat \lambda_{i \ell} - \bar \lambda_{i } \right) \left( Y_i - \widebar \cmo_{i } \right) \bar r_{i } - \left( \widehat \cmo_{i \ell} - \widebar \cmo_{i} \right)    \bar r_{i } \bar \lambda_{i }, 
\end{aligned}   
\label{eq:decomposition_outcome_residual}
\end{equation}

and the decomposition of the conditional mean outcome residual:

\begin{equation}
\begin{aligned}
\hat{\lambda}_{i}\left\{\widehat{\cmo}_{i \ell} - \widehat{\ccmo}_{i \ell}\right\}-
\bar{\lambda}_{i \ell} \left\{Y_{i}-\widebar{\cmo}_{i \ell}\right\} & =   \left( \hat \lambda_{i \ell} - \bar \lambda_{i} \right) \left( \widehat \cmo_{i \ell} - \widebar \cmo_{i} \right)  - \left( \hat \lambda_{i \ell} - \bar \lambda_{i \ell} \right) \left( \widehat \ccmo_{i \ell} - \widebar \ccmo_{i} \right) + \left( \hat \lambda_{i \ell} - \bar \lambda_{i} \right)  \widebar \cmo_{i} \\
& + \left( \widehat \cmo_{i \ell} - \widebar \cmo_{i} \right)  \widebar \lambda_{i}  - \left( \widehat \ccmo_{i \ell} - \widebar \ccmo_{i} \right)  \bar \lambda_{i} - \left( \hat \lambda_{i \ell} - \bar \lambda_{i} \right)  \widebar \ccmo_{i}. \\ 
\end{aligned}   
\label{eq:decomposition_cmo_residual}
\end{equation}

This intuition is to use such decompositions to leverage the assumptions on convergence and multiply robust inference.

We then expand the remainder into the terms: 

\begin{align*}
& \sqrt{n_{\ell} h^{d_T}} \frac{1}{n_{\ell}} \sum_{i \in I_{\ell}} \Biggr[ K_{h}\left(T_{i}-t\right)\left\{
\hat r_{i \ell} \hat{\lambda}_{i}\left\{Y_{i}-\widehat{\cmo}_{i \ell}\right\}-
\bar r_{i} \bar{\lambda}_{i } \left\{Y_{i}-\widebar{\cmo}_{i }\right\}\right\} \\
& +K_{h}\left(T_{i}-t^{\prime}\right)\left\{\hat{\lambda}_{i \ell}\left\{\widehat{\cmo}_{i \ell}-\widehat{\ccmo}_{i \ell} \right\}-\bar{\lambda}_{i}\left\{ \widebar{\cmo}_{i}-\widebar{\ccmo}_{i} \right\}\right\} +  \widehat{\ccmo}_{i \ell}-\widebar{\ccmo}_{i} \Biggr] \\
&= \sqrt{\frac{h^{d_T}}{n_{\ell}}} \sum_{i \in I_{\ell}} \Biggr[ \widehat{\ccmo}_{i \ell}- \widebar{\ccmo}_i-\mathbb{E}\left[\widehat{\ccmo}_{i \ell}-\widebar{\ccmo}_i \mid Z^{\ell}\right]  \label{eq:R1-1} \tag{R1-1} \\
&-K_{h}\left(T_{i}-t^{\prime}\right)\left(\widehat{\ccmo}_{i \ell} -\widebar \ccmo_i \right) \bar \lambda_i -\mathbb{E}\left[K_{h}\left(T_{i}-t^{\prime}\right)\left(\widehat{\ccmo}_{i \ell} -\widebar \ccmo_i\right) \bar \lambda_i  \mid Z_{i_{\ell}}^{c}\right] \label{eq:R1-2} \tag{R1-2}\\
& +\mathbb{E}\left[\left(\widehat{\ccmo}_{i \ell} -\widebar \ccmo_i\right)\left(1- K_{h}\left(T_{i}-t^{\prime}\right)\bar \lambda_i  \right) \mid Z_{i_{\ell}}^{c}\right] \label{eq:MR-1} \tag{MR-1} \\
& -K_h\left(T_i-t\right)\left(\widehat{\cmo}_{i \ell}-\widebar{\cmo}_i\right) \bar \lambda_i \bar r_i -\mathbb{E}\left[K_h\left(T_i-t\right)\left(\widehat{\cmo}_{i \ell}-\widebar{\cmo}_i\right) \bar \lambda_i \bar r_i \mid Z_{i_{\ell}}^c\right] \label{eq:R1-3} \tag{R1-3} \\
& + K_h\left(T_i-t^{\prime}\right)\left(\widehat{\cmo}_{i \ell}-\widebar{\cmo}_i\right) \bar \lambda_i-\mathbb{E}\left[K_h\left(T_i-t^{\prime}\right)\left(\widehat{\cmo}_{i \ell}-\widebar{\cmo}_i\right) \bar \lambda_i \mid Z_{i_i}^c\right] \label{eq:R1-4} \tag{R1-4} \\
& + \mathbb{E}\left[\left(\widehat{\cmo}_{i \ell}-\widebar{\cmo}_i\right)\left\{K_h\left(T_i-t^{\prime}\right) \bar \lambda_i-K_h\left(T_i-t\right) \bar \lambda_i \bar r_i\right\} \mid Z_{i_{\ell}}^c\right] \label{eq:MR-2} \tag{MR-2} \\
 & + K_{h}\left(T_{i}-t^{\prime}\right)\left(\hat{\lambda}_{i \ell}-\bar \lambda_i  \right) \left(\widebar\cmo_i - \widebar \ccmo_i\right) -\mathbb{E}\left[K_{h}\left(T_{i}-t^{\prime}\right)\left(\hat{\lambda}_{i \ell}-\bar \lambda_i \right) \left( \widebar\cmo_i - \widebar \ccmo_i \right) \mid Z_{i_{\ell}}^{c}\right] \label{eq:R1-5} \tag{R1-5}\\
& +\mathbb{E}\left[K_{h}\left(T_{i}-t^{\prime}\right)\left(\hat{\lambda}_{i \ell}-\bar \lambda_i  \right)\left\{\widebar\cmo_i-\widebar \ccmo_i\right\} \mid Z_{i_{\ell}}^{c}\right] \label{eq:MR-3} \tag{MR-3}\\
&+K_{h}\left(T_{i}-t\right)\left(\hat{r}_{i \ell}-\bar r_i\right) \bar{\lambda}_i\left(Y_{i}-\widebar\cmo_i\right)-\mathbb{E}\left[K_{h}\left(T_{i}-t\right)\left(\hat{r}_{i \ell}-\bar r_i\right) \bar{\lambda}_i\left(Y_{i}-\widebar\cmo_i\right) \mid Z_{i_{\ell}}^{c}\right] \label{eq:R1-6} \tag{R1-6}\\
& + \mathbb{E}\left[K_{h}\left(T_{i}-t\right)\left(\hat{r}_{i \ell}-\bar r_i\right) \bar{\lambda}_i\left(Y_{i}-\widebar\cmo_i\right) \mid Z_{i_{\ell}}^{c}\right] \label{eq:MR-4} \tag{MR-4}\\
& +K_{h}\left(T_{i}-t\right)\left(\hat{\lambda}_{i \ell}-\bar{\lambda}_i\right) \bar r_i\left(Y_{i}-\widebar\cmo_i\right)-\mathbb{E}\left[K_{h}\left(T_{i}-t\right)\left(\hat{\lambda}_{i \ell}-\bar{\lambda}_i\right) \bar r_i\left(Y_{i}-\widebar\cmo_i\right) \mid Z_{i_{\ell}}^{c}\right]  \label{eq:R1-7} \tag{R1-7}\\
&+\mathbb{E}\left[K_{h}\left(T_{i}-t\right)\left(\hat{\lambda}_{i \ell}-\bar{\lambda}_i\right) \bar r_i\left(Y_{i}-\widebar\cmo_i\right) \mid Z_{i_{\ell}}^{c}\right] \label{eq:MR-5} \tag{MR-5}\\
&- K_{h}\left(T_{i}-t\right) \left( \hat r_{i \ell} - \bar r_{i } \right) \left( \hat \lambda_{i \ell} - \bar \lambda_{i } \right) \left( \widehat \cmo_{i \ell} - \widebar \cmo_{i } \right) \label{eq:R2-1} \tag{R2-1} \\ 
&  - K_{h}\left(T_{i}-t\right) \left( \hat r_{i \ell} - \bar r_{i } \right) \left( \widehat \cmo_{i \ell} - \widebar \cmo_{i } \right) \bar \lambda_{i } \label{eq:R2-2} \tag{R2-2} \\
& - K_{h}\left(T_{i}-t\right) \left( \hat \lambda_{i \ell} - \bar \lambda_{i } \right) \left( \widehat \cmo_{i \ell} - \widebar \cmo_{i } \right) \bar r_{i } \label{eq:R2-3} \tag{R2-3} \\
&+ K_{h}\left(T_{i}-t\right) \left( \hat r_{i \ell} - \bar r_{i } \right) \left( \hat \lambda_{i \ell} - \bar \lambda_{i} \right) \left( Y_i - \widebar \cmo_{i} \right) \label{eq:R2-4} \tag{R2-4} \\
& + K_{h}\left(T_{i}-t^\prime\right)  \left( \hat \lambda_{i \ell} - \bar \lambda_{i} \right) \left( \widehat \cmo_{i \ell} - \widebar \cmo_{i} \right) \label{eq:R2-5} \tag{R2-5}  \\
& - K_{h}\left(T_{i}-t^\prime\right)  \left( \hat \lambda_{i \ell} - \bar \lambda_{i \ell} \right) \left( \widehat \ccmo_{i \ell} - \widebar \ccmo_{i} \right) \Biggr] \label{eq:R2-6} \tag{R2-6}  \\
\end{align*}

We will now bound \eqref{eq:R1-1}-\eqref{eq:R1-7}, \eqref{eq:R2-1}-\eqref{eq:R2-6} and \eqref{eq:MR-1}-\eqref{eq:MR-5}. The statements in the following hold for $i \in I_{\ell}, \ell \in\{1, \ldots, L\}$, and for all $t, t^\prime$.

\paragraph{Bounding the residuals \eqref{eq:R1-1}-\eqref{eq:R1-7}} The remainder terms \ref{eq:R1-1} to \ref{eq:R1-7} are stochastic equicontinuous terms that are controlled to be $o_p(1)$ by the mean-square consistency conditions in Assumption \ref{assum:convergence} and cross-fitting. 

For \eqref{eq:R1-1}, define $\Delta_{i \ell}=\widehat{\ccmo}_{i \ell}-\widebar{\ccmo}_i-\mathbb{E}\left[\widehat{\ccmo}_{i \ell}-\widebar{\ccmo}_i \mid Z^{\ell}\right]$. By construction and independence of $Z^{\ell}$ and $Z_i$ for $i \in I_{\ell}, \mathbb{E}\left[\Delta_{i \ell} \mid Z^{\ell}\right]=0$ and $\mathbb{E}\left[\Delta_{i \ell} \Delta_{j \ell} \mid Z^{\ell}\right]=0$ for $i, j \in I_{\ell}$. By Assumptions \ref{assum:overlap} and \ref{assum:convergence},

\begin{equation*}
h^{d_T} \mathbb{E}\left[\Delta_{i \ell}^2 \mid Z^{\ell}\right]=O_p\left(h^{d_T} \int_{\mathcal{X}}\left(\widehat{\ccmo}_{\ell}(t, t^\prime, x)-\widebar{\ccmo}(t, t^\prime, x)\right)^2 f_X(x) d x\right)=o_p(1)    
\end{equation*}
\begin{align*}
\mathbb{E}\left[\left(\sqrt{\frac{h^{d_T}}{n}} \sum_{i \in I_{\ell}} \Delta_{i \ell}\right)^2 \mid Z^{\ell}\right]=\frac{h^{d_T}}{n} \sum_{i \in I_{\ell}} \mathbb{E}\left[\Delta_{i \ell}^2 \mid Z^{\ell}\right]&=O_p\left(h^{d_T} \int_{\mathcal{X}}\left(\widehat{\ccmo}_{\ell}(t, t^\prime, x)-\widebar{\ccmo}(t, t^\prime, x)\right)^2 f_X(x) d x\right)\\
&=o_p(1)
\end{align*}

The conditional Markov's inequality implies that $\sqrt{h^{d_T} / n} \sum_{i \in I_{\ell}} \Delta_{i \ell}=o_p(1)$.

The analogous results hold for \eqref{eq:R1-2}, \eqref{eq:R1-3}, \eqref{eq:R1-4} due to convergence Assumption \ref{assum:convergence} and the boundedness Assumption \ref{assum:regularity}. For \eqref{eq:R1-5}, \eqref{eq:R1-6} and \eqref{eq:R1-7}, a standard algebra using change of variables, a Taylor expansion, the mean value theorem, and Assumption \ref{assum:regularity} yields for example for $\Delta_{i \ell}=K_{h}\left(T_{i}-t^{\prime}\right)\left(\hat{\lambda}_{i \ell}-\bar \lambda_i  \right) \left(\widebar\cmo_i - \widebar \ccmo_i\right) -\mathbb{E}\left[K_{h}\left(T_{i}-t^{\prime}\right)\left(\hat{\lambda}_{i \ell}-\bar \lambda_i \right) \left( \widebar\cmo_i - \widebar \ccmo_i \right) \mid Z_{i_{\ell}}^{c}\right]$ in \eqref{eq:R1-5}:

\begin{align}
& h^{d_T} \mathbb{E}\left[\Delta_{i \ell}^2 \mid Z^{\ell}\right] \nonumber \\
& \leq h^{d_T} \mathbb{E}\left[K_h\left(T_i-t^\prime\right)^2\left(\hat{\lambda}_{i \ell}-\bar \lambda_i \right)^2\left(\widebar{\cmo}_i-\widebar{\ccmo}_i\right)^2 \mid Z^{\ell}\right] \nonumber \\
&= h^{d_T} \int_{\mathcal{X}, \mathcal{M}, \mathcal{T}} K_h(s-t^\prime)^2\left(\hat{\lambda}_{\ell}(t^\prime, x)-\bar \lambda(t^\prime, x)  \right)^2 \mathbb{E}\left[(\widebar{\cmo}(t, x, m)-\widebar{\ccmo}(t, t^\prime, x))^2 \mid T=s, X=x, M=m\right] \nonumber \\
& \times f_{T X M}(s, x, m) d s d x d m \nonumber \\
&= \int_{\mathcal{X}, \mathcal{M}} \int_{\mathcal{R}^{d_T}} \Pi_{j=1}^{d_T} k\left(u_j\right)^2\left(\hat{\lambda}_{\ell}(t^\prime, x)-\bar \lambda(t^\prime, x)  \right)^2 \mathbb{E}\left[(\widebar{\cmo}(t, x, m)-\widebar{\ccmo}(t, t^\prime, x))^2 \mid T=t^\prime+u h, X=x, M=m\right] \nonumber \\
& \times f_{T X M}(t^\prime+u h, x, m) d u d x d m \nonumber \\
&= \int_{\mathcal{X}, \mathcal{M}} \int_{\mathcal{R}^{d_T}}\left(\mathbb{E}\left[(\widebar{\cmo}(t, x, m)-\widebar{\ccmo}(t, t^\prime, x))^2 \mid T=t^\prime, X=x\right] \right. \nonumber  \\ 
& +\left. \left.\sum_{j=1}^{d_T} u_j h \frac{\partial}{\partial t_j} \mathbb{E}\left[(\widebar{\cmo}(t, x, m)-\widebar{\ccmo}(t, t^\prime, x))^2 \mid T=t^\prime, X=x\right]\right|_{t=\bar{t}}\right) \nonumber \\
& \times\left(f_{T X M}(t^\prime, x, m)+\left.\sum_{j=1}^{d_T} u_j h \frac{\partial}{\partial t_j} f_{T X M}(t^\prime, x, m)\right|_{t=\grave{t}}\right) \Pi_{j=1}^{d_T} k\left(u_j\right)^2 d u\left(\hat{\lambda}_{\ell}(t^\prime, x)-\bar \lambda(t^\prime, x)  \right)^2 d x d m \nonumber \\
&= O_p\left(\int_{\mathcal{X}, \mathcal{M}}\left(\hat{\lambda}_{\ell}(t^\prime, x)-\bar \lambda(t^\prime, x)  \right)^2 f_{T X M}(t^\prime, x, m) d x d m \right), \label{eq:kernel_difference_bounding} 
\tag{KD}
\end{align}

where $\bar{t}$ and $\grave{t}$ are between $t$ and $t+u h$. So $h^{d_T} \mathbb{E}\left[\Delta_{i \ell}^2 \mid Z^{\ell}\right]=o_p(1)$ by Assumption \ref{assum:convergence}. The conditional Markov's inequality implies that $\sqrt{h^{d_T} / n} \sum_{i \in I_{\ell}} \Delta_{i \ell}=o_p(1)$.

We have then proven that all residuals  $\eqref{eq:R1-1}=o_p(1)$ up until $\eqref{eq:R1-7}=o_p(1)$.

\paragraph{Bounding the error differences \eqref{eq:R2-1}-\eqref{eq:R2-6}} The second-order remainder terms \eqref{eq:R2-1}-\eqref{eq:R2-6} are controlled by Assumption \ref{assum:convergence}.2. All of these terms involve the product of two or more errors and can be addressed in a similar manner. We present a detailed proof for \eqref{eq:R2-6}, with the same approach applicable to the remaining terms.

For \eqref{eq:R2-6},
$$
\begin{aligned}
\mathbb{E} & {\left[\left|\sqrt{h^{d_T} / n_{\ell}} \sum_{i \in I_{\ell}} K_h\left(T_i-t^\prime \right) \left(\hat{\lambda}_{i \ell}-\bar \lambda_i \right)\left(\widebar{\ccmo}_i-\widehat{\ccmo}_{i \ell}\right)\right| \mid Z^{\ell}\right] } \\
\leq & \sqrt{n_{\ell} h^{d_T}} \int_{\mathcal{X}, \mathcal{T}}\left|\left(\hat{\lambda}_{\ell}(t^\prime, x)-\bar \lambda(t^\prime, x)  \right)\left(\widebar{\ccmo}(t, t^\prime, x)-\widehat{\ccmo}_{\ell}(t,t^\prime, x)\right)\right| K_h(s-t^\prime) f_{T X}(s, x) d s d x \\
\leq & \sqrt{n_{\ell} h^{d_T}}\left(\int_{\mathcal{X}, \mathcal{T}}\left(\hat{\lambda}_{\ell}(t^\prime, x)-\bar \lambda(t^\prime, x)  \right)^2 K_h(s-t^\prime) f_{T X}(s, x) d s d x\right)^{1 / 2} \\
& \times\left(\int_{\mathcal{X}, \mathcal{T}}\left(\widebar{\ccmo}(t, t^\prime, x)-\widehat{\ccmo}_{\ell}(t, t^\prime, x)\right)^2 K_h(s-t^\prime) f_{T X}(s, x) d s d x\right)^{1 / 2} \\
= & \sqrt{n_{\ell} h^{d_T}}\left(\int_{\mathcal{X}}\left(\hat{\lambda}_{\ell}(t^\prime, x)-\bar \lambda(t^\prime, x)  \right)^2 f_{T X}(t, x) d x\right)^{1 / 2}\left(\int_{\mathcal{X}}\left(\widehat{\ccmo}_{\ell}(t, t^\prime, x)-\widebar{\ccmo}(t, t^\prime, x)\right)^2 f_{T X}(t, x) d x\right)^{1 / 2} \\
& +o_p\left(h^2\right) \\
= & o_p(1)
\end{aligned}
$$
by Cauchy-Schwartz inequality, Assumption \ref{assum:convergence}.2, and an application of Lemma \eqref{lemma:kernel_smoothing}. So  $\eqref{eq:R2-6}=o_p(1)$ follows by the conditional Markov's and triangle inequalities.

\paragraph{Bounding \eqref{eq:MR-1}-\eqref{eq:MR-5}} The remainder terms \ref{eq:MR-1}-\ref{eq:MR-5} are the key to multiply robust inference. Note that in the binary treatment case when $K_h\left(T_i-t\right)$ is replaced by $1\left\{T_i=t\right\}$, the sum of those terms is zero because $\psi$ is the Neyman-orthogonal influence function, under correct specification $\widebar{\cmo}=\cmo$, $\widebar{\ccmo}=\ccmo$, $\bar \lambda = \lambda$ and $\bar{r}=r$. In our continuous treatment case, the Neyman orthogonality holds as $h \rightarrow 0$.
\\

For \eqref{eq:MR-1}, we write
$$
\begin{aligned}
& \sqrt{n h^{d_{T}}} \mathbb{E}\left[\left(\widehat{\ccmo}_{i \ell} -\widebar \ccmo_i\right)\left(1- K_{h}\left(T_{i}-t^{\prime}\right)\bar \lambda_i  \right) \mid Z_{i_{\ell}}^{c}\right] \\
& =\sqrt{n h^{d_{T}}} \int_{\mathcal{X}, \mathcal{T}}\left(\widehat{\ccmo}\left(t, t^{\prime}, x\right)-\widebar \ccmo(t, t^{\prime}, x)\right)\left(1- K_{h}\left(s-t^{\prime}\right) \bar \lambda \left(t^{\prime}, x\right) \right) f_{TX}\left(s, x\right) d s d x \\
& =\sqrt{n h^{d_{T}}} \int_{\mathcal{X}}\left(\widehat{\ccmo}\left(t, t^{\prime}, x \right)-\widebar \ccmo \left(t, t^{\prime}, x\right) \right)\left(1-\left\{\int_{\mathcal{T}} K_{h}\left(s-t^{\prime}\right) \txdensity\left(s \mid x\right) d s\right\} \bar \lambda \left(t^{\prime}, x\right)  \right)f_X\left(x\right) dx \\
& = \sqrt{n h^{d_{T}}} \int_{\mathcal{X}}\left(\widehat{\ccmo}\left(t, t^{\prime}, x\right)-\widebar \ccmo \left(t, t^{\prime}, x\right) \right) \left(1-\txdensity\left(t^{\prime} \mid x\right) \bar \lambda \left(t^{\prime}, x\right) \right) f_X\left(x\right) d x \\
& \quad+\sqrt{n h^{d_{T}}} \left(\int_{\mathcal{X}}\left(\widehat{\ccmo}\left(t, t^{\prime}, x\right)-\widebar \ccmo \left(t, t^{\prime}, x\right) \right)  \bar \lambda \left(t^{\prime}, x\right)  f_X\left(x\right) dx \right) \times O\left(h^{2}\right) \\
& =  \sqrt{n h^{d_{T}}} \mathbb{E}\left[ \left(\widehat{\ccmo}_{i \ell}-\widebar \ccmo_i \right) \left(1-{\txdensity}_{i \ell} \bar \lambda_i \right) \mid Z_{i_{\ell}}^{c} \right] + o_{p}(1)
\end{aligned}
$$
where the second last equality follows from Lemma \ref{lemma:kernel_smoothing}, and the last equality follows from the definition of $\bar \lambda_i $, $n h^{d_{T}+4} \rightarrow C_{h}$, Assumption \ref{assum:convergence}, and Assumption \ref{assum:regularity} along with an application of the Cauchy-Schwartz inequality.

Now, we use the multiply robust assumptions to see that the remaining expectation term is 0 if $\lambda = \bar \lambda$, and otherwise if $\lambda \neq \bar \lambda$, then by Assumption \ref{assum:nuisance}, $\sqrt{n h^{d_T}}\left\|\widehat{\ccmo}_{\ell}-\widebar{\ccmo}\right\|_{{X}} = o_{p}(1)$. Therefore $\eqref{eq:MR-1}=o_{p}(1)$.

For \eqref{eq:MR-2}, we write it as

\begin{equation*}
\sqrt{n h^{d_{T}}} \mathbb{E}\left[\left(\widehat{\cmo}_{i \ell}-\widebar\cmo_i\right)\left\{K_{h}\left(T_{i}-t^{\prime}\right) \bar r_i \right\}\right] 
- \sqrt{n h^{d_{T}}} \mathbb{E}\left[\left(\widehat{\cmo}_{i \ell}-\widebar\cmo_i\right)\left\{K_{h}\left(T_{i}-t\right) \bar{\lambda}_i \bar r_i\right\}\right] 
\end{equation*}


The left term with $t^\prime$ can be written as
$$
\begin{aligned}
& \sqrt{n h^{d_{T}}} \mathbb{E}\left[\left(\widehat{\cmo}_{i \ell}-\widebar\cmo_i\right)\left\{K_{h}\left(T_{i}-t^{\prime}\right) \bar \lambda_i \right\}\right] \\
& =\sqrt{n h^{d_{T}}} \int_{\mathcal{M} \times \mathcal{X}}\left(\widehat{\cmo}\left(t, m, x\right)-\widebar\cmo(t, m, x)\right)  \bar \lambda(t^\prime, x) \left\{\int_{\mathcal{T}} K_{h}\left(s-t^{\prime}\right) \txmdensity\left(s \mid  x, m \right) d s \right\} f_{MX}\left(m, x\right) d m d x
\end{aligned}
$$

An application of Lemma \ref{lemma:kernel_smoothing} once again provides
$$
\begin{aligned}
& =\sqrt{n h^{d_{T}}} \int_{\mathcal{M} \times \mathcal{X}}\left(\widehat{\cmo}\left(t, m, x \right)-\widebar\cmo\left(t, m, x \right)\right) \bar \lambda(t^\prime, x) \txmdensity\left(t^{\prime} \mid x, m \right) f_{MX}\left(m, x\right) d m d x \\
& +\sqrt{n h^{d_{T}}} \left(\int_{\mathcal{M} \times \mathcal{X}}\left(\widehat{\cmo}\left(t, m, x \right)-\widebar\cmo\left(t, m, x \right)\right) \bar \lambda(t^\prime, x)  f_{MX}\left(m, x\right) d m d x \right) \times O\left(h^{2}\right)
\end{aligned}
$$

Similarly the right term of \eqref{eq:MR-2} writes as
$$
\begin{aligned}
&\sqrt{n h^{d_{T}}} \int_{\mathcal{M} \times \mathcal{X}}\left(\widehat{\cmo}\left(t, m, x\right)-\widebar\cmo\left(t, m, x\right)\right) \bar{\lambda}\left(t^\prime, x \right) \bar r\left(t, t^\prime, m, x \right) \txmdensity(t \mid x, m) f_{MX}\left(m, x\right) d m d x \\
& +\sqrt{n h^{d_{T}}} \left(\int_{\mathcal{M} \times \mathcal{X}}\left(\widehat{\cmo}\left(t, m, x\right)-\widebar\cmo\left(t, m, x\right)\right) \bar{\lambda} (t^\prime, x) \bar r\left(t, t^\prime, m, x \right)  f_{MX}\left(m, x\right) d m d x \right) \times O\left(h^{2}\right)
\end{aligned}
$$

Therefore, due to boundedness Assumption \ref{assum:regularity} and the convergence Assumption \ref{assum:convergence}, we obtain:

\begin{equation*}
    \eqref{eq:MR-2} = \mathbb{E}\left[ \left(\widehat{\cmo}_{i \ell}-\widebar\cmo_i \right) \bar \lambda_i \left\{{\txmdensity}\left(t^{\prime} \mid x_i, \psi_i\right) - \bar r_i \txmdensity(t \mid x_i, \psi_i) \right\} \mid Z_{i_{\ell}}^{c} \right] + o_{p}(1)
\end{equation*}

We use the multiply robust assumptions to see that the remaining expectation term is 0 when $\txmdensity = \widebar \txmdensity$, and otherwise by Assumption \ref{assum:nuisance}, $\sqrt{n h^{d_T}}\left\|\widehat{\cmo}_{\ell}-\widebar{\cmo}\right\|_{{tXM}} = o_{p}(1)$. Therefore $\eqref{eq:MR-2}=o_{p}(1)$.

For \eqref{eq:MR-3}, we note that
$$
\begin{aligned}
& \sqrt{n h^{d_{T}}} \mathbb{E}\left[K_{h}\left(T_{i}-t^{\prime}\right)\left(\hat{\lambda}_{i \ell}-\bar \lambda_i \right)\left\{\widebar\cmo_i-\widebar \ccmo_i \right\} \mid Z_{i_{\ell}}^{c}\right] \\
& =\sqrt{n h^{d_{T}}} \int K_{h}\left(s-t^{\prime}\right)\left(\hat{\lambda}(t^\prime, x)-\bar \lambda (t^\prime, x) \right)\left\{\widebar\cmo (t, m, x) -\widebar \ccmo (t, t^\prime, x) \right\} f_{TMX}\left(s, m, x \right) ds dm dx \\
& =\sqrt{n h^{d_{T}}} \int\left\{\int K_{h}\left(s-t^{\prime}\right) \txmdensity\left(s \mid x,  m\right) d s \right\}\left(\hat{\lambda}(t^\prime, x)-\bar \lambda(t^\prime, x) \right) \\
& \times\left\{\widebar\cmo(t, m, x)-\widebar \ccmo (t, t^\prime, x)\right\} f_{MX}\left(m, x\right) d m dx \\
& \sqrt{n h^{d_{T}}} \int\left(\hat{\lambda}(t^\prime, x)-\bar \lambda(t^\prime, x) \right)\left\{\widebar\cmo(t, m, x)-\widebar \ccmo(t, t^\prime, x)\right\} f_{TMX}\left(t^{\prime}, m, x\right) d m d x \\
& +\sqrt{n h^{d_{T}}} \left( \int\left(\hat{\lambda}(t^\prime, x)-\bar \lambda(t^\prime, x) \right)\left\{\widebar\cmo(t, m, x)-\widebar \ccmo(t, t^\prime, x)\right\}  f_{MX}\left(m, x\right) d m d x \right)  O\left(h^{2}\right).
\end{aligned}
$$

by application of Lemma \ref{lemma:kernel_smoothing}. The term

\begin{equation*}
\sqrt{n h^{d_{T}}} \left( \int\left(\hat{\lambda}(t^\prime, x)-\bar \lambda(t^\prime, x) \right)\left\{\widebar\cmo(t, m, x)-\widebar \ccmo(t, t^\prime, x)\right\}  f_{MX}\left(m, x\right) d m d x \right)  O\left(h^{2}\right) = o_{p}(1)   
\end{equation*}

by an application of Cauchy-Schwartz and the use of Assumption \ref{assum:convergence}. Next note that:

\begin{equation}
\widebar \cmo-\widebar\ccmo = \widebar \cmo- \cmo - (\widebar \ccmo - \ccmo) + \cmo - \ccmo
\label{eq:decomp_cmo} 
\end{equation}

Therefore with this decomposition and by definition of $\ccmo$,

\begin{align*}
&\sqrt{n h^{d_{T}}} \int\left(\hat{\lambda}(t^\prime, x)-\bar \lambda(t^\prime, x) \right)\left\{\widebar\cmo(t, m, x)-\widebar \ccmo(t, t^\prime, x)\right\} f_{TMX}\left(t^{\prime}, m, x\right) d m d x \\
&=\sqrt{n h^{d_{T}}} \int\left(\hat{\lambda}(t^\prime, x)-\bar \lambda(t^\prime, x) \right)\left\{\widebar\cmo(t, m, x)-\cmo(t, m, x)\right\} f_{TMX}\left(t^{\prime}, m, x\right) d m d x \\
+&\sqrt{n h^{d_{T}}} \int\left(\hat{\lambda}(t^\prime, x)-\bar \lambda(t^\prime, x) \right)\left\{\ccmo(t, t^\prime, x)-\widebar \ccmo(t, t^\prime, x)\right\} f_{TX}\left(t^{\prime}, x\right) d x
\end{align*}

Therefore, 

\begin{align*}
\eqref{eq:MR-3} &= \mathbb{E}\left[\left(\hat{\lambda}_{i \ell}-\bar \lambda_i \right)\left\{\widebar\cmo_i -{\cmo}_{i \ell }\right\} \mid Z_{i_{\ell}}^{c}\right] \\
& + \mathbb{E}\left[\left(\hat{\lambda}_{i \ell}-\bar \lambda_i \right)\left\{{\ccmo}_{i \ell}-\widebar \ccmo_i\right\} \mid Z_{i_{\ell}}^{c}\right] +  o_{p}(1) 
\end{align*}

Using the multiply robust assumptions, the first remaining expectation term is 0 when $\cmo = \widebar \cmo$, and otherwise by Assumption \ref{assum:nuisance}, $\sqrt{n h^{d_T}}\left\|\widehat{\lambda}_{\ell}-\widebar{\lambda}\right\|_{{tX}} = o_{p}(1)$. The second remaining expectation is 0 when $\ccmo = \widebar \ccmo$, and otherwise by Assumption \ref{assum:nuisance}, $\sqrt{n h^{d_T}}\left\|\widehat{\lambda}_{\ell}-\widebar{\lambda}\right\|_{{tX}} = o_{p}(1)$. Therefore $\eqref{eq:MR-3}=o_{p}(1)$.

With similar expansions on the kernel smoothing, we can show that:

\begin{equation*}
\eqref{eq:MR-4} = \mathbb{E}\left[\left(\hat{r}_{i \ell}-\bar r_i\right)  {\txmdensity}_{i \ell} \bar{\lambda}_i\left({\cmo}_{i \ell} -\widebar\cmo_{i}\right) \mid Z_{i_{\ell}}^{c}\right] +  o_{p}(1) 
\end{equation*}

the remaining expectation term is 0 when $\cmo = \widebar \cmo$, and otherwise by Assumption \ref{assum:nuisance}, $\sqrt{n h^{d_T}}\left\|\widehat{\txmdensity}_{\ell}-\widebar{\txmdensity}\right\|_{{tXM}} = o_{p}(1)$. Thus $\eqref{eq:MR-4}=o_{p}(1)$.

and

\begin{equation*}
    \eqref{eq:MR-5} = \mathbb{E}\left[\left(\hat{\lambda}_{i \ell}-\bar \lambda_i\right) {\txmdensity}_{i \ell }\bar{r}_i\left({\cmo}_{i \ell}-\widebar\cmo_{i}\right) \mid Z_{i_{\ell}}^{c}\right] +  o_{p}(1) 
\end{equation*}

the remaining expectation term is 0 when $\cmo = \widebar \cmo$, and otherwise by Assumption \ref{assum:nuisance}, $\sqrt{n h^{d_T}}\left\|\widehat{\lambda}_{\ell}-\widebar{\lambda}\right\|_{{tXM}} = o_{p}(1)$. Thus $\eqref{eq:MR-5}=o_{p}(1)$.

\paragraph{Assembling all terms}

By the triangle inequality, we obtain the asymptotically linear representation

\begin{equation}
\sqrt{n h^{d_T}} n^{-1} \sum_{i=1}^n\left(\hat{\psi}\left(Z_i, \mr_{t, t^\prime}, \widehat{\cmo}_{i \ell}, \widehat{\ccmo}_{i \ell}, \widehat{\txdensity}_{i \ell}, \widehat{\txmdensity}_{i \ell} \right)-\psi\left(Z_i, \mr_{t, t^\prime}, \widebar{\cmo}_{i}, \widebar{\ccmo}_{i}, \widebar{\txdensity}_{i}, \widebar{\txmdensity}_{i}\right)\right)=o_p(1).   
\end{equation}


\subsubsection{Asymptotic normality}

The proof for asymptotic normality follows from an application of the Lyapunov Central Limit theorem to the terms $\sqrt{n h^{d_{T}}} n^{-1} \psi\left(Z_{i}, \widebar \txdensity, \widebar \txmdensity, \widebar \cmo, \widebar \ccmo, \mr_{t, t^{\prime}}\right)$. We will compute the expectation and variance of:

$$\sqrt{n h^{d_{T}}} n^{-1} \psi\left(Z_{i} ; \txdensity, \txmdensity, \cmo, \ccmo, \mr_{t, t^{\prime}}\right)$$

\paragraph{Calculation for $\mathrm{B}_{t,t^\prime}$ and $\mu_{i}$}

Given

\begin{align}
\psi\left(Z_{i} ; \widebar{\txdensity}, \widebar{\txmdensity}, \widebar{\cmo}, \widebar{\ccmo}, \mr_{t, t^{\prime}}\right) & =\frac{K_{h}\left(T_{i}-t\right) \widebar \txmdensity\left(t'\mid M_{i}, X_{i}\right)}{\widebar \txmdensity\left(t \mid M_{i}, X_{i}\right) \widebar \txdensity\left(t' \mid X_{i}\right)}\left\{Y_{i}-\widebar \cmo(X_i, \psi_i, t) \right\} \label{eq:moment_out} \tag{R2-1} \\
& +\frac{K_{h}\left(T_{i}-t^{\prime}\right)}{\widebar \txdensity\left(t^{\prime} \mid X_{i}\right)}\left\{\widebar \cmo(X_i, \psi_i, t) -\widebar \ccmo_i\right\} \label{eq:moment_cmo} \tag{R2-2} \\
&+\widebar \ccmo_i-\mr_{t, t^{\prime}} \label{eq:delta_CCMO} \tag{$\Delta$-3}
\end{align}

 We start by focusing on $\frac{K_{h}(T-t) \widebar \txmdensity\left(t^\prime \mid M, X\right)}{\widebar \txmdensity(t \mid M, X) \widebar \txdensity(t^\prime \mid X)}\{Y-\widebar \cmo(t, M, X)\}+\frac{K_{h}\left(T-t^{\prime}\right)}{\widebar \txdensity\left(t^{\prime} \mid X\right)}\{\widebar \cmo(X, M, t )-\widebar \ccmo\left(t, t^{\prime}, X\right)\}$.
We start by computing the expectation of each the individual terms one at a time.

\paragraph{Expanding the outcome residual \eqref{eq:moment_out}}

$$
\begin{aligned}
& \mathbb{E}\left[K_{h}(T-t) \bar \lambda(t^\prime, X) \bar r(t, t^\prime, M, X )\widebar \txmdensity\left(t^{\prime} \mid M, X\right)\{Y- \widebar \cmo(t, M, X)\}\right] \\
= & \mathbb{E}\left\{ \bar \lambda(t^\prime, X) \bar r(t, t^\prime, M, X )  \mathbb{E}\left[K_{h}(T-t) (\cmo(T, M, X, )-\widebar \cmo(t, M, X)) \mid X, M\right]\right\}
\end{aligned}
$$

The inner product further expands as follows,

$$
\begin{aligned}
& \mathbb{E}\left[K_h(T-t)(\cmo(T, M, X)-\widebar{\cmo}(t, M, X)) \mid X, M\right] \\
& =\int_{\mathcal{T}} K_h(s-t)(\cmo(s, M, X)-\widebar{\cmo}(t, M, X)) \txmdensity(s \mid X, M) d s \\
& =\int_{\mathcal{R}^{d_T}} k(u)(\cmo(t+u h, M, X)-\widebar{\cmo}(t, M, X)) \txmdensity(t+u h \mid X, M) d u \\
& =\int_{\mathcal{R}^{d_T}}\left(\cmo(t, M, X)-\widebar{\cmo}(t,M, X)+\sum_{j=1}^{d_T} h u_j \partial_{t_j} \cmo(t, M, X)+\frac{h^2}{2} u_j^2 \partial_{t_j}^2 \cmo(t, M, X) \right. \\
& \left. +\frac{h^2}{2} \sum_{l=1, l \neq j}^{d_T} u_j u_l \partial_{t_j} \partial_{t_l} \cmo(t, M, X)\right) \left(\txmdensity(t \mid X, M)+\sum_{j=1}^{d_T} h u_j \partial_{t_j} \txmdensity(t \mid X, M)+\frac{h^2}{2} u_j^2 \partial_{t_j}^2 \txmdensity(t \mid X, M) \right. \\
& \left. +\frac{h^2}{2} \sum_{l=1, l \neq j}^{d_T} u_j u_l \partial_{t_j} \partial_{t_l} \txmdensity(t \mid X, M)\right) k\left(u_1\right) \cdots k\left(u_{d_T}\right) d u_1 \cdots d u_{d_T}+O\left(h^3\right) \\
& =(\cmo(t, M, X)-\widebar{\cmo}(t, M, X)) \txmdensity(t \mid X, M)+h^2 \sum_{j=1}^{d_T}\left(\partial_{t_j} \cmo(t, M, X) \partial_{t_j} \txmdensity(t \mid X, M)\right. \\
& \left.+\frac{1}{2} \partial_{t_j}^2 \cmo(t, M, X) \txmdensity(t \mid X, M) +(\cmo(t, M, X)-\widebar{\cmo}(t, M, X)) \frac{1}{2} \partial_{t_j}^2 \cmo(t, M, X)\right) \int_{-\infty}^{\infty} u^2 k(u) d u+O\left(h^3\right)
\end{aligned}
$$

for all $X, M$ in respective range. Inserting this back into the original expectation we get, 

\begin{align}
&\mathbb{E}\left[\bar \lambda(t^\prime, X) \bar r(t, t^\prime, M, X )  \mathbb{E}\left[K_h(T-t)(\cmo(t, M, X)-\widebar{\cmo}(X, M, t)) \mid X\right]\right]  \nonumber \\
= & \mathbb{E}\left[(\cmo(t, M, X)-\widebar{\cmo}(t, M, X))  \bar \lambda(t^\prime, X) \bar r(t, t^\prime, M, X ) \txmdensity(t \mid X, M)  \right]  \nonumber \\
& +h^2 \sum_{j=1}^{d_T} \mathbb{E}\left[\partial_{t_j} \cmo(t, M, X) \partial_{t_j} \txmdensity(t \mid X, M) \bar \lambda(t^\prime, X) \bar r(t, t^\prime, M, X ) \nonumber \right. \\
& +\partial_{t_j}^2 \cmo(t, M, X)  \frac{\bar \lambda(t^\prime, X) \bar r(t, t^\prime, M, X ) \txmdensity(t \mid X, M)}{2}  \nonumber \\
& \left.+(\cmo(t, M, X)-\widebar{\cmo}(t, M, X))\partial_{t_j}^2 \cmo(t, M, X) \frac{\bar \lambda(t^\prime, X) \bar r(t, t^\prime, M, X )}{2 }\right] \int_{-\infty}^{\infty} u^2 k(u) d u+O\left(h^3\right) \nonumber \\
= & \eqref{eq:delta_0}  +h^2 \mathrm{B}_{t,t^\prime}^{\text{1}} +O\left(h^3\right)
\label{eq:double_robust_moment_approximation}
\end{align}

where 

\begin{equation}
\Delta{\text{-1}} = \mathbb{E}\left[(\cmo(t, M, X)-\widebar{\cmo}(t, M, X))  \bar \lambda(t^\prime, X) \bar r(t, t^\prime, M, X ) \txmdensity(t \mid X, M)\right]  
\label{eq:delta_0} \tag{$\Delta$-1}
\end{equation}

\begin{align*}
    \mathrm{B}_{t,t^\prime}^{1} &=\sum_{j=1}^{d_T} \mathbb{E}\left[\partial_{t_j} \cmo(t, M, X) \partial_{t_j} \txmdensity(t \mid X, M) \bar \lambda(t^\prime, X) \bar r(t, t^\prime, M, X ) \nonumber \right. \\
    & +\partial_{t_j}^2 \cmo(t, M, X)  \frac{\bar \lambda(t^\prime, X) \bar r(t, t^\prime, M, X ) \txmdensity(t \mid X, M)}{2}  \nonumber \\
    & \left.+(\cmo(t, M, X)-\widebar{\cmo}(t, M, X))\partial_{t_j}^2 \cmo(t, M, X) \frac{\bar \lambda(t^\prime, X) \bar r(t, t^\prime, M, X )}{2 }\right] \int_{-\infty}^{\infty} u^2 k(u) d u
\end{align*}


\paragraph{Expanding the conditional mean outcome residual \eqref{eq:moment_cmo}}
$$
\begin{aligned}
& \mathbb{E}\left[K_{h}\left(T-t^{\prime}\right) \bar \lambda(t^\prime, X) \left\{\widebar \cmo(t, M, X)- \widebar \ccmo\left(t, t^{\prime}, X\right)\right\}\right] \\
= & \mathbb{E}\left[ \left(\widebar \cmo(t, M, X)- \widebar\ccmo\left(t, t^{\prime}, X\right) \right) \bar \lambda(t^\prime, X) \mathbb{E}\left[K_{h}\left(T-t^{\prime}\right) \mid X, M\right]\right]
\end{aligned}
$$

As in Lemma \ref{lemma:kernel_smoothing}, the inner expectation can be written as

\begin{equation}
\mathbb{E}\left[K_{h}\left(T-t^{\prime}\right) \mid X, M\right] = \txmdensity\left(t^{\prime} \mid X, M\right)+\frac{1}{2} h^{2} \int u^{2} k(u) d u \sum_{j=1}^{d_{T}} \partial_{t_{j}}^{2} \txmdensity\left(t^{\prime} \mid X, M\right)+O\left(h^{3}\right)
\label{eq:inner_expect_kernel_smoothing}
\end{equation}

Plugging this back into the above expectation
$$
\begin{aligned}
& \mathbb{E}\left[\left(\widebar \cmo(t, M, X)- \widebar\ccmo\left(t, t^{\prime}, X\right) \right) \bar \lambda(t^\prime, X)\left(\txmdensity\left(t^{\prime} \mid X, M\right)+\frac{1}{2} h^{2} \int u^{2} k(u) d u \sum_{j=1}^{h_{d_{T}}} \partial_{t_{j}}^{2} \txmdensity\left(t^{\prime} \mid X, M\right)\right)\right] \\
&+O\left(h^{3}\right) \\
= & \eqref{eq:delta_CMO} +h^2 \mathrm{B}_{t,t^\prime}^{2} +O\left(h^{3}\right).
\end{aligned}
$$

where 
\begin{equation}
 \mathrm{B}_{t,t^\prime}^{2} =  h^{2}\left[\int_{\mathcal{R}^{d_T}} u^{2} k(u) d u\right] \mathbb{E}\left[\left\{\widebar\cmo(t, M, X)-\widebar\ccmo\left(t, t^{\prime}, X\right)\right\} \frac{1}{2} \sum_{j=1}^{h_{d_{T}}} \partial_{t_{j}}^{2} \txmdensity\left(t^{\prime} \mid X, M\right)  \bar \lambda(t^\prime, X)\right]  
\end{equation}

and

\begin{equation}
 \Delta{\text{-2}} =  \mathbb{E}\left[\left(\widebar \cmo(t, M, X)-\widebar\ccmo\left(t, t^{\prime}, X\right)\right) \bar \lambda(t^\prime, X)\txmdensity\left(t^{\prime} \mid X, M\right) \right] \label{eq:delta_CMO} \tag{$\Delta$-2}
\end{equation}.

We use the same reasoning as \eqref{eq:decomp_cmo} to write:

\begin{equation*}
    \eqref{eq:delta_CMO} = \mathbb{E}\left[\left\{\widebar \cmo(t, M, X)-\cmo\left(t, M, X\right)\right\} \frac{\txmdensity\left(t^{\prime} \mid X, M\right)}{\widebar \txdensity\left(t^{\prime} \mid X\right)}\right] + \mathbb{E}\left[\left\{\ccmo\left(t, t^{\prime}, X\right) - \widebar \ccmo(t, t^\prime, X)\right\} \frac{\txdensity\left(t^{\prime} \mid X\right)}{\widebar \txdensity\left(t^{\prime} \mid X\right)}\right]
\end{equation*}

\paragraph{Dealing with \eqref{eq:delta_0}, \eqref{eq:delta_CMO} and \eqref{eq:delta_CCMO} with the MR property}

Let 

\begin{equation}
\Delta =\eqref{eq:delta_0} + \eqref{eq:delta_CMO} + \eqref{eq:delta_CCMO} \tag{$\Delta$} \label{eq:delta}
\end{equation}

Expanding the term \eqref{eq:delta}, 

\begin{align*}
    \Delta = \mathbb{E} & \left[(\cmo(t, M, X)-\widebar{\cmo}(t, M, X)) \frac{\widebar \txmdensity(t^\prime \mid X, M) \txmdensity(t \mid X, M)}{\widebar \txmdensity(t \mid X, M) \widebar \txdensity(t^\prime \mid X)} \right. \\
    & \left\{\widebar \cmo(t, M, X)-\cmo\left(t, M, X\right)\right\} \frac{\txmdensity\left(t^{\prime} \mid X, M\right)}{\widebar \txdensity\left(t^{\prime} \mid X\right)} \\
    & + \left\{\ccmo\left(t, t^{\prime}, X\right) - \widebar \ccmo(t, t^\prime, X) \right\}  \frac{\txdensity\left(t^{\prime} \mid X \right)}{\widebar \txdensity\left(t^{\prime} \mid X\right)}  \\
    & + \left. \widebar \ccmo(t, t^\prime, X)-\mr_{t, t^{\prime}} \right]
\end{align*}

Due to the multiply robust property, \eqref{eq:delta} is zero as stipulated in Assumption \ref{assum:convergence}.2 when one of the conditions is met:

\begin{itemize}
    \item $\widebar{\cmo}=\cmo$ and $\widebar{\ccmo}=\ccmo$
    \item $\widebar{\cmo}=\cmo$ and $\widebar{\txdensity}=\txdensity$
    \item $\widebar{\ccmo}=\ccmo$ and $\widebar{\txmdensity}=\txmdensity$
    \item $\widebar{\txdensity}=\txdensity$ and $\widebar{\txmdensity}=\txmdensity$.
\end{itemize}

\paragraph{Conclusion}

Hence, letting
\begin{equation}
\mathrm{B}_{t,t^\prime} = \mathrm{B}_{t,t^\prime}^{1} + \mathrm{B}_{t,t^\prime}^{2}    
\end{equation}

we have $\mathbb{E}\left[\psi\left(Z_{i} ; \widebar \txdensity, \widebar \txmdensity, \widebar \cmo, \widebar \ccmo, \mr_{t, t^{\prime}}\right)\right]=h^{2} \mathrm{B}_{t,t^\prime} +O\left(h^{3}\right)$. 

Next, we prove the properties of variance.

\paragraph{Calculation for $\mathrm{V}_{t,t^\prime}$ and $s_{n}^{2}$}

From the definition of $s_{n}^{2}$, we have

\begin{align*}
s_{n}^{2}&=\sum_{i=1}^{n} \sigma_{i}^{2}=\sum_{i=1}^{n} \operatorname{var}\left(\sqrt{n h^{d_{T}}} n^{-1} \psi\left(Z_{i} ; \widebar \txdensity, \widebar \txmdensity, \widebar \cmo, \widebar \ccmo, \mr_{t, t^{\prime}}\right)\right)   \\  
&=h^{d_{T}} \operatorname{var}\left(\psi\left(Z_{i} ; \widebar \txdensity, \widebar \txmdensity, \widebar \cmo, \widebar \ccmo, \mr_{t, t^{\prime}}\right)\right)
\end{align*}

To compute the variance, we need to compute the squared expectation and the second moment. The squared expectation can obtained using the previous calculation of the bias:

\begin{align*}
&h^{d_{T}} \mathbb{E}\left[ \psi\left(Z_{i} ; \widebar \txdensity, \widebar \txmdensity, \widebar \cmo, \widebar \ccmo, \mr_{t, t^{\prime}}\right) \right]^2 \\
& = h^{d_{T}} \mathbb{E}\left[\frac{K_{h}\left(T-t\right) \widebar \txmdensity\left(t'\mid M, X\right)}{\widebar \txmdensity\left(t \mid M, X\right) \widebar \txdensity\left(t' \mid X\right)}\left\{Y-\widebar \cmo(t, M, X) \right\} \right. \\
&\left. +\frac{K_{h}\left(T-t^{\prime}\right)}{\widebar \txdensity\left(t^{\prime} \mid X\right)}\left\{\widebar \cmo(t, M, X) -\widebar \ccmo\left(t, t^{\prime}, X\right)\right\} +\widebar \ccmo\left(t, t^{\prime}, X\right) \right]^{2} \\
& =O\left(h^{d_{T}+4}\right)
\end{align*}

Let us then consider computations of the second moment of the following quantity:

\begin{align*}
&h^{d_{T}} \mathbb{E}\left\{ \psi\left(Z_{i} ; \widebar \txdensity, \widebar \txmdensity, \widebar \cmo, \widebar \ccmo, \mr_{t, t^{\prime}}\right)^2 \right\} \\
& = h^{d_{T}} \mathbb{E}\left\{\left[\frac{K_{h}\left(T-t\right) \widebar \txmdensity\left(t'\mid M, X\right)}{\widebar \txmdensity\left(t \mid M, X\right) \widebar \txdensity\left(t' \mid X\right)}\left\{Y-\widebar \cmo(t, M, X) \right\}\right.\right. \\
& \left.\left.+\frac{K_{h}\left(T-t^{\prime}\right)}{\widebar \txdensity\left(t^{\prime} \mid X\right)}\left\{\widebar \cmo(t, M, X) -\widebar \ccmo\left(t, t^{\prime}, X\right)\right\} +\widebar \ccmo\left(t, t^{\prime}, X\right) \right]^{2}\right\} \\
\end{align*}

Examining each of the terms above one by one, the first term can be expanded as

\begin{align}
= & h^{d_{T}} \mathbb{E}\left\{\left[\frac{K_{h}\left(T-t\right) \widebar \txmdensity\left(t'\mid M, X\right)}{\widebar \txmdensity\left(t \mid M, X\right) \widebar \txdensity\left(t' \mid X\right)}\left\{Y-\widebar \cmo(t, M, X) \right\}\right]^{2}\right\} \label{eq:variance_outcome} \tag{V1-1} \\
& +h^{d_{T}} \mathbb{E}\left\{\left[\frac{K_{h}\left(T-t^{\prime}\right)}{\widebar \txdensity\left(t^{\prime} \mid X\right)}\left\{\widebar \cmo(t, M, X) -\widebar \ccmo\left(t, t^{\prime}, X\right)\right\}\right]^{2}\right\} \label{eq:variance_cmo} \tag{V1-2}\\
&+h^{d_{T}} \mathbb{E}\left\{\widebar \ccmo\left(t, t^{\prime}, X\right) \right\} \label{eq:variance_ccmo} \tag{V1-3}
\\
& +2 h^{d_{T}} \mathbb{E}\left\{\left[\frac{K_{h}\left(T-t\right) \widebar \txmdensity\left(t'\mid M, X\right)}{\widebar \txmdensity\left(t \mid M, X\right) \widebar \txdensity\left(t' \mid X\right)}\left\{Y-\widebar \cmo(t, M, X) \right\}\right]\right. \nonumber \\
& \left.\times\left[\frac{K_{h}\left(T-t^{\prime}\right)}{\widebar \txdensity\left(t^{\prime} \mid X\right)}\left\{\widebar \cmo(t, M, X) -\widebar \ccmo\left(t, t^{\prime}, X\right)\right\}\right]\right\}  \label{eq:variance_outcome_cmo} \tag{V1-4} \\
& +2 h^{d_{T}} \mathbb{E}\left\{\widebar \ccmo\left(t, t^{\prime}, X\right) 
 \left[\frac{K_{h}\left(T-t\right) \widebar \txmdensity\left(t'\mid M, X\right)}{\widebar \txmdensity\left(t \mid M, X\right) \widebar \txdensity\left(t' \mid X\right)}\left\{Y-\widebar \cmo(t, M, X) \right\}\right]\right\} \label{eq:variance_outcome_ccmo} \tag{V1-5} \\
& +2 h^{d_{T}} \mathbb{E}\left\{\widebar \ccmo\left(t, t^{\prime}, X\right) 
 \left[\frac{K_{h}\left(T-t^{\prime}\right)}{\widebar \txdensity\left(t^{\prime} \mid X\right)}\left\{\widebar \cmo(t, M, X) -\widebar \ccmo\left(t, t^{\prime}, X\right)\right\}\right]\right\} \label{eq:variance_cmo_ccmo} \tag{V1-6}
\end{align}

We analyze each of these terms part by part

\paragraph{Computing \eqref{eq:variance_outcome}}
$$
\begin{aligned}
& h^{d_{T}} \mathbb{E}\left\{\left[\frac{K_{h}\left(T-t\right) \widebar \txmdensity\left(t'\mid M, X\right)}{\widebar \txmdensity\left(t \mid M, X\right) \widebar \txdensity\left(t' \mid X\right)}\left\{Y-\widebar \cmo(t, M, X) \right\}\right]^{2}\right\} \\
= & h^{d_{T}} \mathbb{E}\left\{\mathbb{E}\left\{\left.\left[\frac{K_{h}\left(T-t\right) \widebar \txmdensity\left(t'\mid M, X\right)}{\widebar \txmdensity\left(t \mid M, X\right) \widebar \txdensity\left(t' \mid X\right)}\left\{Y-\widebar \cmo(t, M, X) \right\}\right]^{2} \right\rvert\, X, M\right\}\right\} \\
= & h^{d_{T}} \mathbb{E}\left\{\frac{\widebar \txmdensity\left(t'\mid M, X\right)}{\widebar \txmdensity\left(t \mid M, X\right) \widebar \txdensity\left(t' \mid X\right)} \mathbb{E}\left\{K_{h}(T-t)^{2}(Y-\widebar \cmo(t, M, X))^{2} \mid X, M\right\}\right\} \\
= & h^{d_{T}} \mathbb{E}\left\{\frac{\widebar \txmdensity\left(t'\mid M, X\right)}{\widebar \txmdensity\left(t \mid M, X\right) \widebar \txdensity\left(t' \mid X\right)}  \times \mathbb{E}\left\{K_{h}(T-t)^{2} \mathbb{E}\left\{(Y-\widebar \cmo(t, M, X))^{2} \mid X, M, T=t\right\} \mid X, M\right\}\right\} 
\end{aligned}
$$

With a computation similar to \eqref{eq:kernel_difference_bounding} the inner
expectation can be written as
$$
\begin{aligned}
& h^{d_{T}} \mathbb{E}\left\{K_{h}(T-t)^{2} \mathbb{E}\left\{(Y-\widebar \cmo(t, M, X))^{2} \mid X, M, T\right\} \mid X, M\right\} \\
= & {\left[\int_{\mathcal{R}^{d_T}} \Pi_{j=1}^{d_T} k\left(u_j\right)^2 d u\right] \times \mathbb{E}\left\{(Y-\widebar \cmo(t, M, X))^{2} \mid X, M, T\right\} \txmdensity(t \mid X, M)+O\left(h^{2}\right) }
\end{aligned}
$$
Hence, the part \eqref{eq:variance_outcome} of the variance boils down to

\begin{equation*}
{\left[\int  k\left(u\right)^2 d u\right]^{d_{T}} \mathbb{E}\left\{\frac{\widebar \txmdensity\left(t'\mid M, X\right)^{2} \txmdensity(t\mid X, M)}{\widebar \txmdensity\left(t \mid M, X\right)^{2} \widebar \txdensity\left(t' \mid X\right)^{2}}  \mathbb{E}\left\{(Y-\widebar \cmo(t, M, X))^{2} \mid X, M, T=t\right\} \right\}+O\left(h^{2}\right) }.   
\end{equation*}

\paragraph{Computing \eqref{eq:variance_cmo}}

Similarly, we can obtain:

$$
\begin{aligned}
& h^{d_{T}} \mathbb{E}\left\{\left[\frac{K_{h}\left(T-t^{\prime}\right)}{\widebar \txdensity\left(t^{\prime} \mid X\right)^{2}}\left\{\widebar \cmo(t, M, X) -\widebar \ccmo\left(t, t^{\prime}, X\right)\right\}\right]^{2}\right\} \\
= & h^{d_{T}} \mathbb{E}\left\{\frac{1}{\widebar \txdensity\left(t^{\prime} \mid X\right)^{2}} \mathbb{E}\left[K_{h}\left(T-t^{\prime}\right)^{2}\left(\widebar \cmo(t, M, X) -\widebar \ccmo\left(t, t^{\prime}, X\right)\right)^{2} \mid X\right]\right\} \\
= & h^{d_{T}} \mathbb{E}\left\{\frac{1}{\widebar \txdensity\left(t^{\prime} \mid X\right)^{2}} \mathbb{E}\left[K_{h}\left(T-t^{\prime}\right)^{2} \mathbb{E}\left[\left(\widebar \cmo(t, M, X) -\widebar \ccmo\left(t, t^{\prime}, X\right)\right)^{2}\mid X, M, T=t^\prime\right] \mid X\right]\right\}
\end{aligned}
$$

Again, similarly to \eqref{eq:kernel_difference_bounding} with a Taylor expansion on the density, we obtain for the inner expectation 
$$
\begin{aligned}
& h^{d_{T}} \mathbb{E}\left[K_{h}\left(T-t^{\prime}\right)^{2} \mathbb{E}\left[\left(\widebar \cmo(t, M, X) -\widebar \ccmo\left(t, t^{\prime}, X\right)\right)^{2}\mid X, M, T=t^\prime\right] \mid X\right] \\
= & {\left[\int k(u)^{2} d u\right]^{d_{T}} \txdensity\left(t^{\prime} \mid X\right) \mathbb{E}\left[\left(\widebar \cmo(t, M, X) -\widebar \ccmo\left(t, t^{\prime}, X\right)\right)^{2}\mid X, M, T=t^\prime\right]+O\left(h^{2}\right) }
\end{aligned}
$$

Hence, the part \eqref{eq:variance_cmo} of the variance accounts for
$$
\begin{aligned}
& {\left[\int k(u)^{2} d u\right]^{d_{T}} \times \mathbb{E}\left\{\frac{ \txdensity\left(t^{\prime} \mid X\right)}{\widebar \txdensity\left(t^{\prime} \mid X\right)^{2}} \mathbb{E}\left[\left(\widebar \cmo(t, M, X) -\widebar \ccmo\left(t, t^{\prime}, X\right)\right)^{2}\mid X, M, T=t^\prime\right]\right\}+O\left(h^{2}\right) }
\end{aligned}
$$

\paragraph{Computing \eqref{eq:variance_ccmo}}
$$
h^{d_{T}} \mathbb{E}\left[\widebar \ccmo^2\left(t, t^{\prime}, X\right) \right]=O\left(h^{d_{T}}\right)
$$

This holds because we assume $\widebar \ccmo$ is bounded.

\paragraph{Computing \eqref{eq:variance_outcome_cmo}}
$$
\begin{aligned}
& h^{d_{T}} \mathbb{E}\left\{\left[\frac{K_{h}\left(T-t\right) \widebar \txmdensity\left(t'\mid M, X\right)}{\widebar \txmdensity\left(t \mid M, X\right) \widebar \txdensity\left(t' \mid X\right)}\left\{Y-\widebar \cmo(t, M, X) \right\}\right]\right. \nonumber \\
& \left.\times\left[\frac{K_{h}\left(T-t^{\prime}\right)}{\widebar \txdensity\left(t^{\prime} \mid X\right)}\left\{\widebar \cmo(t, M, X) -\widebar \ccmo\left(t, t^{\prime}, X\right)\right\}\right]\right\} \\
& =h^{d_{T}} \mathbb{E}\left\{\frac{K_{h}\left(T-t\right) K_{h}\left(T-t^\prime\right)}{\widebar \txdensity\left(t' \mid X\right)^2} \frac{\widebar \txmdensity\left(t'\mid M, X\right)}{\widebar \txmdensity\left(t\mid M, X\right)}[Y-\widebar \cmo(t, M, X)]\left[\widebar \cmo(t, M, X) -\widebar \ccmo\left(t, t^{\prime}, X\right)\right]\right\} \\
& =h^{d_{T}} \mathbb{E}\left\{\frac{1}{\widebar \txdensity\left(t' \mid X\right)^2} \frac{\widebar \txmdensity\left(t'\mid M, X\right)}{\widebar \txmdensity\left(t\mid M, X\right)}\left[\widebar \cmo(t, M, X) -\widebar \ccmo\left(t, t^{\prime}, X\right)\right]\right. \\
& \left.\times \mathbb{E}\left\{K_{h}\left(T-t\right)K_{h}\left(T-t^\prime\right)[Y-\widebar \cmo(t, M, X)] \mid X, M\right\}\right\} \\
& =h^{d_{T}} \mathbb{E}\left\{\frac{1}{\widebar \txdensity\left(t' \mid X\right)^2} \frac{\widebar \txmdensity\left(t'\mid M, X\right)}{\widebar \txmdensity\left(t\mid M, X\right)}\left[\widebar \cmo(t, M, X) -\widebar \ccmo\left(t, t^{\prime}, X\right)\right]\right. \\
& \left.\times \mathbb{E}\left\{K_{h}\left(T-t\right)K_{h}\left(T-t^\prime\right)[Y-\widebar \cmo(t, M, X)] \mid X, M\right\}\right\}
\end{aligned}
$$

The inner expectation
$$
\begin{aligned}
& h^{d_{T}} \mathbb{E}\left\{K_{h}(T-t) K_{h}\left(T-t^{\prime}\right)[\cmo(T, M, X)- \widebar \cmo(t, M, X)] \mid X, M\right\} \\
= & h^{d_{T}} \int\left[\prod_{j=1}^{d_{T}} \frac{1}{h^{2}} k\left(\frac{s-t}{h}\right) k\left(\frac{s-t^{\prime}}{h}\right)\right][\cmo(s, M, X)-\widebar \cmo(t, M, X)] \txmdensity(s \mid X, M) d s \\
= & \int k\left(u_{1}\right) \cdots k\left(u_{d_{T}}\right) k\left(u_{1}+\frac{t-t^{\prime}}{h}\right) \cdots k\left(u_{d_{T}}+\frac{t-t^{\prime}}{h}\right)[\cmo(u h+t, M, X)- \widebar \cmo(t, M, X)] \txmdensity(u h+t \mid X, M) d u \\
= & \int k\left(u_{1}\right) \cdots k\left(u_{d_{T}}\right) k\left(u_{1}+\frac{t-t^{\prime}}{h}\right) \cdots k\left(u_{d_{T}}+\frac{t-t^{\prime}}{h}\right) \\
& \times\left[[\cmo(t, M, X)- \widebar \cmo(t, M, X)] + \sum_{j=1}^{d_{T}} u_{j} h \partial_{t_{j}} \cmo(t, M, X)+\frac{u_{j}^{2} h^{2}}{2} \partial_{t_{j}}^{2} \cmo(t, M, X)+\frac{u_{j}^{3} h^{3}}{6} \partial_{t_{j}}^{3} \cmo(t, M, X)\right] \\
& \times\left[\txmdensity(t \mid X, M)+\sum_{j=1}^{d_{T}} u_{j} h \partial_{t_{j}} \txmdensity(t \mid X, M)+\frac{u_{j}^{2} h^{2}}{2} \partial_{t_{j}}^{2} \txmdensity(t \mid X, M)\right] d u_{1} \cdots d u_{d_{T}} \\
& =[\cmo(t, M, X)- \widebar \cmo(t, M, X)]\txmdensity(t \mid X, M) \int k\left(u_{1}\right) \cdots k\left(u_{d_{T}}\right) k\left(u_{1}+\frac{t-t^{\prime}}{h}\right) \cdots k\left(u_{d_{T}}+\frac{t-t^{\prime}}{h}\right) + O(h) \\
&= O(h)
\end{aligned}
$$

\paragraph{Computing \eqref{eq:variance_outcome_ccmo}}
$$
\begin{aligned}
& 2 h^{d_{T}} \mathbb{E}\left\{\widebar \ccmo\left(t, t^{\prime}, X\right) 
 \left[\frac{K_{h}\left(T-t\right) \widebar \txmdensity\left(t'\mid M, X\right)}{\widebar \txmdensity\left(t \mid M, X\right) \widebar \txdensity\left(t' \mid X\right)}\left\{Y-\widebar \cmo(t, M, X) \right\}\right]\right\} \\
& =2 h^{d_{T}} \mathbb{E}\left\{
\frac{\widebar \ccmo\left(t, t^{\prime}, X\right)\widebar \txmdensity\left(t'\mid M, X\right)}{\widebar \txmdensity\left(t \mid M, X\right) \widebar \txdensity\left(t' \mid X\right)}
\mathbb{E}[K_{h}\left(T-t\right) \{Y-\widebar \cmo(t, M, X) \mid X, M\}]\right\}
\end{aligned}
$$

Applying the same expansion as in \eqref{eq:moment_out}, we can write the inner expectation as

\begin{align*}
& =(\cmo(t, M, X)-\widebar{\cmo}(t, M, X)) \txmdensity(t \mid X, M)+O\left(h^{2}\right)
\end{align*}

Inserting this back into the full expectation, combined with the boundedness of $\cmo, \ccmo, \txdensity \txmdensity$ and their limits, we get
$$
\eqref{eq:variance_outcome_ccmo}= O\left(h^{d_{T}}\right)
$$

\paragraph{Computing \eqref{eq:variance_cmo_ccmo}}
$$
\begin{aligned}
& 2 h^{d_{T}} \mathbb{E}\left\{\widebar \ccmo\left(t, t^{\prime}, X\right) 
 \left[\frac{K_{h}\left(T-t^{\prime}\right)}{\widebar \txdensity\left(t^{\prime} \mid X\right)}\left\{\widebar \cmo(t, M, X) -\widebar \ccmo\left(t, t^{\prime}, X\right)\right\}\right]\right\} \\
= & 2 h^{d_{T}} \mathbb{E}\left\{\frac{\widebar \ccmo\left(t, t^{\prime}, X\right) \left( \widebar \cmo(t, M, X) -\widebar \ccmo\left(t, t^{\prime}, X\right) \right)}{\widebar \txdensity\left(t^{\prime} \mid X\right)} \mathbb{E}\left[K_{h}\left(T-t^{\prime}\right) \mid X, M\right]\right\}
\end{aligned}
$$

Using \eqref{eq:inner_expect_kernel_smoothing} on $\mathbb{E}\left[K_{h}\left(T-t^{\prime}\right) \mid X, M\right]$ and plugging this back into the full expectation, we get
$$
\eqref{eq:variance_cmo_ccmo}=O\left(h^{d_{T}}\right)
$$

\paragraph{Computing the total variance}

Finally, with all terms of the variance together, we have shown that 
$$
\begin{aligned}
h^{d_{T}} & \times \operatorname{var}\left(\psi\left(Z_{i} ; \widebar \txdensity, \widebar \txmdensity, \widebar \cmo, \widebar \ccmo, \mr_{t, t^{\prime}}\right)\right)  =\mathrm{V}_{t,t^\prime}+O(h)
\end{aligned}
$$
where the term converges to $\mathrm{V}_{t,t^\prime}$ as $h \rightarrow 0$ and

\begin{align*}
\mathrm{V}_{t,t^\prime}= & {R_k^{d_{T}} } \\
& \times \mathbb{E}\left\{\frac{\widebar \txmdensity\left(t'\mid M, X\right)^2 \txmdensity(t\mid X, M)}{\widebar \txmdensity\left(t \mid M, X\right)^2 \widebar \txdensity\left(t' \mid X\right)^2}  \mathbb{E}\left\{(Y-\widebar \cmo(t, M, X))^{2} \mid X, M, T=t\right\} \right. \\
& \left. +\frac{ \txdensity\left(t^{\prime} \mid X\right)}{\widebar \txdensity\left(t^{\prime} \mid X\right)^{2}} \mathbb{E}\left[\left(\widebar \cmo(t, M, X) -\widebar \ccmo\left(t, t^{\prime}, X\right)\right)^{2}\mid X, M, T=t^\prime\right]\right\}
\end{align*}

Asymptotic normality follows from the Lyapunov central limit theorem with the third absolute moment. Specifically, having derived the bias and variance terms, we will now prove the Lyapunov condition for $\delta=1$, i.e.
$$
\lim _{n \rightarrow \infty} \frac{1}{s_{n}^{3}} \sum_{i=1}^{n} \mathbb{E}\left[\left|\sqrt{n h^{d_{T}}} n^{-1} \psi\left(Z_{i} ; \widebar \txdensity, \widebar \txmdensity, \widebar \cmo, \widebar \ccmo, \mr_{t, t^{\prime}}\right)-\mu_{i}\right|^{3}\right]=0
$$

Where $\mu_{i}$ equals $\mathbb{E}\left[\sqrt{n h^{d_{T}}} n^{-1} \psi\left(Z_{i} ; \txdensity, \txmdensity, \cmo, \ccmo, \mr_{t, t^{\prime}}\right)\right]$ and $s_{n}^{2}=\sum_{i=1}^{n} \sigma_{i}^{2}$ where $\sigma_{i}^{2}$ is the variance of of $\sqrt{n h^{d_{T}}} n^{-1} \psi\left(Z_{i} ; \txdensity, \txmdensity, \cmo, \ccmo, \mr_{t, t^{\prime}}\right)$.

Noticing that we can expand the previous term 

$$
\begin{aligned}
\left|\sqrt{n h^{d_{T}}} n^{-1} \psi\left(Z_{i} ; \widebar \txdensity, \widebar \txmdensity, \widebar \cmo, \widebar \ccmo, \mr_{t, t^{\prime}}\right)-\mu_{i}\right|^{3} & \leq\left(h^{d_{T}} n^{-1}\right)^{3 / 2}\left|\psi\left(Z_{i} ; \widebar \txdensity, \widebar \txmdensity, \widebar \cmo, \widebar \ccmo, \mr_{t, t^{\prime}}\right)\right|^{3}+\left|\mu_{i}\right|^{3} \\
& +3\left(h^{d_{T}} n^{-1}\right)\left|\psi\left(Z_{i} ; \widebar \txdensity, \widebar \txmdensity, \widebar \cmo, \widebar \ccmo, \mr_{t, t^{\prime}}\right)\right|^{2}\left|\mu_{i}\right| \\
& +3\left(h^{d_{T}} n^{-1}\right)^{1 / 2}\left|\psi\left(Z_{i} ; \widebar \txdensity, \widebar \txmdensity, \widebar \cmo, \widebar \ccmo, \mr_{t, t^{\prime}}\right)\right|\left|\mu_{i}\right|^{2}
\end{aligned}
$$

Then it suffices to note that 
\begin{align*}
\mathbb{E} & \left[\left|\sqrt{n h^{d_T}} n^{-1} \psi\left(Z_{i} ; \widebar \txdensity, \widebar \txmdensity, \widebar \cmo, \widebar \ccmo, \mr_{t, t^{\prime}}\right)\right|^3\right] \\
=&O\left(\left(n^{-1} h^{d_T}\right)^{3 / 2} \mathbb{E}\left[K_h(T-t)^3 \vert Y-\widebar{\cmo}(t, M, X)\vert^3 \bar{r}(t, t', M, X)^3  \bar \lambda(t^\prime, X)^{3}\right]\right) \\
+&O\left(\left(n^{-1} h^{d_T}\right)^{3 / 2} \mathbb{E}\left[K_h(T-t^\prime)^3 \vert \widebar{\cmo}(t, M, X)-\widebar{\ccmo}(t, t^\prime, X)\vert^3  \bar \lambda(t^\prime, X)^{3}\right]\right) \\
=&O\left(\left(n^3 h^{d_T}\right)^{-1 / 2}\right)    
\end{align*}

by the same arguments as \eqref{eq:kernel_difference_bounding} under the condition that $\mathbb{E}\left[|Y-\widebar{\cmo}(t, M, X)|^3 \mid T=t, M, X\right]$, $\mathbb{E}\left[|\widebar{\cmo}(t, M, X)-\widebar{\ccmo}(t, t^\prime, X)|^3 \mid T=t', M, X\right]$ and their first derivative w.r.t. $t$ are bounded uniformly in $x \in \mathcal{X}$. 

Hence
$$
\sum_{i=1}^{n} \mathbb{E}\left[\left|\sqrt{n h^{d_{T}}} n^{-1} \psi\left(Z_{i} ; \widebar \txdensity, \widebar \txmdensity, \widebar \cmo, \widebar \ccmo, \mr_{t, t^{\prime}}\right)\right|^{3}\right]=O\left(\left(n h^{d_{T}}\right)^{-1 / 2}\right)=o(1)
$$

Moreover, recall that
\begin{equation}
s_n^2 \equiv \sum_{i=1}^n \operatorname{var}\left(\sqrt{n h^{d_T}} n^{-1} \psi\left(Z_{i} ; \widebar \txdensity, \widebar \txmdensity, \widebar \cmo, \widebar \ccmo, \mr_{t, t^{\prime}}\right)\right)=h^{d_T} \operatorname{var}(\psi)=\mathrm{V}_{t, t^\prime}+o(1).
\end{equation}

Thus the Lyapunov condition holds:  
\begin{equation}
\sum_{i=1}^n \mathbb{E}\left[\left|\sqrt{n h^{d_T}} n^{-1} \psi\left(Z_{i} ; \widebar \txdensity, \widebar \txmdensity, \widebar \cmo, \widebar \ccmo, \mr_{t, t^{\prime}}\right)\right|^3\right] / s_n^3=O\left(\left(n h^{d_T}\right)^{-1 / 2}\right)=o(1)    
\end{equation}

and hence,
$$
\frac{1}{s_{n}} \sum_{i=1}^{n}\left(\sqrt{\frac{h^{d_{T}}}{n}} \psi\left(Z_{i} ; \widebar \txdensity, \widebar \txmdensity, \widebar \cmo, \widebar \ccmo, \mr_{t, t^{\prime}}\right)-\mu_{i}\right) \xrightarrow{d} \mathcal{N}(0,1)
$$

To finish, an application of Slutsky's theorem provides the desired result that
$$
\sqrt{n h^{d_T}}\left(\hat{\mr}_{t,t'}-\mr_{t,t^\prime}-h^2 \mathrm{~B}_{t,t'}\right) \xrightarrow{d} \mathcal{N}\left(0, \mathrm{~V}_{t,t'}\right) 
$$

\end{proof}

\subsection{Optimal bandwidth}

In this part we will prove the corollaries of the Theorem \ref{thm:asymptotic_normality}. We will start by stating Assumption \ref{assum:split_consistency} with the general dimension $d_T$.

\begin{assum}
For each $\ell=1, \ldots, L$ and for any $t \in \mathcal{T}$,
\begin{enumerate}
    \item $\left\|{\delta_{\cmo}^\ell}{\delta_{T\mid X}^\ell}\right\|_{{\mathrm{t} X}}=o_p(1)$, $\left\|{\delta_{\cmo}^\ell}{\delta_{\ccmo}^\ell}\right\|_{{\mathrm{t} X}}=o_p(1)$, $\left\|{\delta_{\ccmo}^\ell}{\delta_{T\mid X, M}^\ell}\right\|_{{\mathrm{t} MX}}=o_p(1)$, $\left\|{\delta_{T\mid X}^\ell}^2{\delta_{T\mid X, M}^\ell}\right\|_{{\mathrm{t} MX}}=o_p(1)$,
    \item $\left\|{\delta_{\cmo}^\ell}^2 {\delta_{T\mid X}^\ell}^2\right\|_{{\mathrm{t} X}}=O_p(1)$, $\left\|{\delta_{\cmo}^\ell}^2{\delta_{\ccmo}^\ell}^2\right\|_{{\mathrm{t} X}}=O_p(1)$, $\left\|{\delta_{\ccmo}^\ell}^2{\delta_{T\mid X, M}^\ell}^2\right\|_{{\mathrm{t} MX}}=O_p(1)$, $\left\|{\delta_{T\mid X}^\ell}^2{\delta_{T\mid X, M}^\ell}^2\right\|_{{\mathrm{t} MX}}=O_p(1)$
    \item $\left\|{\delta_{\cmo}^\ell}^2\right\|_{{\mathrm{t} X}}=O_p(1)$, $\left\|{\delta_{\ccmo}^\ell}^2\right\|_{{\mathrm{t} X}}=O_p(1)$, $\left\|{\delta_{T\mid X, M}^\ell}^2\right\|_{{\mathrm{t} MX}}=O_p(1)$, $\left\|{\delta_{T\mid X}^\ell}^2\right\|_{{\mathrm{t} MX}}=O_p(1)$
    \item $\mathrm{E}\left[(Y- \widebar{\cmo}(t, M, X))^4 \mid T=t, M, X\right]$, $\mathrm{E}\left[(\widebar{\cmo}(t, M, X)- \widebar{\ccmo}(t, t^\prime, X))^4 \mid T=t^\prime, M, X\right]$ and their derivatives with respect to $t$ are bounded uniformly over $\left(t^{\prime}, x\right) \in$ $\mathcal{T} \times \mathcal{X}$ and $\int_{-\infty}^{\infty} k(u)^4 d u<\infty$
    \item The bandwidth $b_n \rightarrow 0$ and $n b_n^{d_T+4} \rightarrow \infty . \int_{-\infty}^{\infty} k(u) k(u / \epsilon) d u<\infty$ for $\epsilon \in(0,1)$.
\end{enumerate}
\end{assum}

We now restate Corollary \ref{thm:amse_bandwidth} with the general treatment dimension as well.

\begin{customcor}{\ref{thm:amse_bandwidth}}
Let the conditions in Theorem \ref{thm:asymptotic_normality} hold. For $t \in \mathcal{T}$, if $\mathrm{B}_{t, t^\prime}$ is non-zero, then the bandwidth that minimizes the asymptotic mean squared error is $h_t^*=\left(d_T \mathrm{~V}_t /\left(4 \mathrm{~B}_t^2\right)\right)^{1 /\left(d_T+4\right)} n^{-1 /\left(d_T+4\right)}$. Further let Assumption \ref{assum:split_consistency} hold. Then $\hat{\mathrm{V}}_t-\mathrm{V}_t=o_p(1)$, $\hat{\mathrm{B}}_t-\mathrm{B}_t=o_p(1)$, and $\hat{h}_t / h_t^*-1=o_p(1)$.  
\end{customcor}

\begin{proof}
By Theorem \ref{thm:asymptotic_normality}, the asymptotic MSE is $h^4 \mathrm{~B}_{t, t^\prime}^2+\mathrm{V}_{t, t^\prime} /\left(n h^{d_T}\right)$. Solving the first-order condition yields the optimal bandwidth $h_t^*$.

Note that once the consistency of $\hat{\mathrm{V}}_{t, t^\prime}$ and $\hat{\mathrm{B}}_{t, t^\prime}$ is proven, the continuous mapping theorem implies $\hat{h}_t / h_t^*-1-o_p(1)$. Therefore, we show below the consistency of $\hat{\mathrm{V}}_{t, t^\prime}$ and $\hat{\mathrm{B}}_{t, t^\prime}$.

\paragraph{Consistency of $\hat{\mathrm{V}}_{t, t^\prime}$ :}

Let $\hat{\mathrm{V}}_{t, t^\prime}=L^{-1} \sum_{\ell=1}^L \hat{\mathrm{V}}_{t, t^\prime, \ell}$, where $\hat{\mathrm{V}}_{t, t^\prime, \ell} \equiv h^{d_T} n_{\ell}^{-1} \sum_{i \in I_{\ell}} \hat{\psi}_{i \ell}^2$. It suffices to show that $\hat{\mathrm{V}}_{t, t^\prime, \ell}$ is consistent for $\mathrm{V}_{t, t^\prime}$ as $n_{\ell} \rightarrow \infty$, for $\ell=1, \ldots, L$. Toward that end, we show that:

\begin{enumerate}
\item $h^{d_T} n_{\ell}^{-1} \sum_{i \in I_{\ell}} \psi_i^2-\mathrm{V}_{t, t^\prime}=o_p(1)$, where 
$\psi_i \equiv K_h(T_i-t)  \bar r_i \bar \lambda_i (Y_i-\widebar \cmo_i) + K_h(T_i-t')\bar \lambda_i (\widebar\cmo_i -\widebar\ccmo_i)  +  \widebar \ccmo_i -\mr_{t, t^\prime} $,
\item $h^{d_T} n_{\ell}^{-1} \sum_{i \in I_{\ell}} \mathbb{E}\left[\hat{\psi}_{i \ell}^2-\psi_i^2 \mid Z^{\ell}\right]=o_p(1)$, where $\hat{\psi}_{i \ell} \equiv K_h(T_i-t) \hat{r}_{i \ell} \hat{\lambda}_{i \ell} \cdot[Y_i-\widehat{\cmo}_{i \ell}] + K_h(T_i-t')\hat{\lambda}_{i \ell} \cdot[\widehat{\cmo}_{i \ell} -\widehat{\ccmo}_{i \ell}]  + \widehat{\ccmo}_{i \ell} -\hat{\psi}_i $
\item $h^{d_T} n_{\ell}^{-1} \sum_{i \in I_{\ell}} \Delta_{i \ell}=o_p(1)$, where $\Delta_{i \ell} \equiv$ $\hat{\psi}_{i \ell}^2-\psi_i^2-\mathbb{E}\left[\hat{\psi}_{i \ell}^2-\psi_i^2 \mid Z^{\ell}\right]$
\end{enumerate}

\paragraph{Showing $h^{d_T} n_{\ell}^{-1} \sum_{i \in I_{\ell}} \psi_i^2-\mathrm{V}_{t, t^\prime}=o_p(1)$.} Let $\ell \in \{1, \dots, L \}$, and let $i \in I_{\ell}$. As computed in the proof of Theorem \ref{thm:asymptotic_normality}, $h^{d_T} \mathbb{E}\left[\psi_i^2\right]=\mathrm{V}_{t, t^\prime}+o(1)$. We will now compute $h^{d_T} \mathbb{E}\left[\psi_i^4\right]$. Recall the decomposition of $\psi_i$ into $m_i, \phi_i, \varphi_i$ in Eq. \eqref{eq:ccmo_moment}, \eqref{eq:cmo_moment} and \eqref{eq:outcome_moment}, 
$$
\begin{aligned}
& \phi_i = K_h(T_i-t)  \bar r_i \bar \lambda_i (Y_i-\widebar \cmo_i) \\
& \varphi_i=K_h(T_i-t')\bar \lambda_i (\widebar\cmo_i -\widebar\ccmo_i) \\
& m_i = \widebar \ccmo_i -\mr_{t, t^\prime}
\end{aligned}
$$

Therefore, noticing that $\psi_i = \phi_i + \varphi_i + m_i$, we will consider different quantities that are in the expansion $\mathbb{E}\left[(\phi_i + \varphi_i + m_i)^4\right]$. By similar arguments as in \eqref{eq:kernel_difference_bounding} and Assumption \ref{assum:split_consistency}.3, 

\begin{align*}
\mathbb{E}\left[\phi_i^4\right]&=h^{-3 d_T} \mathbb{E}\left[\mathbb{E}\left[(Y-\widebar{\cmo}(t, X, M))^4 \mid T=t, X, M\right] \txmdensity(t \mid M, X) \bar{\lambda}(t^\prime, X)^4 \bar r(M, X)^4 \right]\left(\int_{-\infty}^{\infty} k(u)^4 d u\right)^{d_T} \\
&+ o\left(h^{-3 d_T}\right) \\
&=O\left(h^{-3 d_T}\right)    
\end{align*}

and
\begin{align*}
\mathbb{E}\left[\varphi_i^4\right]&=h^{-3 d_T} \mathbb{E}\left[\mathbb{E}\left[(\widebar{\cmo}(t, X, M)-\widebar{\ccmo}(t, t^\prime, X))^4 \mid T=t', X, M\right] \txmdensity(t^\prime \mid M, X) \bar{\lambda}(t^\prime, X)^4 \right]\left(\int_{-\infty}^{\infty} k(u)^4 d u\right)^{d_T}\\
&+ o\left(h^{-3 d_T}\right) \\
&=O\left(h^{-3 d_T}\right)
\end{align*}

Moreover, let us consider integers $p, q$ such that $p+q=4$. $\mathbb{E}\left(\phi_i^{p} \varphi_i^{q}\right)$ can be written as follows
\begin{align*}
\mathbb{E} \Bigg[ \mathbb{E} & \Bigg(K_{h}(T-t)^{p} K_{h}(T-t^{\prime})^{q} \big[Y - \mu_Y(t, M, X) \big]^p \lambda(t^\prime, X)^p r(t, t^\prime, M, X)^p  \mid X, M \Bigg) \\
& \times \big(\mu_Y(t, M, X) - \omega_Y(t, t^{\prime}, X) \big)^q \lambda(t^{\prime}, X)^q \Bigg]
\end{align*}

Take the inner expectation:
\begin{align*}
&=\mathbb{E}\left\{K_{h}(T-t)^{p} K_{h}\left(T-t^{\prime}\right)^{q}[[Y-\widebar \cmo(t, M, X)]\lambda(t^\prime, X) r(t, t^\prime, M, X)]^{p} \mid X, M\right\} \\
= &\int\left[\prod_{j=1}^{d_{T}} \frac{1}{h^{p+q}} k\left(\frac{s-t}{h}\right)^{p} k\left(\frac{s-t^{\prime}}{h}\right)^{q}\right][\cmo(s, M, X)-\widebar \cmo(t, M, X)] \txmdensity(s \mid X, M) d s \\
&= \frac{1}{h^{\left(p+q-1\right) d_T}} \int\left[\prod_{j=1}^{d_T} k\left(u_j\right)^{p} k\left(u_j+\frac{t_j-t_j^{\prime}}{h}\right)^{q}\right] [\cmo(u h+t, M, X)- \widebar \cmo(t, M, X)] \txmdensity(u h+t \mid X, M) d u \\
&= \frac{1}{h^{\left(p+q-1\right) d_T}} \int\left[\prod_{j=1}^{d_T} k\left(u_j\right)^{p} k\left(u_j+\frac{t_j-t_j^{\prime}}{h}\right)^{q}\right] \\
&\times\left[[\cmo(t, M, X)- \widebar \cmo(t, M, X)] + \sum_{j=1}^{d_{T}} u_{j} h \partial_{t_{j}} \cmo(t, M, X)+\frac{u_{j}^{2} h^{2}}{2} \partial_{t_{j}}^{2} \cmo(\bar t, M, X)\right] \\
& \times\left[\txmdensity(t \mid X, M)+\sum_{j=1}^{d_{T}} u_{j} h \partial_{t_{j}} \txmdensity(t \mid X, M)+\frac{u_{j}^{2} h^{2}}{2} \partial_{t_{j}}^{2} \txmdensity(\tilde t \mid X, M)\right] d u_{1} \cdots d u_{d_{T}} \\
& =\frac{1}{h^{\left(p+q-1\right) d_T}} [\cmo(t, M, X)- \widebar \cmo(t, M, X)]\txmdensity(t \mid X, M) \int\left[\prod_{j=1}^{d_T} k\left(u_j\right)^{p} k\left(u_j+\frac{t_j-t_j^{\prime}}{h}\right)^{q}\right] \\
& + O\left(\frac{1}{h^{\left(p+q-1\right) d_T}}\right)\\
& = O\left(\frac{1}{h^{\left(p+q-1\right) d_T}}\right) = O\left(\frac{1}{h^{3d_T}}\right)
\end{align*}

where $\bar{t}$ and $\tilde{t}$ are between $t$ and $t+u h$.

For the calculation of $\mathbb{E}\left(\phi_i^{p} m_i^{q}\right)$, since we assume the boundedness of $\ccmo\left(t, t^{\prime}, X\right)-\mr_{t, t^{\prime}}$, and taking the inner expectation of $\mathbb{E}\left[(Y-\widebar{\cmo}(t, X, M))^p \mid T=t, X, M\right] \txmdensity(t \mid M, X) \bar{\lambda}(t^\prime, X)^p \bar r(M, X)^p \left(\ccmo\left(t, t^{\prime}, X\right)-\mr_{t, t^{\prime}}\right)^q$ in a similar fashion as in $\mathbb{E}\left(\phi_i^4\right)$, we obtain that $\mathbb{E}\left(\phi_i^{p} m_i^{q}\right)=O\left(\frac{1}{h^{\left(c_1-1\right) d_T}}\right)$. With a reasoning similar to those two terms and with $\mathbb{E}\left(\varphi_i^{p} m_i^{q}\right)=O\left(\frac{1}{h^{\left(p-1\right) d_T}}\right)$.

Now considering integers $p, q, m$ such that $p+q+m=4$ similarly $\mathbb{E}\left(\phi_i^{p} \varphi_i^{q} m_i^{m}\right)=O\left(\frac{1}{h^{\left(p+q-1\right) d_T}}\right)$. 

Combining all the terms in the expansion $\mathbb{E}\left[(\phi_i + \varphi_i + m_i)^4\right]$, we obtain $\mathbb{E}\left(\psi_i^4\right)=O\left(h^{-3 d_T}\right)$. Then by Markov inequality, for any $\epsilon>0$,
$$
\begin{aligned}
& P\left(\left|h^{d_T} n^{-1} \sum_{i \in I_{\ell}} \psi_i^2-\mathrm{V}_{t,t'}\right|>\epsilon\right) \leq \frac{1}{\epsilon^2} \mathbb{E}\left\{\left[h^{d_T} n^{-1} \sum_{i \in I_{\ell}} \psi_i^2-\mathrm{V}_{t,t'}\right]^2\right\} \\
= & \frac{1}{\epsilon^2} \mathbb{E}\left\{\left[h^{d_T} n^{-1} \sum_{i \in I_{\ell}} \psi_i^2-h^{d_T} \mathbb{E}\left[\psi_i^2\right]+o_p(1)\right]^2\right\} \\
= & \frac{h^{2 d_T}}{n^2 \epsilon^2} \mathbb{E}\left\{\left[\sum_{i \in I_{\ell}} \psi_i^2-\mathbb{E}\left(\sum_{i \in I_{\ell}} \psi_i^2\right)\right]^2\right\}+o_p(1) \\
= & \frac{h^{2 d_T}}{n^2 \epsilon^2} \operatorname{var}\left(\sum_{i \in I_{\ell}} \psi_i^2\right)+o_p(1) \\
= & \frac{h^{2 d_T}}{n \epsilon^2} \operatorname{var}\left(\psi_i^2\right)+o_p(1) \\
= & O\left(\frac{1}{n h^{d_T}}\right)=o_p(1)
\end{aligned}
$$

where the equality in the last row comes from $\operatorname{var}\left(\psi_i^2\right)=O\left(\mathbb{E}\left(\psi_i^4\right)\right)=O\left(h^{-3 d_T}\right)$.
\\

\paragraph{Showing $h^{d_T} n_{\ell}^{-1} \sum_{i \in I_{\ell}} \mathbb{E}\left[\hat{\psi}_{i \ell}^2-\psi_i^2 \mid Z^{\ell}\right]=o_p(1)$.}
Let us now introduce:

\begin{align}
& \widehat \phi_{i \ell} = K_h(T_{i \ell}-t)  \hat r_{i \ell} \hat \lambda_{i \ell} (Y_i-\widehat \cmo_{i \ell}) \\
& \widehat \varphi_{i \ell}=K_h(T_{i \ell}-t')\hat \lambda_{i \ell} (\widehat\cmo_{i \ell} -\widehat\ccmo_{i \ell}) \\
& \widehat m_{i \ell} = \widehat \ccmo_{i \ell} - \widehat \psi_i
\end{align}

Then again $\hat{\psi}_{i \ell}=\widehat \phi_{i \ell}+\widehat \varphi_{i \ell}+\widehat m_{i \ell}$ and in the expansion of $\hat{\psi}_{i \ell}^2-\psi_i^2$, we first compute for $\widehat \phi_{i \ell}^2$
$$
h^{d_T} \mathbb{E}\left(\widehat \phi_{i \ell}^2 \mid Z^{\ell}\right) = \mathbb{E}\left[\mathbb{E}\left[\left(Y_i-\widehat{\cmo}_{i \ell}\right)^2 \mid T=t, X, M, Z^{\ell}\right] \txmdensity(t \mid X_i, M_i) \hat{r}_{i \ell} \hat{\lambda}_{i \ell}^2 \mid Z^{\ell}\right] R_k^{d_T}+o_p(1)
$$

By pairing $\widehat \phi_{i \ell}^2$ to $\phi_i^2$ we show that

$$
\mathbb{E}\left[\left.\mathbb{E}\left[\left.\left(Y_i-\widehat{\cmo}_{i \ell}\right)^2 \hat{r}_{i \ell} \hat{\lambda}_{i \ell}^2-\left(Y_i-\widebar{\cmo}_i \right)^2 \bar{r}_i \bar{\lambda}_i^2 \right\rvert\, T=t, X, Z^{\ell}\right] \txmdensity(t \mid X_i, M_i) \right\rvert\, Z^{\ell}\right]=o_p(1)
$$

Using the decomposition of the outcome residual in Eq. \eqref{eq:decomposition_outcome_residual}, the previous writes
\begin{align*}
&\mathbb{E}\left[\left\{ \left( \hat r_{i \ell} - \bar r_{i } \right) \left( \hat \lambda_{i \ell} - \bar \lambda_{i} \right) \left( Y_i - \widebar \cmo_{i} \right) -  \left( \hat r_{i \ell} - \bar r_{i } \right) \left( \hat \lambda_{i \ell} - \bar \lambda_{i } \right) \left( \widehat \cmo_{i \ell} - \widebar \cmo_{i } \right) \right. \right. \\
&  - \left( \hat r_{i \ell} - \bar r_{i } \right) \left( \widehat \cmo_{i \ell} - \widebar \cmo_{i } \right) \bar \lambda_{i } - \left( \hat \lambda_{i \ell} - \bar \lambda_{i } \right) \left( \widehat \cmo_{i \ell} - \widebar \cmo_{i } \right) \bar r_{i } \\
& \left. \left. \left. + \left( \hat r_{i \ell} - \bar r_{i } \right) \left( Y_i - \widebar \cmo_{i } \right)  \bar \lambda_{i } + \left( \hat \lambda_{i \ell} - \bar \lambda_{i } \right) \left( Y_i - \widebar \cmo_{i } \right) \bar r_{i } - \left( \widehat \cmo_{i \ell} - \widebar \cmo_{i} \right)    \bar r_{i } \bar \lambda_{i } \right\}^2 \right\rvert\, Z^{\ell} \right].    
\end{align*}

which is $o_p(1)$ by Assumption \ref{assum:regularity} and Assumption \ref{assum:split_consistency}.1.

Next, for $\widehat \varphi_{i \ell}$, we similarly write:

$$
h^{d_T} \mathbb{E}\left(\widehat \varphi_{i \ell}^2 \mid Z^{\ell}\right)=\mathbb{E}\left[\mathbb{E}\left[\left(\hat{\cmo}_{i \ell}-\hat{\ccmo}_{i \ell}\right)^2 \mid T=t, M, X, Z^{\ell}\right] \txmdensity(t^\prime \mid X_i, M_i) \hat{\lambda}_{\ell}(t, X)^2 \mid Z^{\ell}\right] R_k^{d_T}+o_p(1)
$$

By pairing $\widehat \varphi_{i \ell}^2$ to $\varphi_i^2$ we show that


$$
\mathbb{E}\left[\left.\mathbb{E}\left[\left. \left(\hat{\cmo}_{i \ell}-\hat{\ccmo}_{i \ell}\right)^2\hat{\lambda}_{i \ell}^2-\left(\widebar{\cmo}_i-\widebar{\ccmo}_i\right)^2\bar{\lambda}_i^2 \right\rvert\, T=t, X, Z^{\ell}\right] \txmdensity(t^\prime \mid X_i, M_i) \right\rvert\, Z^{\ell}\right]=o_p(1)
$$

This time using the decomposition of the conditional mean outcome residual in Eq. \eqref{eq:decomposition_cmo_residual}, the previous writes as 

\begin{align*}
&\mathbb{E}\left[\left\{ \left( \hat \lambda_{i \ell} - \bar \lambda_{i} \right) \left( \widehat \cmo_{i \ell} - \widebar \cmo_{i} \right)  - \left( \hat \lambda_{i \ell} - \bar \lambda_{i \ell} \right) \left( \widehat \ccmo_{i \ell} - \widebar \ccmo_{i} \right) + \left( \hat \lambda_{i \ell} - \bar \lambda_{i} \right)  \widebar \cmo_{i} \right. \right. \\
& + \left. \left. \left. \left( \widehat \cmo_{i \ell} - \widebar \cmo_{i} \right)  \widebar \lambda_{i}  - \left( \widehat \ccmo_{i \ell} - \widebar \ccmo_{i} \right)  \bar \lambda_{i} - \left( \hat \lambda_{i \ell} - \bar \lambda_{i} \right)  \widebar \ccmo_{i}. \right\}^2 \right\rvert\, Z^{\ell} \right].    
\end{align*}

which is $o_p(1)$ by Assumption \ref{assum:regularity} and Assumption \ref{assum:split_consistency}.1.

Next, if we pair $\widehat m_{i \ell}^2$ to $m_i^2$ through
$$
h^{d_T} \mathbb{E}\left\{\left[\widehat{\ccmo}{i \ell}- \widehat \psi_i\right]^2 \mid Z_{I_{\ell}}^c\right\}=o_p(1)
$$

This holds because we assume the nuisance estimators are bounded, and following a similar calculation as Eq. \eqref{eq:variance_ccmo} it can be seen that $h^{d_T} \mathbb{E}\left[\widehat \psi_i \mid Z^{\ell}\right]=o_p(1)$. Combined with Jensen's inequality, this can be used to obtain the desired result. Eventually, $h^{d_T} \mathbb{E}\left(\widehat \phi_{i \ell} \widehat \varphi_{i \ell} -  \phi_i \varphi_i \mid Z^{\ell}\right) = o_p(1)$ following a computation similar to \eqref{eq:variance_outcome_ccmo} and by Assumption \ref{assum:regularity} and Assumption \ref{assum:split_consistency}.1. Similar reasoning applies to $\mathbb{E}\left(\widehat \phi_{i \ell} \widehat m_{i \ell} -  \phi_i  \hat \psi_i \mid Z^{\ell}\right)$, $\mathbb{E}\left(\widehat \varphi_{i \ell} \widehat m_{i \ell} -  \varphi_i  \hat \psi_i \mid Z^{\ell}\right)$. 
\\

\paragraph{Showing $h^{d_T} n_{\ell}^{-1} \sum_{i \in I_{\ell}} \Delta_{i \ell}=o_p(1)$.}

We write $\Delta_{i \ell} \equiv$ $\hat{\psi}_{i \ell}^2-\psi_i^2-\mathbb{E}\left[\hat{\psi}_{i \ell}^2-\psi_i^2 \mid Z^{\ell}\right]$ and will use the same decompositions using $\widehat \phi_{i \ell}, \widehat \varphi_{i \ell}, \widehat m_{i \ell}$.

Writing $\Delta_{i \ell, \phi} \equiv$ $\hat{\phi}_{i \ell}^2-\phi_i^2-\mathbb{E}\left[\hat{\phi}_{i \ell}^2-\phi_i^2 \mid Z^{\ell}\right]$, with the decomposition in Eq. \eqref{eq:decomposition_outcome_residual} and Assumption \ref{assum:split_consistency}.2-3., the similar arguments as for $\mathbb{E}\left[\phi_i^4\right]$ above yields 

\hspace*{-1cm}

\begin{equation}
\mathbb{E}\left[\Delta_{i \ell, \phi}^2 \mid Z^{\ell}\right]=O_p\left(\mathbb{E}\left[\left.K_h(T_i-t)^4
\left[\left(Y_i-\widehat{\cmo}_{i \ell}\right)^2 \hat{r}_{i \ell} \hat{\lambda}_{i \ell}^2-\left(Y_i-\widebar{\cmo}_i \right)^2 \bar{r}_i \bar{\lambda}_i^2 \right]^2
\right\rvert\, Z^{\ell}\right]\right)=O_p\left(h^{-3 d_T}\right).   
\end{equation}

We can apply a similar reasoning for $\Delta_{i \ell, \varphi} \equiv$ $\hat{\varphi}_{i \ell}^2-\varphi_i^2-\mathbb{E}\left[\hat{\varphi}_{i \ell}^2-\varphi_i^2 \mid Z^{\ell}\right]$, and other cross terms.

Then $\operatorname{var}\left(h^{d_T} n_{\ell}^{-1} \sum_{i \in I_{\ell}} \Delta_{i \ell} \mid Z^{\ell}\right)=O_p\left(h^{2 d_T} n_{\ell}^{-1} h^{-3 d_T}\right)=O_p\left(n_{\ell}^{-1} h^{-d_T}\right)=o_p(1)$. The result follows by the conditional Markov's inequality.

\paragraph{Consistency of $\hat{\mathrm{B}}_{t, t^\prime}$ :}

Theorem \ref{thm:asymptotic_normality} provides $\mathbb{E}\left[\hat{\mr}_{t, t^\prime \epsilon b_n}\right]=b_n^2 \epsilon^2 \mathrm{~B}_{t, t^\prime}+o\left(b_n^2\right)$ and therefore $\mathbb{E}\left[\hat{\mathrm{B}}_{t, t^\prime}\right]=\mathrm{B}_{t, t^\prime}+o(1)$. Compute

$$
\begin{aligned}
\mathbb{E} & {\left[\hat{\mr}_{t, t^\prime, b_n} \hat{\mr}_{t, t^\prime, \epsilon b_n}\right] } \\
= & O\left(\frac{1}{n}\mathbb { E } \left[\int_{\mathcal{T}} K_{b_n}(s-t) K_{\epsilon b_n}(s-t) \mathbb{E}\left[\left(Y-\hat{\cmo}_{\ell}(t, M, X)\right)^2 \mid T=s, X, M, Z^{\ell}\right] \right. \right. \\
& \times \left. \left. \txmdensity(s \mid M, X) d s \hat{r}(M,X)^2 \hat{\lambda}_{\ell}(t^\prime, X)^2 \mid Z^{\ell}\right]\right) \\
 & + O\left(\frac{1}{n}\mathbb { E } \left[\int_{\mathcal{T}} K_{b_n}(s-t^\prime) K_{\epsilon b_n}(s-t^\prime) \mathbb{E}\left[\left(\hat{\cmo}_{\ell}(t, M, X)-\hat{\ccmo}_{\ell}(t, t^\prime, X)\right)^2 \mid T=s, X, Z^{\ell}\right] \right. \right. \\
& \times \left. \left. \txmdensity(s \mid M, X) d s \hat{\lambda}_{\ell}(t^\prime, X)^2 \mid Z^{\ell}\right]\right) \\
= & O\left(n ^ { - 1 } b_n ^ { - d _ { T } } \epsilon ^ { - d _ { T } } \int _ { \mathcal { R } ^ { d _ { T } } } \Pi _ { j = 1 } ^ { d _ { T } } k ( u _ { j } ) k ( \frac { u _ { j } } { \epsilon } ) d u \mathbb { E } \left[\mathbb{E}\left[\left(Y-\hat{\gamma}_{\ell}(t, X)\right)^2 \mid T=t, X, Z^{\ell}\right] \right. \right. \\
& \times \left. \left. \left. \txmdensity(t \mid X, M)\hat{r}_{\ell}(M, X)^2\hat{\lambda}_{\ell}(t^\prime, X)^2 \right\rvert\, Z^{\ell}\right]\right) \\
+ & O\left(n ^ { - 1 } b_n ^ { - d _ { T } } \epsilon ^ { - d _ { T } } \int _ { \mathcal { R } ^ { d _ { T } } } \Pi _ { j = 1 } ^ { d _ { T } } k ( u _ { j } ) k ( \frac { u _ { j } } { \epsilon } ) d u \mathbb { E } \left[\mathbb{E}\left[\left(\hat{\cmo}_{\ell}(t, M, X)-\hat{\ccmo}_{\ell}(t, t^\prime, X)\right)^2 \mid T=t^\prime, X, M, Z^{\ell}\right] \right. \right. \\
& \times \left. \left. \left. \txmdensity(t^\prime \mid X, M)\hat{\lambda}_{\ell}(t^\prime, X)^2 \right\rvert\, Z^{\ell}\right]\right) \\
= & O\left(\left(n b_n^{d_T}\right)^{-1}\right)
\end{aligned}
$$

with the same arguments as in Lemma \ref{lemma:kernel_smoothing}. So $\operatorname{cov}\left(\hat{\mr}_{t, t^\prime, b_n}, \hat{\mr}_{t, t^\prime, \epsilon b_n}\right)=O\left(\operatorname{var}\left(\hat{\mr}_{t, t^\prime, \epsilon b_n}\right)\right)=O\left(\left(n b_n^{d_T}\right)^{-1}\right)$. It follows that 

\begin{equation*}
\operatorname{var}\left(\hat{\mathrm{B}}_{t, t^\prime}\right)=b_n^{-4}\left(1-\epsilon^2\right)^{-2}\left(\operatorname{var}\left(\hat{\mr}_{t, t^\prime, b_n}\right)+\operatorname{var}\left(\hat{\mr}_{t, \epsilon b_n}\right)-2 \operatorname{cov}\left(\hat{\mr}_{t, b_n}, \hat{\mr}_{t, t^\prime \epsilon b_n}\right)\right)=O\left(b_n^{-4}\left(n b_n^{d_T}\right)^{-1}\right).    
\end{equation*} 

By the Markov's inequality and Assumption \ref{assum:split_consistency}.4., 

\begin{equation*}
P\left(\left|\hat{\mathrm{B}}_{t, t^\prime}-\mathbb{E}\left[\hat{\mathrm{B}}_{t, t^\prime}\right]\right|>\epsilon\right) \leq \operatorname{var}\left(\hat{\mathrm{B}}_{t, t^\prime}\right) / \epsilon^2= O\left(\left(n b_n^{d_T+4}\right)^{-1}\right)=o(1). 
\end{equation*}
 So $\hat{\mathrm{B}}_{t, t^\prime}-\mathrm{B}_{t, t^\prime}=o_p(1).$   

\end{proof}

\section{NON PARAMETRIC LEARNING OF NUISANCE PARAMETERS}

\label{appendix:nuisances}




We present the non parametric method to estimate the conditional and cross conditional mean outcomes following \citep{singh2023sequential}. Using kernel mean embeddings, the mediated response curve $\mr_{t,t^\prime}$ defined in Eq. \eqref{eq:mediated_response} is identified as a sequential integral of the form $\int \cmo(t',m,x) dQ$ for the distribution $Q=\mdensity(m|t',x) f(x)$ in the Pearl formula in Eq. \eqref{eq:pearl_formula}. 
\\

To begin, we construct an RKHS $\cH$ for $\cmo$. We define a RKHS for the treatment $\cH_T$, mediator $\cH_M$ and covariates $\cH_X$, then assume that the conditional mean $\cmo$ is an element of the RKHS $\cH$ with the kernel $k \left((t,m,x),(t',m',x')\right) = k_T(t,t') k_M(m,m') k_X(x,x')$. Formally, this choice of kernel corresponds to the tensor product : $\cH = \cH_T \otimes  \cH_M \otimes \cH_X  $. As such, 
\begin{equation}
    \cmo(t,m,x) = \langle \mu_Y, \phi_T(t) \otimes \phi_M(m) \otimes \phi_X(x) \rangle,
\label{eq:kernel_mean_embedding}
\end{equation}

where $\phi_T, \phi_M$ and $\phi_X$ are the feature maps respectively associated with $k_T, k_M$ and $k_X$. 

Then, define:
\begin{align}
    \mu_{m}(t,x) &= \int \phi_M(m) d \mdensity(m|t,x) \\
    \mu_{m,x}(t) &= \int \lbrace \mu_m(t,x) \otimes \phi_X(x) \rbrace d f(x)
\end{align}

Consider also the cross conditional mean outcome $\ccmo(t',t,x)=\int \cmo(t',m,x) d\mdensity(m|t,x)$ we previously defined in Eq. \ref{eq:cross_conditional_mean_outcome}. We can then formulate it with the kernel mean embedding.

Then assuming regularity conditions on the RKHS, we have:

    \begin{align}
    \ccmo(t',t,x) &= \langle \cmo, \phi_T(t') \otimes \mu_{m}(t,x) \otimes \phi_X(x) \rangle_\cH \\
    \mr_{t,t^\prime} &= \langle \cmo, \phi_T(t') \otimes \mu_{m,x}(t) \rangle_\cH
\end{align}

Moreover, denote the kernel matrices by $K_{TT}, K_{MM}, K_{XX} \in \R^{n,n}$. Let $\odot$ be the element-wise product. Mediated response curves have closed form solutions:

\begin{multline}
    \hat \ccmo(t',t,x) = Y^\top \left(K_{TT} \odot K_{MM} \odot K_{XX} + n \lambda I \right)^{-1} \cdot \\
     \left[ K_{Tt'} \odot \lbrace K_{MM} (K_{TT} \odot K_{XX} + n \lambda_1 I)^{-1} \right] \\\left( (K_{Tt} \odot K_{Xx}) \rbrace \odot K_{Xx} \right)
    \label{eq:omega}
\end{multline}

\begin{equation}
   \hat \mr_{t,t^\prime} = \frac{1}{n} \sum_{i=1}^n \hat \ccmo(t',t,x_i)
\end{equation}

where $(\lambda, \lambda_1)$ are ridge regression penalty parameters. Moreover, the conditional mean outcome written in Eq. \ref{eq:kernel_mean_embedding} has the following closed form:
\begin{multline}
\hat \cmo(t,m,x) = Y^\top \left(K_{TT} \odot K_{MM} \odot K_{XX} + n \lambda I \right)^{-1} \\
\left( K_{Tt} \odot K_{Mm} \odot K_{Xx} \right)
\end{multline}

\section{EXPERIMENT DETAILS}

\label{appendix:experiment_details}

All the code to reproduce our experiment can be found at \url{https://github.com/houssamzenati/double-debiased-machine-learning-mediation-continuous-treatments}.

\subsection{Tuning of hyperparameters}

In this part we discuss the selection strategy for hyperparameters.
\label{appendix:hyperparameters}

\paragraph{Kernel smoothing bandwidth }For the \citet{scott2015multivariate} heuristic of bandwidth selection, the bandwidth $h$ is determined by multiplying the respective standard deviations of $T$ with $C n^{-1/5}$, where $C=1.06$. 

\paragraph{Kernel mean embedding ridge penalty} We experiment grid search and  generalized cross validation (GCV) as in \citep{singh2023sequential}, where for a variable $W \in \mathcal{W}$ we construct the matrices
$$
H_\lambda=I-K_{W W}\left(K_{W W}+n \lambda I\right)^{-1} \in \mathbb{R}^{n \times n}
$$
and set
$$
\lambda^*=\underset{\lambda \in \Lambda}{\operatorname{argmin}} \frac{1}{n}\left\|\left\{\operatorname{tr}\left(H_\lambda\right)\right\}^{-1} \cdot H_\lambda Y\right\|_2^2, \quad \Lambda \subset \mathbb{R}
$$

\paragraph{Kernel mean embedding bandwidth} The exponentiated quadratic kernel is widely used among machine learning practitioners:
$$
k\left(w, w^{\prime}\right)=\exp \left\{-\frac{1}{2} \frac{\left(w-w^{\prime}\right)_{\mathcal{W}}^2}{\iota^2}\right\}
$$

Importantly, it satisfies the required properties; it is continuous, bounded, and characteristic. Its hyperparameter is called the bandwidth $\iota$. A convenient heuristic as in \citep{singh2023sequential} is to set the lengthscale equal to the median interpoint distance of $\left(W_i\right)$, where the interpoint distance between the observations $i$ and $j$ is $\left\|W_i-W_j\right\|_{\mathcal{W}}$. When the input $W$ is multidimensional, we use the kernel obtained as the product of scalar kernels for each input dimension.

\subsection{Synthetic simulations}

In this part we provide all additional details and results for the experiment of \citet{hsu2020} and \citep{sani2024}.

\label{appendix:experiment_hsu}

\subsubsection{Generating process and causal experiment of \citet{hsu2020}}

The simulation is based on the following data generating process. 
$$
\begin{aligned}
X \sim \operatorname{Uniform}(-1.5,1.5), \\
U, V, W \sim \operatorname{Uniform}(-2,2)
\end{aligned}
$$
and then 
$$
\begin{aligned}
T&=0.3 X+ U \\
M & =0.3 T+0.3 X+ V \\
Y & =0.3 T+0.3 M+\alpha T M+0.3 X+\beta T^3+ W \\
\end{aligned}
$$
independently of each other.
Outcome $Y$ is a function of the observed variables $T, M, X$ and an unobserved term $W$. $\alpha$ gauges the interaction effect between $T$ and $M$. $\alpha=0$ satisfies the assumption of no interaction as discussed in \citep{robins2003}. In contrast, for $\alpha \neq 0$, direct and indirect effects are heterogeneous. The coefficient $\beta$ determines whether the direct effect of $T$ on $Y$ is linear $(\beta=0)$ or nonlinear, namely cubic $(\beta \neq 0)$.  
\\

In this simulation, the direct effect is given by $\dce(t, t^{\prime})=0.3\left(t-t^{\prime}\right)+0.3 \alpha t \left(t^\prime-t\right)+\beta\left(t^{\prime 3}-t^{3}\right)$ and the indirect effect by $\ice(t, t^{\prime})=$ $0.09\left(t^\prime-t\right)+0.3\alpha t^\prime\left(t^\prime-t\right)$. We will consider the setting of \citet{hsu2020} where $\alpha=0.25$ and $\beta=0.5$ as \citet{singh2023sequential} did. Since the outcome function is smooth and has regular properties, we use a single fold for training the DML estimator.

For the causal experiment, the definition of the direct and indirect effects \citet{hsu2020} sets $t^{\prime}=0$, for $t$, they consider a sequence of values defined by an equidistant grid between (and including) -1.5 and 1.5 with step size 0.1 (i.e. $t \in\{-1.5,-1.4, \ldots 1.4,1.5\}$. 

\subsubsection{Additional results on the syntethic setting of \citet{hsu2020}}
\label{appx:additional_results}

\paragraph{Comparison of DML to other methods}

Table \ref{tab:merged_samples} reports the averages of the absolute bias (bias), standard deviation (std), and root mean squared error (RMSE) for each effect, where we consider the average and standards deviations of averaging the error over all treatment comparisons $\left(t-t^{\prime}\right)$ in the grid defined above. The DML approach provides significantly better estimation for the direct and total effect and also provides satisfactory performance for the indirect effect. Not surprisingly, the performances increase with larger sample sizes.

\begin{table}[h]
    \centering
\caption{Average absolute bias (bias), standard deviation (std), and root mean squared error (RMSE) for each effect across different sample sizes.}
\begin{tabular}{llccccccccc}
\toprule
$n$ & Method & \multicolumn{3}{c}{direct} & \multicolumn{3}{c}{indirect} & \multicolumn{3}{c}{total} \\
 &  & bias & std & rmse & bias & std & rmse & bias & std & rmse \\
\midrule
500  & OLS  & 0.3065 & 0.0411 & 0.3499 & 0.0133 & 0.0099 & 0.0151 & 0.3067 & 0.0413 & 0.3501 \\
     & IPW  & 0.1785 & 0.0687 & 0.2167 & 0.0698 & 0.0000 & 0.0794 & 0.1610 & 0.0732 & 0.1922 \\
     & KME  & 0.3748 & 0.0372 & 0.4898 & 0.4174 & 0.0588 & 0.5443 & 0.0842 & 0.0336 & 0.1090 \\
     & DML  & 0.0992 & 0.0361 & 0.1240 & 0.0339 & 0.0163 & 0.0427 & 0.0989 & 0.0391 & 0.1235 \\
\midrule
1000 & OLS  & 0.3013 & 0.0332 & 0.3445 & 0.0079 & 0.0075 & 0.0090 & 0.3014 & 0.0340 & 0.3446 \\
     & IPW  & 0.1437 & 0.0443 & 0.1773 & 0.0698 & 0.0000 & 0.0794 & 0.1255 & 0.0450 & 0.1510 \\
     & KME  & 0.3726 & 0.0234 & 0.4860 & 0.4156 & 0.0440 & 0.5399 & 0.0681 & 0.0268 & 0.0903 \\
     & DML  & 0.0809 & 0.0320 & 0.0996 & 0.0233 & 0.0098 & 0.0299 & 0.0790 & 0.0323 & 0.0970 \\
\midrule
5000 & OLS  & 0.3022 & 0.0139 & 0.3461 & 0.0040 & 0.0029 & 0.0045 & 0.3027 & 0.0135 & 0.3466 \\
     & IPW  & 0.0899 & 0.0183 & 0.1116 & 0.0698 & 0.0000 & 0.0794 & 0.0607 & 0.0207 & 0.0737 \\
     & KME  & 0.3759 & 0.0107 & 0.4889 & 0.4134 & 0.0191 & 0.5364 & 0.0444 & 0.0123 & 0.0606 \\
     & DML  & 0.0463 & 0.0135 & 0.0576 & 0.0154 & 0.0054 & 0.0196 & 0.0417 & 0.0148 & 0.0523 \\
\bottomrule
\end{tabular}
\label{tab:merged_samples}
\end{table}

\paragraph{Strategy for bandwidth selection}

In this numerical experiment we compare two bandwidth selection procedures: we investigate the practical performance of the theoretical optimal bandwidth which minimizes the asymptotic mean squared error (AMSE) of Corollary \ref{thm:amse_bandwidth} and the \citet{scott2015multivariate} rule of thumb which satisfies the assumptions of our theorems and corollaries. Figure \ref{fig:bandwidth_strategy} illustrates the average bias of both approaches under the 100 simulation, and shows how the AMSE strategy provides less variance than the heuristic but slightly more bias.  

\begin{figure}[h]
    \centering
    \includegraphics[width=0.5\linewidth]{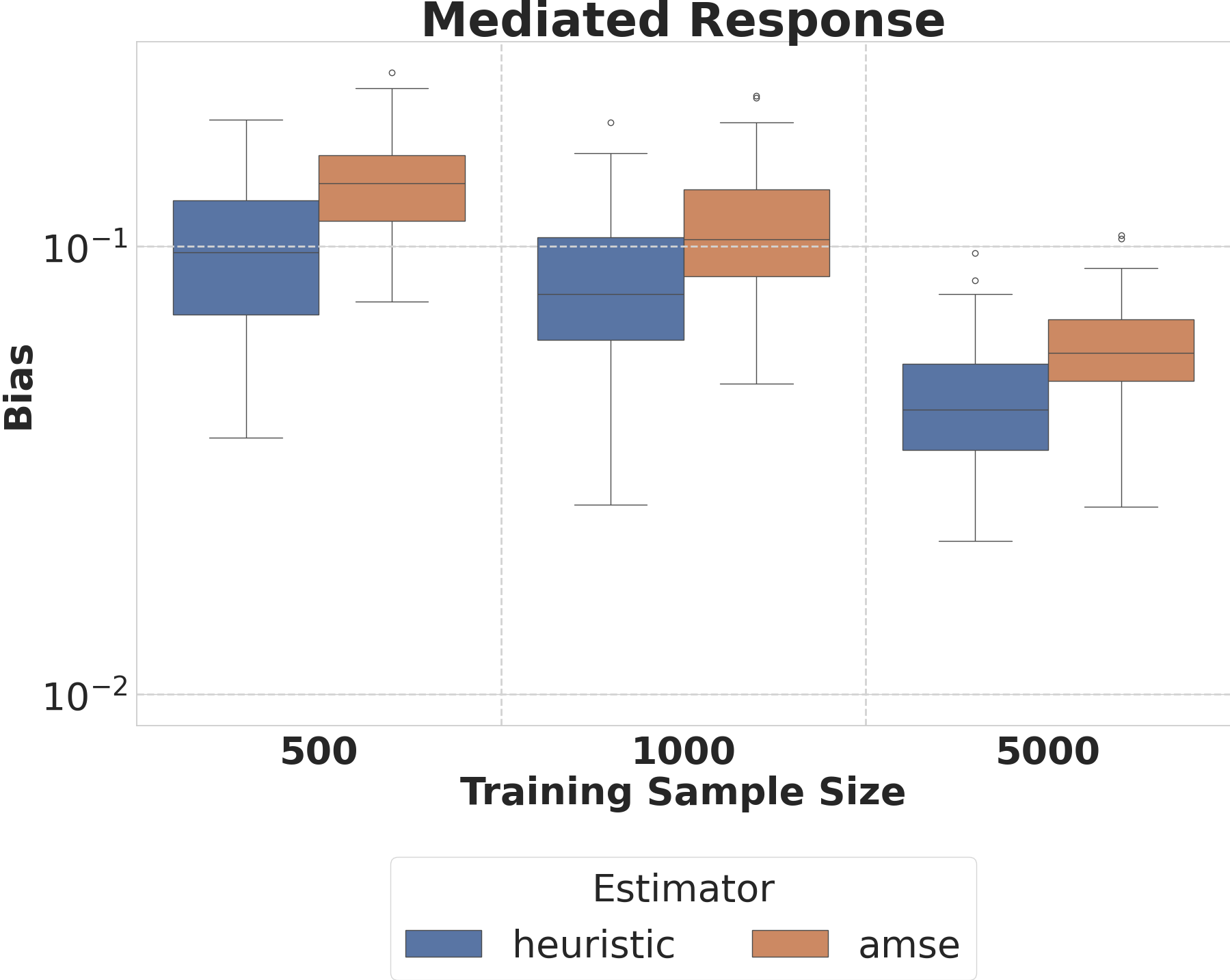}
    \caption{Bias of mediated response estimation with the DML estimator on simulations with different sample sizes and with two bandwidth selection strategies.}
    \label{fig:bandwidth_strategy}
\end{figure}

\paragraph{Validation of the asymptotic confidence interval}

We also perform a numerical experiment for the asymptotic confidence interval to test its empirical coverage. We run 100 simulations for the DML estimator where we report in Table \ref{table:coverage} the number of times the true mediated response is inside the asymptotic confidence interval. We take a $95\%$ level of confidence and therefore, the empirical coverage should ideally be close to that value. We can observe that this coverage has a strong value for a low sample size $n=500$ but gets closer to this target and increases along with larger sample sizes which are closer to asymptotic regimes. 

\begin{table}[h]
    \centering
\caption{Percentage of successful coverage of the asymptotic 95\% confidence interval on the synthetic setting of \citet{hsu2020} with regards to sample size. \label{table:coverage}}
\begin{tabular}{lccc}
\toprule
Sample size & 500 & 1000 & 5000  \\
\midrule
Coverage & 0.915 & 0.894 & 0.898 \\
\bottomrule
\end{tabular}
\end{table}

\paragraph{Benefits of cross-fitting} Regarding the finite sample performance of cross-fitting, we did not observe significant improvements in our numerical simulation. As expected, cross-fitting is less effective for simple problems involving Donsker class functions \citep{williamson2023general}. Practical studies, such as \citep{fan2021} and \citep{williamson2023general}, have also noted the limited benefits of cross-fitting in such cases, while its advantages become more evident in more complex nonparametric problems \citep{kennedy2022semiparametric}. We consider simple class of estimators in our simulation following \citep{hsu2020} and \citep{singh2023sequential} with a simple outcome function, which may explain the limited impact of cross-fitting. 

\paragraph{Parametric versus nonparametric nuisance estimation} We design an experiment similar in spirit to the misspecification experiments of \citep{kennedy2017} and \citep{doss2023nonparametricdoublyrobusttest}, where we aim at simulating the evaluating variants of the treatment densities and the conditional mean outcomes ($\mu$ and $\ccmo$). We either use simple parametric models (Gaussian parametrization of the treatments, linear model for the outcome), or leverage non-parametric methods (conditional kernel estimation and Gaussian kernel mean embedding).
We design four scenarios in which we compare our DML (with KME) against IPW and the G-computation (with KME)  in our numerical simulation \citep{hsu2020, singh2023sequential},  (i) treatment densities and conditional mean outcomes nonparametric ii) parametric well specified treatment densities and nonparametric conditional mean outcomes iii) non parametric treatment densities and parametric conditional mean outcome iv) parametric treatment densities and conditional mean outcomes.

\begin{table}[h]
    \centering
\caption{Performance summary for all nuisance parameter estimation scenarios}
\begin{tabular}{llcccccccc}
\toprule
Estimator & $n_\text{samples}$ & \multicolumn{2}{c}{Scenario i)} & \multicolumn{2}{c}{Scenario ii)} & \multicolumn{2}{c}{Scenario iii)} & \multicolumn{2}{c}{Scenario iv)} \\
 &  & Bias & RMSE & Bias & RMSE & Bias & RMSE & Bias & RMSE\\ 
\midrule
IPW & 500 & 0.1222 & 0.1472 & 0.1366 & 0.1655 & 0.1173 & 0.1424 & 0.1344 & 0.1631 \\
DML & 500 & 0.0824 & 0.0968 & 0.0822 & 0.0966 & 0.0800 & 0.0944 & 0.0786 & 0.0928 \\
KME & 500 & 0.1783 & 0.2095 & 0.1802 & 0.2123 & 0.1671 & 0.1933 & 0.1665 & 0.1930 \\ \hline
IPW & 1000 & 0.0945 & 0.1144 & 0.1194 & 0.1426 & 0.0946 & 0.1146 & 0.1180 & 0.1422 \\
DML & 1000 & 0.0624 & 0.0742 & 0.0620 & 0.0738 & 0.0631 & 0.0747 & 0.0633 & 0.0749 \\
KME & 1000 & 0.1731 & 0.2015 & 0.1739 & 0.2021 & 0.1637 & 0.1885 & 0.1629 & 0.1885 \\ \hline
IPW & 5000 & 0.0515 & 0.0638 & 0.0818 & 0.0982 & 0.0506 & 0.0627 & 0.0820 & 0.0983 \\
DML & 5000 & 0.0325 & 0.0390 & 0.0330 & 0.0393 & 0.0323 & 0.0391 & 0.0310 & 0.0369 \\
KME & 5000 & 0.1694 & 0.1932 & 0.1697 & 0.1936 & 0.1631 & 0.1862 & 0.1634 & 0.1865 \\
\bottomrule
\end{tabular}
\label{tab:nonparametric_table}
\end{table}

In Table \ref{tab:nonparametric_table}, we observe that for all the scenarios with all sample sizes DML consistently outperforms the two baselines, demonstrating the best performance even under all adverse conditions.

\subsection{Comparison to \citep{sani2024} in their simulation setting}
\label{appx:sani_setting}

Here we compare our DML formulation to the multiply robust estimator of \citet{sani2024} in the numerical experiment they did. We implemented their estimator and reproduced their numerical simulation, which is done with binary mediators. In Table \ref{table:sani}, we report the same metric they use, the average absolute bias to compare our DML to their variant, and also IPS, OLS, KME. Once again we see that our formulation outperforms theirs even in their setting; this illustrates the benefits of using an implicit integration of the conditional mean outcome $\cmo$ to estimate the cross conditional mean outcome $\ccmo$ and to use treatment propensities instead of the mediator density.

\begin{table}[h]
    \centering
    \caption{Average Absolute Bias for different estimators in the setting of \citep{sani2024}}
    \begin{tabular}{l c c}
        \hline
        Estimator & $n_{\text{samples}}$ & Average Absolute Bias \\
        \hline
        OLS & 2000 & 13.65 \\
        IPW & 2000 & 0.5298 \\
        KME & 2000 & 1.861 \\
        DML \citep{sani2024} & 2000 & 0.4099 \\
        DML (ours) & 2000 & 0.2507 \\
        \hline
        OLS & 5000 & 13.61 \\
        IPW & 5000 & 0.2883 \\
        KME & 5000 & 1.832 \\
        DML \citep{sani2024} & 5000 & 0.2997 \\
        DML (ours) & 5000 & 0.1445 \\
        \hline
        OLS & 8000 & 13.62 \\
        IPW & 8000 & 0.2481 \\
        KME & 8000 & 1.828 \\
        DML \citep{sani2024} & 8000 & 0.3111 \\
        DML (ours) & 8000 & 0.1280 \\
        \hline
    \end{tabular}
\label{table:sani}
\end{table}

\subsection{Application to cognitive function}

In this part we provide details on our application of mediation analysis on the UKBB project \citep{ukbb}.
\label{appendix:experiment_ukbb} We analyse the effect of a continuous measure, the glycated hemoglobin (HbA1c) on cognitive functions, and its possible mediation by the brain stucture.

\subsubsection{Causal estimand identification and data preparation}

The treatments in our study are well-defined, with a consistent definition among subjects and no interaction between subjects, as they are independent participants in the UKBB study, so the Stable Unit Treatment Value Assumption (SUTVA) is satisfied. Assumptions \ref{assum:indep_treatment} and \ref{assum:indep_mediator} focus on confounding variables between treatment, mediators, and outcomes, ensuring identifiability of the total, direct, and indirect causal effects. We accounted for a broad range of confounders, including behavioral, physiological, and societal factors, as well as technical artifacts such as the evaluation center, head positioning in imaging, and known brain diseases \citep{alfaro2021confound, newby2022understanding, schurz2021variability, topiwala2022alcohol}. The positivity assumption (Assumption \ref{assum:overlap}) was assessed by defining a grid of evaluation over values on which there was coverage in Figure \ref{fig:histogram_ukbb}.

\begin{figure}[h]
    \centering
    \includegraphics[width=0.7\linewidth]{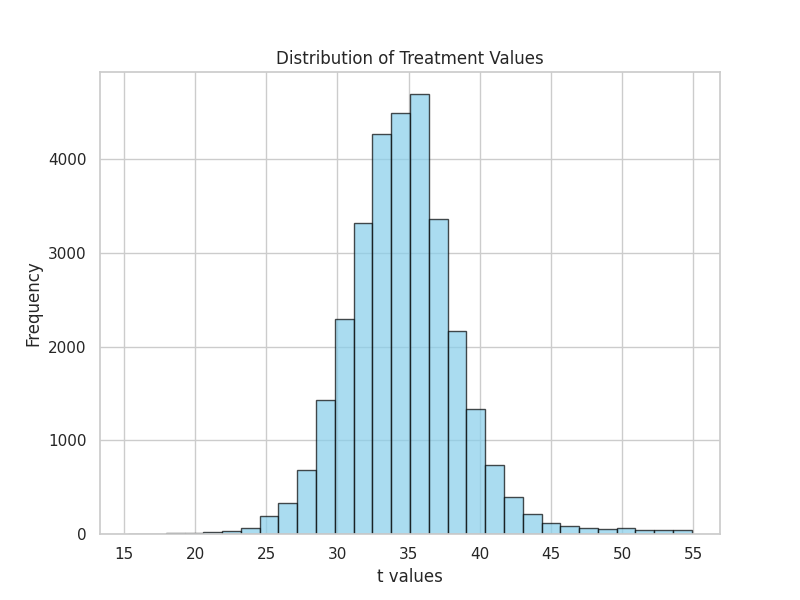}
    \caption{Empirical histogram of glycated hemoglobin levels (treatments) in the UKBB data.}
    \label{fig:histogram_ukbb}
\end{figure}

A key aspect of establishing identifiability is preparing the variables for estimation models. For the outcome, we aggregated cognitive test results (e.g., working memory, fluid intelligence, vocabulary) into a single "G-factor" measuring general cognitive ability, obtained via Principal Component Analysis (PCA) as in~\citep{fawns2020reliability}. This score was based on four cognitive tests administered at the imaging visit, with the G-factor correlating well with individual test results. For imaging data, we reduced the dimensionality of preprocessed imaging-derived phenotypes (IDPs) as in \citep{miller2016multimodal, alfaro2018image} using PCA after standard scaling, retaining six components from categories like dMRI skeleton, subcortical volumes, and regional gray matter volumes.

We also consolidated lifestyle variables to reduce dimensionality and address missing data due to the design of questionnaires. After transforming categorical variables via one-hot encoding and retaining those with more than 10,000 non-zero values (over the full dataset with 500,000 individuals), we further imputed missing data and reduced dimensionality using the R packages missMDA
485 and FactoMineR~\cite{josse2016missmda,le2008factominer}. The analysis was restricted to participants with brain MRI data, and we used two components for imputation.

After selecting and preparing the variables for causal mediation analysis, we limited the dataset to participants with brain imaging data and complete data for all relevant variables, resulting in a final sample of $n=30,595$ independent participants.

\subsubsection{Experiment details}

To estimate the total, direct and indirect causal effects for each treatment, mediated by brain structure, measured by brain MRI, we applied the estimators able to handle multidimensional mediators: the OLS estimator, the KME G-computation, the IPW and our double machine learning estimator as explained in Section \ref{sec:numerical}. We used cross-fitting with 2 folds. To assess uncertainty in the estimation, we applied the estimators to 100 bootstrapped samples for each exposure and compare the bootstrap procedure with the asymptotic confidence interval we derived for the mediated response. 

\subsubsection{Experiment results}

Previous studies \citep{newby2022understanding} show that suffering from diabetes has deleterious consequences and results in a lower G-factor score. Diabetes tends to have a negative total causal effect on cognitive functions, so we have explored the role of glycemic control, through the measure of glycated hemoglobin (HbA1c) at the initial visit. HbA1c is a proxy of the three-month average blood sugar level, so high levels of HbA1c can be found in individuals with diabetes with difficulties balancing their diet, physical activity, and medication to control their glucose level or in individuals with an undiagnosed issue. Exploring the role of HbA1c rather than the diabetic status allows us to directly assess the role of blood glucose in the health damages associated with this condition.

As illustrated in Figure~\ref{fig:effect_estimations_ukbb}~(left panel), the total effect is very low, indicating a lack of evidence of an effect of HbA1c at the initial visit on the cognitive functions a few years later (imaging visit). This lack of effect has been confimed with causal inference methods without mediation (Double Machine learning, as implemented in the Python package econML~\cite{econml}). 

Regarding the indirect effect estimates for diabetes and alcohol \citep{newby2022understanding, topiwala2022alcohol}, they are typically of weaker amplitude than the total effect. We present the total and indirect effect results for the estimators for exposure to glycated hemoglobin in Figure \ref{fig:effect_estimations_ukbb}. With DML, we found a slightly negative but not statistically significant effect of exposure to glycated hemoglobin on cognitive function.

\begin{figure}[h]
            \centering
        \includegraphics[width=0.49\linewidth]{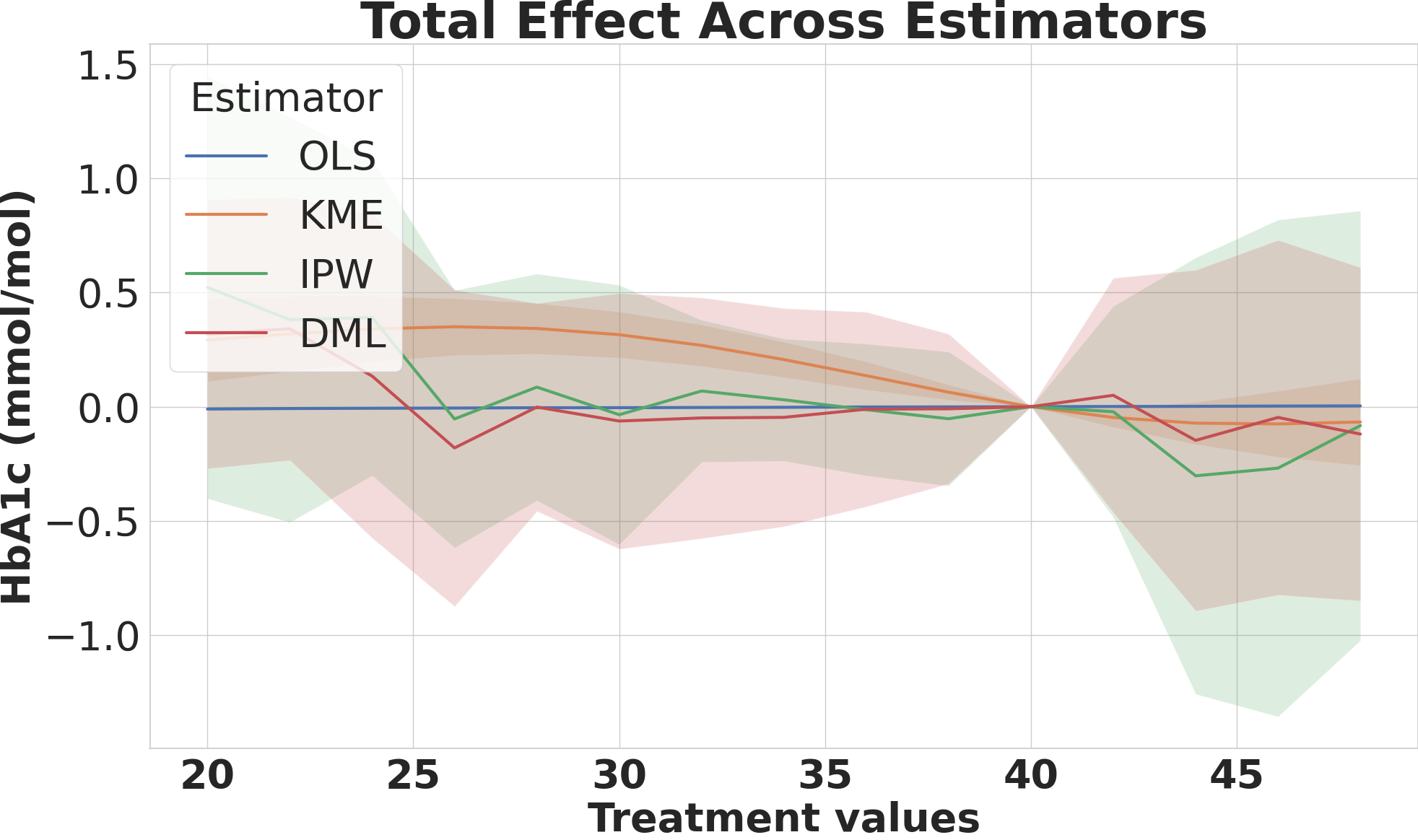}
    \hfill
\includegraphics[width=0.49\linewidth]{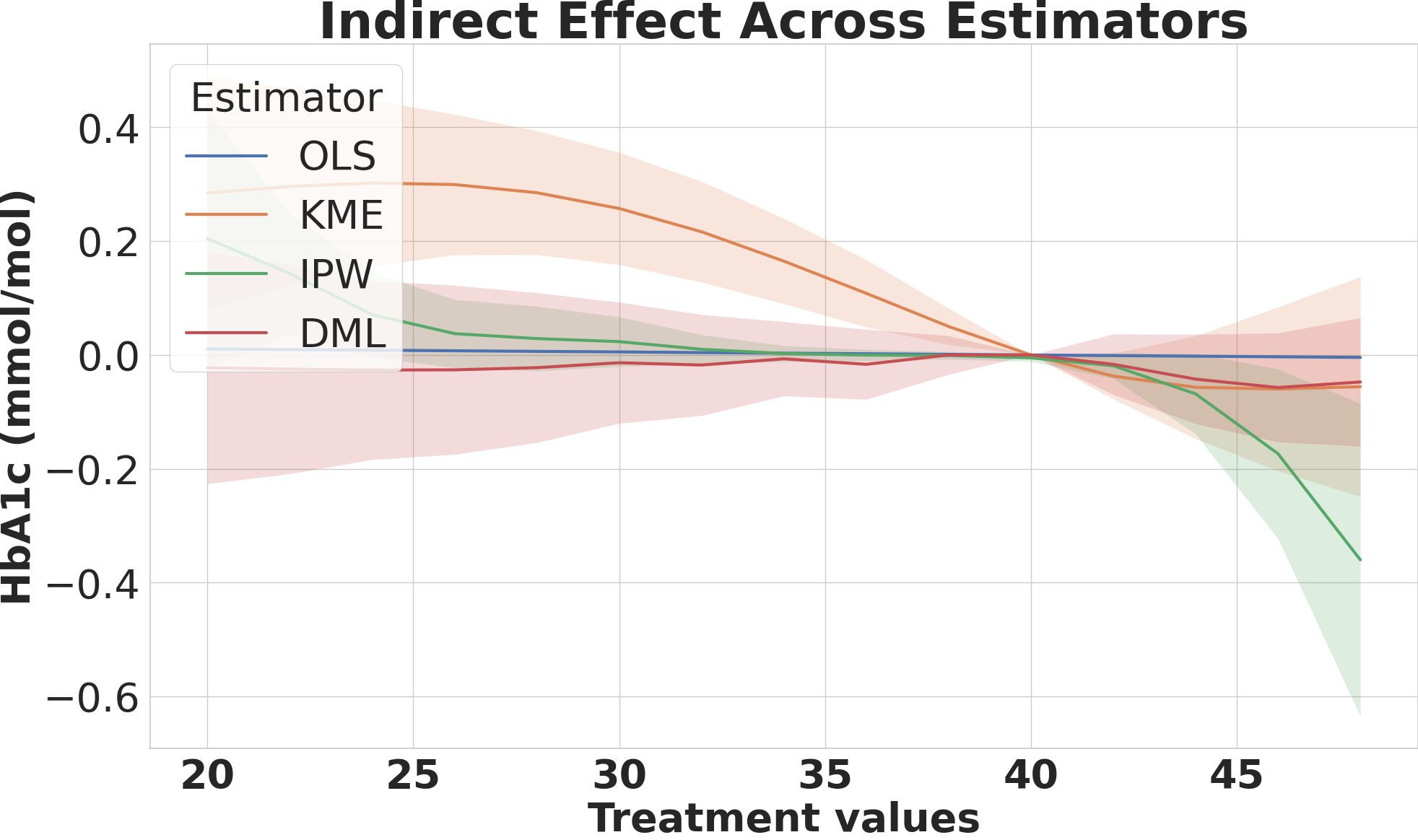}
    \caption{Effect estimation on the UKBB dataset for the total effect (left) and the indirect effect (right)}
    \label{fig:effect_estimations_ukbb}
\end{figure}

Overall, the results depended weakly on the choice of the estimation method. However, we notice an important difference in the variance between the different estimators, with a very small variance for the OLS and KME, and larger ones for the DML and IPW estimators. Our results also illustrate how IPW suffers more from overlap issues at the boundaries of the treatment space. Eventually, as a validation, we provide in Figure \ref{fig:bootstrap_ukbb} a comparison of our asymptotic confidence interval and the bootstrap procedure, showing a more regular estimation (and less computationally expensive).

\begin{figure}[h]
    \centering
    \includegraphics[width=0.8\linewidth]{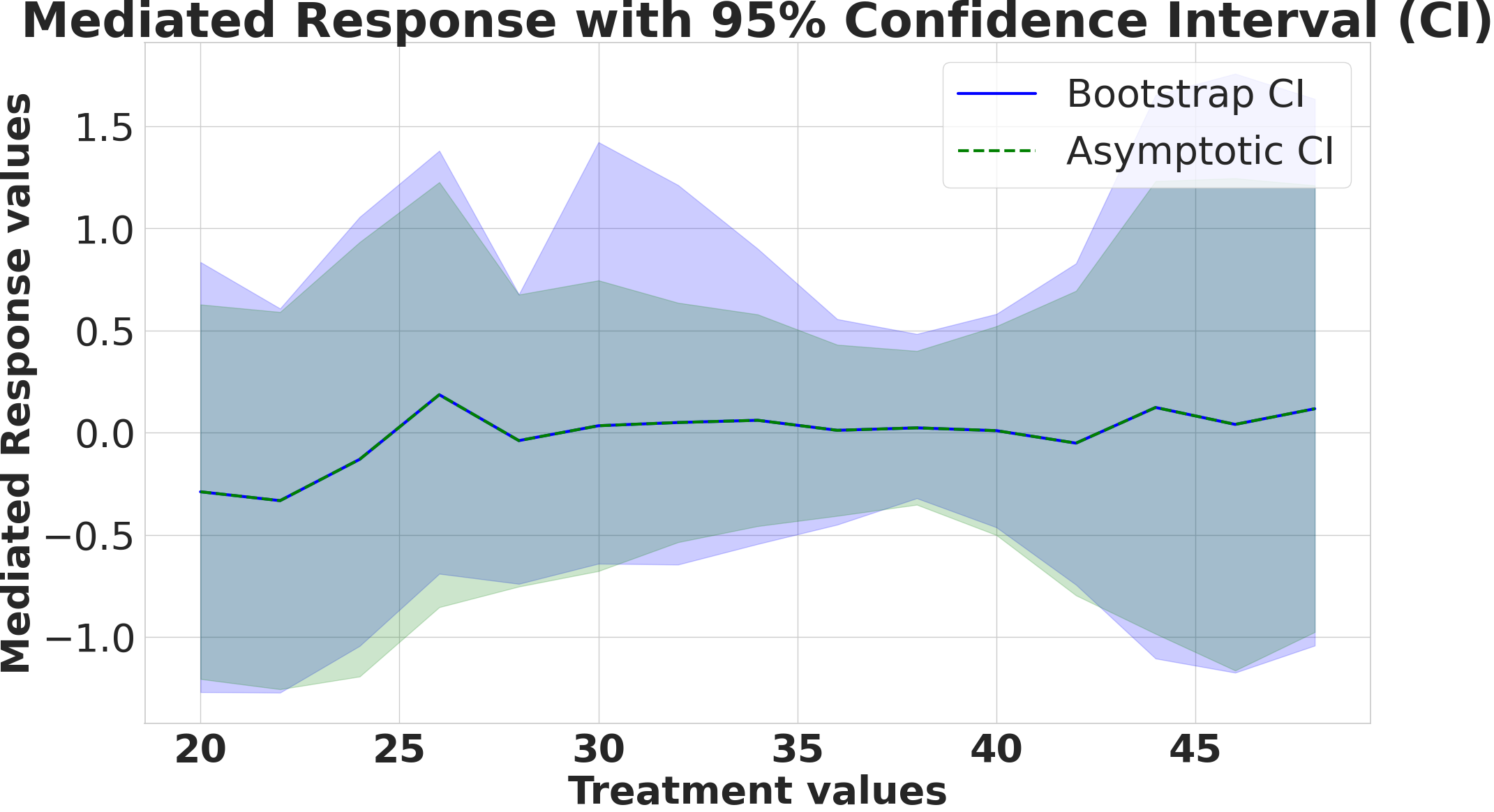}
    \caption{Comparison of the asymptotic confidence interval and the bootstrap procedure for uncertainty of the mediated response on the UKBB dataset.}
    \label{fig:bootstrap_ukbb}
\end{figure}

\subsection{Computation infrastructure}

\label{appendix:computation infrastructure}
We ran our experiments on a CPU clusters with the following characteristics.

\begin{itemize}[label=\textbullet]
    \item \textbf{Dell C6320:}
    \begin{itemize}
        \item 2 Intel(R) Xeon(R) CPU E5-2660 v2 @ 2.20GHz
        \item 256G of RAM
        \item Ethernet 1G
        \item Infiniband Mellanox QDR
    \end{itemize}
    
    \item \textbf{Dell R7525:}
    \begin{itemize}
        \item 2 AMD EPYC 7742 64-Core Processor
        \item 512G of RAM
        \item Ethernet 10G
    \end{itemize}
    
    \item \textbf{Dell R7525:}
    \begin{itemize}
        \item 2 AMD EPYC 7702 64-Core Processor
        \item 512G of RAM
        \item Ethernet 10G
    \end{itemize}
    
    \item \textbf{Dell R7525:}
    \begin{itemize}
        \item 2 AMD EPYC 7742 64-Core Processor
        \item 1024G of RAM
        \item Ethernet 10G
    \end{itemize}
    
    \item \textbf{Dell R640:}
    \begin{itemize}
        \item 2 Intel(R) Xeon(R) Gold 6226R CPU @ 2.90GHz
        \item 256G of RAM
        \item Ethernet 1G (pending upgrade to 10G)
    \end{itemize}
    
    \item \textbf{HPE Proliant DL385:}
    \begin{itemize}
        \item 2 AMD EPYC 7763 64-Core Processor
        \item 1024G of RAM
        \item Ethernet 10G
    \end{itemize}
\end{itemize}

\end{document}